%% file: main.tex
\documentclass{article}

\PassOptionsToPackage{round}{natbib}

\usepackage[preprint]{neurips_2026}

\usepackage{microtype}
\usepackage{graphicx}
\usepackage{subcaption}
\usepackage{booktabs} % for professional tables
\usepackage{longtable} % for notation table in appendix

\usepackage{xcolor}
\definecolor{citationdarkblue}{RGB}{0,70,140}
\usepackage[colorlinks=true,citecolor=citationdarkblue,linkcolor=blue,urlcolor=blue]{hyperref}

\usepackage{algorithm}

\usepackage{algorithmic}
\usepackage{amsfonts}
\usepackage{amsthm}
\usepackage{thmtools, thm-restate}
\newtheorem{assumption}{Assumption}

\newcommand{\vy}{\mathbf{y}}
\newcommand{\vhaty}{\widehat{\vy}}

\usepackage{amsmath}

\usepackage{appendix}
\usepackage{titletoc}

\usepackage{amssymb}
\let\emptyset\varnothing
\usepackage{tikz}
\usetikzlibrary{arrows.meta,positioning,calc,fit,shapes.misc,shapes.geometric, calc, bayesnet}
\usepackage[capitalize,noabbrev]{cleveref}
\usepackage{enumitem}
\makeatletter
\@ifundefined{@vspace@calcify}{\def\@vspace@calcify#1{#1}}{}
\renewcommand{\section}{%
  \@startsection{section}{1}{\z@}%
                {-1.2ex \@plus -0.3ex \@minus -0.1ex}%
                { 0.8ex \@plus  0.2ex \@minus  0.1ex}%
                {\large\bf\raggedright}%
}
\renewcommand{\subsection}{%
  \@startsection{subsection}{2}{\z@}%
                {-1.0ex \@plus -0.25ex \@minus -0.1ex}%
                { 0.45ex \@plus  0.15ex}%
                {\normalsize\bf\raggedright}%
}
\renewcommand{\subsubsection}{%
  \@startsection{subsubsection}{3}{\z@}%
                {-0.8ex \@plus -0.2ex \@minus -0.1ex}%
                { 0.3ex \@plus  0.1ex}%
                {\normalsize\bf\raggedright}%
}
\renewcommand{\paragraph}{%
  \@startsection{paragraph}{4}{\z@}%
                {0.35ex \@plus 0.08ex \@minus 0.04ex}%
                {-0.55em}%
                {\normalsize\bf}%
}
\makeatother
\newcommand{\BlackBox}{\rule{1.5ex}{1.5ex}}  % end of proof
\ifdefined\proof
    \renewenvironment{proof}{\par\noindent{\bf Proof\ }}{\hfill\BlackBox\par\medskip}
\else
    \newenvironment{proof}{\par\noindent{\bf Proof\ }}{\hfill\BlackBox\par\medskip}
\fi
 
\newtheorem{theorem}{Theorem}
\newtheorem{lemma}[theorem]{Lemma} 
\newtheorem{proposition}[theorem]{Proposition} 
\newtheorem{remark}[theorem]{Remark}
\newtheorem{corollary}[theorem]{Corollary}
\newtheorem{definition}[theorem]{Definition}

\usepackage{mathtools}

\usepackage[textsize=tiny]{todonotes}

\title{Learning-to-Defer in Non-Stationary Time Series via Switching State-Space Models}

\author{%
  Yannis Montreuil\thanks{Corresponding author: \texttt{yannis.montreuil@u.nus.edu}.} \\
  School of Computing\\
  National University of Singapore\\
  \And
  Letian Yu \\
  School of Computing\\
  National University of Singapore\\
  \And
  Axel Carlier \\
  Universit\'e de Toulouse \\
  Fédération ENAC ISAE-SUPAERO ONERA\\
  \And
  Lai Xing Ng \\
  Institute for Infocomm Research\\
  A*STAR, Singapore\\
  \And
  Wei Tsang Ooi \\
  School of Computing\\
  National University of Singapore\\
}

\begin{document}

\maketitle

\begin{abstract}
\emph{Learning-to-defer} (L2D) routes each decision to a system's own predictor or to
an external expert. Streaming time-series settings break the offline-L2D assumptions:
the data are non-stationary, expert availability shifts over time, and the internal
predictor is trained online.  We propose
\textbf{L2D-SLDS}, a one-stage online L2D framework based on a factorized switching
linear-Gaussian state-space model over all potential residuals: a discrete regime, a
shared global factor, and per-expert idiosyncratic states. The always-observed internal
residual continuously updates beliefs about every unqueried expert through the shared
factor, and a learner-aware query score balances immediate cost against latent-state
information gain and one-step learner improvement. We prove an oracle
inequality against a time-varying learn-and-defer comparator, decomposing regret into
a query-bonus budget, an SLDS predictive-cost-error term~\(\mathcal{E}_{\mathrm{SLDS}}\), and
the internal learner's interval dynamic regret.
On synthetic, Melbourne, Jena, and 24-expert Delhi benchmarks, L2D-SLDS is competitive
with or improves on contextual- and non-stationary-bandit baselines while deferring on
\({<}2\%\) of real-data rounds.
\end{abstract}

\input{Section/Introduction}
\input{Section/Relatedworks}
\input{Section/Background}
\input{Section/Approach}
\input{Section/Experiments}

\input{Section/Conclusion_and_Impact}

\bibliography{biblio}
\bibliographystyle{plainnat}

\input{Section/Appendix}

\input{checklist}

\end{document}

%% file: Section/Introduction.tex
\section{Introduction}

Learning-to-defer (L2D) addresses a fundamental question: when should a system trust its own prediction, and when should it hand the decision to a human expert?
The answer depends on expert quality, consultation cost, and the risk of wrong automation \citep{madras2018predict, mozannar2021consistent, Narasimhan, mao2023twostage, montreuil2026why}.
The prevailing L2D literature addresses this question \emph{offline}: it assumes full supervision — access to all experts' predictions or counterfactual losses on the same input — and a fixed predictor that never changes.
In practice, neither assumption holds \citep{montreuil2026online}.

Consider a streaming forecasting or medical triage pipeline.
At each round \(t\), the system generates a prediction with its \emph{current} internal model, observes a context \(\mathbf{x}_t\) and the set of currently available external experts \(\mathcal{E}_t\), and must decide immediately: predict internally, or consult one of the external experts, each with a known query cost \(\beta_k\geq 0\)?
After acting, the true outcome \(\vy_t\) is revealed — but only the chosen expert's prediction is ever observed.
This \emph{asymmetric partial-feedback} structure departs sharply from offline L2D: the internal residual is always visible once \(\vy_t\) is known, while external expert predictions are censored unless queried.
Simultaneously, the process is typically non-stationary \citep{hamilton2020time, sezer2020financial} — expert quality can shift across latent regimes — and the expert pool is dynamic: experts enter, leave, or return over time.

Casting this as a multi-armed bandit \citep{zhou2020neuralcontextualbanditsucbbased}
is not enough: the internal action is not a fixed arm, but an online-trained predictor
whose loss distribution shifts with the deployed routing trajectory; consultation
costs \(\beta_k\geq 0\) and cross-expert correlations further depart from the
standard bandit abstraction (Appendix
Figure~\ref{fig:bandit_vs_l2d_feedback}). The natural comparator is therefore a
\emph{time-varying learn-and-defer oracle} that may substitute any predictable
internal predictor \(f_{u_t}\) while sharing the same potential external costs as
the deployed system \citep{zinkevich2003online}.

We develop \textbf{L2D-SLDS}, a one-stage probabilistic routing framework that
models all residuals — internal and external — jointly with a factorized switching
linear-Gaussian state-space model \citep{ghahramani2000variational, linderman2016recurrent, hu2024modeling}:
a shared global factor \(\mathbf{g}_t\) for cross-expert information transfer,
per-expert idiosyncratic states, and a discrete regime process. The always-observed
internal residual continuously updates \(\mathbf{g}_t\) and thus propagates to
unqueried experts; queried experts enter the internal learner through a
reliability-weighted teacher update; dynamic availability is handled by a registry
with staleness-based pruning. On top of these beliefs, an information-directed-style
query score \citep{russo2014learning} balances immediate excess cost against
latent-state information gain and expected one-step learner improvement.

\textbf{Contributions.}
\begin{itemize}[topsep=2pt,itemsep=1pt,parsep=0pt,leftmargin=1.2em]
    \item We formalize \emph{one-stage online L2D} for non-stationary time series: asymmetric partial feedback, action-coupled internal learner, dynamic expert availability, and consultation costs (Section~\ref{sec:problem_formulation}).
    \item We propose \textbf{L2D-SLDS}, a factorized switching linear-Gaussian model over all residuals (regime, shared factor, idiosyncratic states), with an IMM-style \citep{johnston2002improvement} two-phase filter and registry pruning that leaves retained marginals unchanged (Sections~\ref{sec:generative_model}--\ref{sec:registry}; Proposition~\ref{prop:invariance}).
    \item We introduce a learner-aware query score combining latent-state information gain (closed-form shared-factor term plus a Monte Carlo mode-identification term) and expected one-step learner improvement (Section~\ref{sec:exploration}; Appendix~\ref{app:info_gain}).
    \item We prove an oracle regret decomposition against a learn-and-defer oracle, splitting regret into a query-bonus budget, an SLDS predictive-cost-error term \(\mathcal{E}_{\mathrm{SLDS}}\), and interval dynamic regret of the internal learner. With decayed bonuses and a certified mixability learner, this yields a sublinear realized rate \emph{conditional on} \(\mathcal{E}_{\mathrm{SLDS}}=\widetilde O(\sqrt T)\) (Section~\ref{sec:regret_main}; Appendix~\ref{app:regret}).
    \item On synthetic, Melbourne \citep{daily_min_temp_melbourne_kaggle}, Jena, and Daily Delhi benchmarks, L2D-SLDS is competitive with or improves on every contextual- and non-stationary-bandit baseline (Section~\ref{sec:experiments}).
\end{itemize}

%\paragraph{Contributions.}
%Our main contributions are:
%\begin{itemize}
%    \item A sequential expert-routing formulation for non-stationary time series with partial feedback and a time-varying expert pool (Section~\ref{sec:problem_formulation}).
%    \item A factorized switching state-space model for expert residuals with context-dependent regime switching and a shared latent factor enabling cross-expert information transfer (Section~\ref{sec:generative_model}).
%    \item A scalable IMM-style filtering recursion with dynamic registry management that updates only the queried expert jointly with the shared factor, while keeping per-expert marginals (Sections~\ref{sec:generative_model} and~\ref{sec:registry}).
%    \item An information-directed routing rule based on mutual information about $(z_t,\mathbf{g}_t)$, together with a predictive scheduling layer that outputs on-call active sets under availability constraints (Sections~\ref{sec:exploration} and~\ref{sec:scheduling}).
%\end{itemize}

%% file: Section/Relatedworks.tex
\section{Related Work}

\textbf{Learning-to-defer (offline).}
L2D extends selective prediction \citep{Chow_1970, Bartlett_Wegkamp_2008, Geifman_El-Yaniv_2017, cortes, cao2022generalizing}
by routing uncertain inputs to external experts \citep{madras2018predict, mozannar2021consistent, Verma2022LearningTD}.
Prior work splits roughly into \emph{two-stage} methods, which train the base predictor first and then
learn a defer/router policy with the predictor frozen, and \emph{one-stage} (or \emph{joint}) methods,
which optimize the predictor and router together.
A rich body of work develops consistent surrogate losses and statistical guarantees
\citep{mozannar2021consistent, Cao_Mozannar_Feng_Wei_An_2023, mozannar2023should, mao2024realizablehconsistentbayesconsistentloss, mao2024principledapproacheslearningdefer, charusaie2022sample, wei2024exploiting, mao2025mastering, montreuil2026beyond},
with extensions to regression, multi-expert, top-\(k\), expert-conditioned, adversarial, and multi-task settings
\citep{mao2024regressionmultiexpertdeferral, montreuil2024twostagelearningtodefermultitasklearning, montreuil2026optimal, montreuil2026why, montreuil2026learningtodeferexpertconditionedadvice, montreuil2025adversarial, montreuil2026adversarial, strong2024towards, palomba2025a}.
Missing expert predictions under offline/batch learning are studied in \citet{nguyen2025probabilistic};
\citet{mao2025thesis} provides a unified theoretical treatment of multi-class abstention and multi-expert
deferral with \(H\)-consistency guarantees, \citet{desalvo2025budgeted} study multi-expert deferral
under a query-budget constraint, and \citet{cortes2026optimized} address expert-imbalance in
two-stage deferral with margin-based losses.
The L2D consistency theory builds on the broader \(H\)-consistency programme
\citep{Awasthi_Mao_Mohri_Zhong_2022_multi, mao2024h, mao2023crossentropylossfunctionstheoretical, mao2024hconsistencyregression, mao2024universalgrowth, mao2024multilabel, mao2025enhanced, mohri2026beyond, zhong2025thesis},
whose guarantees have been instantiated for abstention and rejection \citep{theoretically, Mao_Mohri_Zhong_2023, mohri2024learningreject}, ranking \citep{mao2023pairwisemisranking, mao2023rankingabstention}, structured prediction \citep{mao2023structuredprediction}, cardinality-aware top-\(k\) classification \citep{cortes2024cardinalityaware}, adversarial robustness \citep{awasthi2021calibrationconsistencyadversarialsurrogate, Grounded}, class-imbalanced and generalized-metric learning \citep{cortes2025balancingscales, cortes2025improvedbalanced, mao2025principledbinary, mohri2026linear, mohri2026mind}, and recent algorithmic advances in generalized-metric optimization and robust generative modeling \citep{mohri2026generalized, mohri2026principled, cortes2026theoretical}.
Most of these works are offline and assume full supervision; many are effectively two-stage because the
predictor is fixed while the defer policy is learned. Our setting instead is sequential and action-coupled:
routing decisions affect the learner's future state through the observed teacher signals.

\textbf{Sequential and online L2D.}
\citet{joshi2021learning} formulate deferral in a non-stationary MDP, learning a pre-emptive policy by comparing
the long-run value of deferring now versus later.
\citet{montreuil2026online} propose an online L2D approach, but restricted to stationary classification; it does not
accommodate the full combination of asymmetric partial feedback, active internal-learner updates, latent
non-stationarity, consultation costs, and a dynamic expert registry.
To the best of our knowledge, ours is the first work to jointly train an internal learner and route to external experts online, under asymmetric feedback, latent regime switches, consultation costs, and a dynamic expert registry.

\textbf{Bandits and partial feedback.}
Contextual bandits \citep{li2010contextual, neu2010online} and non-stationary extensions — Discounted LinUCB
\citep{russac2019weighted}, CUSUM-LinUCB \citep{liu2018change}, GLR-LinUCB \citep{besson2022efficient} — are the
closest algorithmic comparators, but the setting is structurally different: the L2D
internal action is itself an online-trained predictor, so its cost evolves with the
deployed routing trajectory rather than being a fixed reward. We therefore compare to a
predictable learn-and-defer oracle and obtain an oracle regret decomposition rather than a
fixed-arm bandit bound.

%% file: Section/Background.tex
\section{Background}
\label{sec:background}

\subsection{Offline Learning-to-Defer}
\label{sec:background_l2d}

We recall the standard offline L2D formulation
\citep{madras2018predict, mozannar2021consistent, Narasimhan, mao2024regressionmultiexpertdeferral}.
Given i.i.d.\ samples \((\mathbf{x},\vy)\sim\mathcal{D}\) with
\(\mathbf{x}\in\mathcal{X}\subseteq\mathbb{R}^{d}\) and
\(\vy\in\mathcal{Y}\subseteq\mathbb{R}^{d_y}\),
a system has access to a predictor \(f_\theta:\mathcal{X}\to\mathcal{Y}\) and
\(K\) external experts, each returning a prediction \(\vhaty_k(\mathbf{x})\) at consultation fee
\(\beta_k\geq 0\).
A router \(\pi:\mathcal{X}\to\Delta^{K+1}\) (the probability simplex over \(K+1\) actions) selects
among actions \(\mathcal{A}=\{0,1,\dots,K\}\): action~\(0\) uses the internal prediction
\(f_\theta(\mathbf{x})\) at zero cost (\(\beta_0=0\)); action \(k\ge 1\) defers to
expert \(k\).
Writing \(e_k(\mathbf{x},\vy)\coloneqq\vhaty_k(\mathbf{x})-\vy\) for the signed residual
(with \(\vhaty_0\equiv f_\theta\)), the routing cost of action \(k\) is
\begin{equation}
\label{eq:l2d_cost}
C_k(\mathbf{x},\vy)\coloneqq\psi(e_k(\mathbf{x},\vy))+\beta_k,
\end{equation}
where \(\psi:\mathbb{R}^{d_y}\to\mathbb{R}_{\ge 0}\) is a convex loss (e.g., \(\psi(e)=\|e\|_2^2\)).
The population objective reads
\begin{equation}
\label{eq:l2d_objective}
\mathcal{R}(\pi,\theta)
=
\mathbb{E}_{(\mathbf{x},\vy)}\!\Bigl[
  \sum_{k=0}^{K}\pi(k\mid\mathbf{x})\,C_k(\mathbf{x},\vy)
\Bigr].
\end{equation}
In the \emph{two-stage} variant, the predictor is trained first and then frozen, so only the
defer/router policy \(\pi\) is optimized; in \emph{one-stage} (or \emph{joint}) L2D
\citep{mozannar2023should, mao2023twostage}, the predictor parameters \(\theta\) and router
\(\pi\) are optimized jointly.
Both variants assume full supervision — every expert's cost is observed per sample — and a
stationary distribution \(\mathcal{D}\).
In Section~\ref{sec:problem_formulation} we lift these assumptions: the data are sequential and
non-stationary, feedback is asymmetric (the internal residual is always observed, external
residuals only upon querying), and the internal learner evolves online.

\subsection{Non-Stationary Time Series and State-Space Models}
\label{sec:background_slds}

In time series the data-generating process is typically \emph{non-stationary}: the joint law need
not be invariant to time shifts \citep{hamilton2020time}, and expert quality can drift or switch
across latent regimes.

State-space models (SSMs) provide a standard probabilistic representation of such non-stationarity
via a latent state \(\mathbf{h}_t\) \citep{rabiner2003introduction, shumway2006time}.
In a linear-Gaussian SSM,
\begin{align}
    \mathbf{h}_t &= A\,\mathbf{h}_{t-1} + w_t,\qquad w_t\sim \mathcal{N}(0,Q),\label{eq:lgssm_state}\\
    r_t &= C\,\mathbf{h}_t + v_t,\qquad\;\; v_t\sim \mathcal{N}(0,R),\label{eq:lgssm_obs}
\end{align}
the Kalman filter \citep{kalman1960new, welch1995introduction} yields tractable online posteriors
and predictive uncertainties.
Switching linear dynamical systems (SLDSs)
\citep{bengio1994input, ghahramani2000variational, fox2008nonparametric, hu2024modeling, geadah2024parsing} enrich this model with a discrete regime variable
\(z_t\in\{1,\dots,M\}\) selecting among \(M\) linear-Gaussian dynamics: conditioned on
\(z_t=m\), the dynamics and emission parameters become regime-specific.

%% file: Section/Approach.tex
\section{Context-Aware Routing in Non-Stationary Environments}

\subsection{Problem Formulation}
\label{sec:problem_formulation}

\textbf{Sequential primitives.}
Time runs over a finite horizon \(t\in[T]\coloneqq\{1,\dots,T\}\). At each round, the
environment produces a context \(\mathbf{x}_t\in\mathbb{R}^d\), a target
\(\vy_t\in\mathbb{R}^{d_y}\), and a set of available external experts
\(\mathcal{E}_t\subseteq\{1,\dots,K\}\) that may vary with \(t\). The system maintains
an \emph{internal learner} \(f_{\theta_t}:\mathbb{R}^d\to\mathbb{R}^{d_y}\) updated
online. As in the offline setup, action~\(0\) uses the internal prediction
\(\vhaty_{t,0}\coloneqq f_{\theta_t}(\mathbf{x}_t)\) at zero cost (\(\beta_0=0\)), while
action \(k\in\mathcal{E}_t\) defers to expert~\(k\) at known fee \(\beta_k\geq 0\), giving the
action set \(\mathcal{A}_t\coloneqq\{0\}\cup\mathcal{E}_t\). The router additionally
maintains an \emph{expert registry} \(\mathcal{K}_t\) storing per-expert latent state,
with \(\mathcal{E}_t\subseteq\mathcal{K}_t\) and \(0\in\mathcal{K}_t\); the internal
learner is never pruned (Section~\ref{sec:registry}).

\textbf{Asymmetric feedback.}
The signed residual \(e_{t,k}\coloneqq\vhaty_{t,k}-\vy_t\) and routing cost
\(C_{t,k}\coloneqq\psi(e_{t,k})+\beta_k\) extend the offline definitions
\eqref{eq:l2d_cost}--\eqref{eq:l2d_objective} to each round. After the router selects
\(I_t\in\mathcal{A}_t\), the target is always revealed, so the internal residual
\(e_{t,0}\) is observed \emph{every round}; external residuals are observed only upon
querying, so for \(k\in\mathcal{E}_t\setminus\{I_t\}\) the triple
\((\vhaty_{t,k},e_{t,k},C_{t,k})\) remains censored. The post-action observation is
\begin{equation}
\label{eq:observed_residuals}
\mathcal{O}_t \coloneqq
\begin{cases}
(\vy_t,\,e_{t,0}) & \text{if } I_t=0,\\
(\vy_t,\,e_{t,0},\,\vhaty_{t,I_t},\,e_{t,I_t}) & \text{if } I_t\in\mathcal{E}_t.
\end{cases}
\end{equation}
The internal arm is never censored, and as we will see this single always-observed
channel carries information about every other expert (Appendix Figure~\ref{fig:bandit_vs_l2d_feedback}).

\textbf{Internal learner update.}
After each round, the learner consumes \(\mathcal{O}_t\) and advances:
\begin{equation}
\label{eq:learner_update}
\theta_{t+1}
=
\mathrm{Update}(\theta_t,\mathbf{x}_t,\mathcal{O}_t).
\end{equation}
The trajectory \((\theta_t)_{t\ge 1}\) is therefore \emph{action-coupled}:
querying expert \(k\) reveals an additional teacher signal \(\vhaty_{t,k}\), so today's
routing decision changes tomorrow's internal model. We deliberately leave
\(\mathrm{Update}\) abstract here; its concrete form, which uses quantities defined by
the latent-state model, appears in \S\ref{sec:learner_update} once those quantities are
available.

\textbf{Objective.}
With the history
\(\mathcal{H}_{t}\coloneqq((\mathbf{x}_\tau,\mathcal{E}_\tau,I_\tau,\mathcal{O}_\tau))_{\tau=1}^{t}\)
and decision-time information
\(\mathcal{F}_t\coloneqq\sigma(\mathcal{H}_{t-1},\mathbf{x}_t,\mathcal{E}_t,\theta_t)\),
a policy \(\pi=(\pi_t)_{t=1}^T\) maps \(\mathcal{F}_t\) to a distribution over
\(\mathcal{A}_t\). The goal is to minimize the expected cumulative routing cost
\begin{equation}
\label{eq:routing_objective}
J(\pi) \coloneqq \mathbb{E}\!\left[\sum_{t=1}^T C_{t,I_t}\right].
\end{equation}
A myopic Bayes selector chooses \(k_t^{\star}\in\arg\min_{k\in\mathcal{A}_t}\mathbb{E}[C_{t,k}\mid\mathcal{F}_t]\);
since \(\beta_k\) is known, this reduces to forecasting
\(\mathbb{E}[\psi(e_{t,k})\mid\mathcal{F}_t]\) for every action --- the role of the latent-state model in
Section~\ref{sec:generative_model}.

\textbf{Assumptions.}
We do not assume i.i.d.\ data: the joint law of \((\mathbf{x}_t,\mathcal{E}_t,\vy_t)\)
may drift over time. We assume \emph{exogeneity} of the data-generating process,
\((\mathbf{x}_t,\mathcal{E}_t,\vy_t)\perp\!\!\!\perp I_{1:t-1}\mid
\sigma((\mathbf{x}_\tau,\mathcal{E}_\tau,\vy_\tau)_{\tau<t})\):
past routing decisions affect what we \emph{see}, but not what \emph{happens}.
The learner trajectory itself, however, is \emph{not} exogenous, and that is precisely
the coupling the rest of the section addresses.

\subsection{Generative Model: Factorized SLDS}
\label{sec:generative_model}

\emph{How can a single observed residual update beliefs about all of the unqueried
experts?} Only if their reliabilities are coupled by something we explicitly model. We
therefore treat the \emph{potential residuals} \(e_{t,k}\), \(k\in\mathcal{A}_t\), as
emissions of a factorized switching linear-Gaussian system
\citep{bengio1994input, linderman2016recurrent, hu2024modeling} --- \textbf{L2D-SLDS}
(Figure~\ref{fig:slds_pgm_final}) --- whose latent state has three pieces with distinct
roles: a shared factor \(\mathbf{g}_t\) that carries cross-expert structure, a per-expert
idiosyncratic state \(\mathbf{u}_{t,k}\) that absorbs what is private to expert~\(k\), and
a discrete regime \(z_t\) that switches the dynamics.

\begin{figure}[t]
    \centering
    \resizebox{0.62\textwidth}{!}{
    \begin{tikzpicture}[
    x=1cm, y=1cm,
    >=latex,
    thick,
    latent_node/.style={latent, minimum size=0.95cm},
    obs_node/.style={obs, minimum size=0.95cm},
    action_node/.style={draw, rectangle, minimum size=0.85cm},
    param_edge/.style={->, dashed, color=gray!80}
]
\node[obs_node]    (xt)      at (0, 2.4)    {$\mathbf{x}_t$};
\node[latent_node] (zt_prev) at (-2.0, 0.7) {$z_{t-1}$};
\node[latent_node] (zt)      at (0, 0.7)    {$z_t$};
\node[latent_node] (zt_next) at (2.0, 0.7)  {$z_{t+1}$};
\node[latent_node] (gt_prev) at (-2.0, -1.5) {$\mathbf{g}_{t-1}$};
\node[latent_node] (gt)      at (0, -1.5)    {$\mathbf{g}_t$};
\node[font=\small] (phi0) at (-4.9, 1.2) {$\widetilde{\Phi}(\mathbf{x}_t)$};
\node[font=\small] (phij) at (4.9, 1.2) {$\widetilde{\Phi}(\mathbf{x}_t)$};
\node[latent_node] (u0_prev) at (-5.6, -3.7) {$\mathbf{u}_{t-1,0}$};
\node[latent_node] (u0)      at (-4.0, -3.7) {$\mathbf{u}_{t,0}$};
\node[obs_node]    (e0)      at (-4.0, -5.9) {$e_{t,0}$};
\node[latent_node] (ut_prev) at (2.4, -3.7) {$\mathbf{u}_{t-1,j}$};
\node[latent_node] (ut)      at (4.0, -3.7) {$\mathbf{u}_{t,j}$};
\node[obs_node]    (lt)      at (4.0, -5.9) {$e_{t,j}$};
\plate[inner sep=0.30cm] {experts} {(ut_prev)(ut)(lt)} {$j \in \mathcal{K}_t \setminus \{0\}$};
\node[obs_node]    (Kt) at (8.0, -3.7) {$\mathcal{E}_t$};
\node[action_node] (rt) at (8.0, -5.9) {$I_t$};
\draw[->] (zt_prev) -- node[above, font=\footnotesize] {$\Pi$} (zt);
\edge {zt} {zt_next};
\edge {gt_prev} {gt};
\edge {u0_prev} {u0};
\edge {ut_prev} {ut};
\edge {u0} {e0};
\edge {ut} {lt};
\edge {Kt} {rt};
\draw[param_edge] (zt.south) -- node[midway, right, font=\footnotesize] {$A^{(g)}_{z_t},Q^{(g)}_{z_t}$} (gt.north);
\draw[param_edge] (zt.west) to[out=210, in=90] node[pos=0.52, left, font=\footnotesize] {$A^{(u)}_{z_t},Q^{(u)}_{z_t}$} (u0.north);
\draw[param_edge] (zt.east) to[out=330, in=90] node[pos=0.52, right, font=\footnotesize] {$A^{(u)}_{z_t},Q^{(u)}_{z_t}$} (ut.north);
\draw[->] (gt.south west) to[out=230, in=5] (e0.east);
\draw[->] (gt.south east) to[out=310, in=175] (lt.west);
\draw[->] (xt.west) -- (phi0.east);
\draw[->] (xt.east) -- (phij.west);
\draw[->] (phi0.south) to[out=-150, in=175] (e0.west);
\draw[->] (phij.south) to[out=-20, in=-5] (lt.east);
\draw[->, dotted] (rt.west) -- node[midway, below, font=\footnotesize] {reveals if \(j=I_t\)} (lt.east);
\node[font=\footnotesize, align=center] at (-4.0, -6.9) {always observed};
\end{tikzpicture}}
    \caption{\textbf{L2D-SLDS graphical model with asymmetric feedback.} Regime
    \(z_t\) (transitions \(\Pi\)) drives the dynamics of the shared factor
    \(\mathbf{g}_t\) and per-expert states \(\mathbf{u}_{t,j}\); each potential
    residual \(e_{t,j}\) is a linear-Gaussian emission of
    \(\boldsymbol\alpha_{t,j}{=}\mathbf{B}_j\mathbf{g}_t{+}\mathbf{u}_{t,j}\)
    through \(\widetilde\Phi(\mathbf{x}_t)\). The internal residual \(e_{t,0}\) is
    always observed; only \(e_{t,I_t}\) is revealed on the external branch (dotted).}
    \label{fig:slds_pgm_final}
\end{figure}

\subsubsection{Residual emission model}
\label{sec:emission_model}

Each expert's reliability combines the shared and idiosyncratic components through an
expert-specific loading \(\mathbf{B}_k\in\mathbb{R}^{d_\alpha\times d_g}\). Given regime
\(z_t=m\) and context \(\mathbf{x}_t\), the residual is a linear-Gaussian emission of
this reliability vector.

\begin{definition}[L2D-SLDS reliability and residual emission]
\label{def:l2d_slds_emission}
Fix latent dimensions \(d_g\) and \(d_\alpha\) and a feature map
\(\widetilde{\Phi}:\mathcal{X}\to\mathbb{R}^{d_\alpha\times d_y}\).
For each expert \(k\), the latent \emph{reliability} vector at time \(t\) is
\begin{equation}
\label{eq:alpha_def}
\boldsymbol\alpha_{t,k}\coloneqq \mathbf{B}_k\mathbf{g}_t+\mathbf{u}_{t,k},
\qquad
\mathbf{B}_k\in\mathbb{R}^{d_\alpha\times d_g}.
\end{equation}
Conditional on \((z_t=m,\mathbf{g}_t,\mathbf{u}_{t,k},\mathbf{x}_t)\), the signed
residual \(e_{t,k}\) is generated by
\begin{equation}
\label{eq:residual_emission}
e_{t,k}\mid (z_t=m,\mathbf{g}_t,\mathbf{u}_{t,k},\mathbf{x}_t)
\sim
\mathcal{N}\big(\widetilde{\Phi}(\mathbf{x}_t)^\top\boldsymbol\alpha_{t,k},\mathbf{R}_{m,k}\big),
\end{equation}
where \(\mathbf{R}_{m,k}\in\mathbb{S}^{d_y}_{++}\) is an expert- and regime-specific
noise covariance.
\end{definition}

The shared term \(\mathbf{B}_k\mathbf{g}_t\) captures what lifts every expert
simultaneously and is the channel that carries information from a queried residual to
the others; \(\mathbf{u}_{t,k}\) is private to expert~\(k\) and can only be learned by
querying it. Emission noise is conditionally independent across experts and time given
\((z_t,\mathbf{g}_t,(\mathbf{u}_{t,k})_k)\); we use a discrete prior \(p(z_1)\) and
Gaussian priors for \(\mathbf{g}_0\) and \(\mathbf{u}_{0,k}\).

\subsubsection{Latent state dynamics}

\textbf{Global factor.}
Conditional on \(z_t=m\),
\begin{equation}
\label{eq:global_dynamics}
\mathbf{g}_t
=
\mathbf{A}^{(g)}_{m}\mathbf{g}_{t-1}+\mathbf{w}^{(g)}_{t},
\qquad
\mathbf{w}^{(g)}_{t}\sim\mathcal{N}(\mathbf{0},\mathbf{Q}^{(g)}_{m}),
\end{equation}
with \(\mathbf{A}^{(g)}_{m}\in\mathbb{R}^{d_g\times d_g}\) and
\(\mathbf{Q}^{(g)}_{m}\in\mathbb{S}^{d_g}_{++}\). The process noise is independent across
time and of all other noise terms. Because \(\mathbf{g}_t\) appears in every expert's
reliability \eqref{eq:alpha_def}, any update to it propagates to every expert through its
loading \(\mathbf{B}_k\).

\textbf{Per-expert dynamics.}
Conditional on \(z_t=m\),
\begin{equation}
\label{eq:idiosyncratic_dynamics}
\mathbf{u}_{t,k}
=
\mathbf{A}^{(u)}_{m}\mathbf{u}_{t-1,k}+\mathbf{w}^{(u)}_{t,k},
\qquad
\mathbf{w}^{(u)}_{t,k}\sim\mathcal{N}(\mathbf{0},\mathbf{Q}^{(u)}_{m}).
\end{equation}
We share the dynamics parameters \((\mathbf{A}^{(u)}_{m},\mathbf{Q}^{(u)}_{m})\) across
experts to maintain statistical strength under sparse feedback; the noise terms are
independent across experts and time conditional on \((z_t)\).

\textbf{Regime dynamics.}
The regime \(z_t\in\{1,\dots,M\}\) selects the active dynamical parameters via a
time-homogeneous transition matrix \(\Pi\in[0,1]^{M\times M}\),
\(\mathbb{P}(z_t=m \mid z_{t-1}=\ell)=\Pi_{\ell m}\), giving predictive regime weights
\begin{equation}
\label{eq:context_transitions}
\bar w_t^{(m)}=\sum_{\ell=1}^M w_{t-1}^{(\ell)} \Pi_{\ell m}.
\end{equation}
A context-dependent extension is described in
Appendix~\ref{app:transition_parameterization}.

\textbf{Information transfer.}
For scalability, our inference maintains a \emph{factorized} filtering approximation:
conditional on \(z_t\), idiosyncratic states are independent across experts and
independent of \(\mathbf{g}_t\). The non-factorized update and the cross-covariance it
produces are derived in Appendix~\ref{app:cross_covariance}; the factorized form
preserves the mechanism that matters here.

\begin{restatable}[Information transfer under a shared factor]{proposition}{propinfo}
\label{prop:cross_update}
Fix \(t\) and \(z_t=m\), and let \(\mathcal{G}_t\coloneqq \sigma(\mathcal{F}_t,I_t,z_t=m)\). Let
\(j\neq I_t\) and let \((e_{t,j}^{\mathrm{pred}},e_{t,I_t}^{\mathrm{pred}})\) denote the one-step-ahead
predictive residuals under \(p(e_{t,\cdot}\mid \mathcal{F}_t,z_t=m)\). Assume that this predictive pair
is jointly Gaussian conditional on \(\mathcal{G}_t\) and that
\(\mathrm{Cov}(e_{t,I_t}^{\mathrm{pred}}\mid \mathcal{G}_t)\) is non-singular (e.g.,
\(\mathbf{R}_{m,I_t}\succ \mathbf{0}\)). Then
\[
\mathbb{E}\!\left[e_{t,j}^{\mathrm{pred}}\mid e_{t,I_t}^{\mathrm{pred}},\mathcal{G}_t\right]
=
\mathbb{E}\!\left[e_{t,j}^{\mathrm{pred}}\mid \mathcal{G}_t\right]\ \ (\text{a.s.})
\;\Longleftrightarrow\;
\mathrm{Cov}\!\left(e_{t,j}^{\mathrm{pred}},e_{t,I_t}^{\mathrm{pred}}\mid \mathcal{G}_t\right)=\mathbf{0}.
\]
\end{restatable}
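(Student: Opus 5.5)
The plan is to reduce the statement to the standard Gaussian conditioning formula and then read off both directions. Conditional on \(\mathcal{G}_t\), write \(a\coloneqq e_{t,j}^{\mathrm{pred}}\) and \(b\coloneqq e_{t,I_t}^{\mathrm{pred}}\). Denote their conditional means by \(\mu_a,\mu_b\) (which are \(\mathcal{G}_t\)-measurable) and the blocks of the joint conditional covariance by \(\Sigma_{aa},\Sigma_{ab},\Sigma_{bb}\). By hypothesis, \((a,b)\) is jointly Gaussian given \(\mathcal{G}_t\) and \(\Sigma_{bb}\succ\mathbf{0}\). The Gaussian conditioning lemma (Schur complement) then gives, almost surely,
\[
\mathbb{E}[a\mid b,\mathcal{G}_t]=\mu_a+\Sigma_{ab}\Sigma_{bb}^{-1}(b-\mu_b).
\]
To justify this formula, I would take the residual \(a-\mu_a-\Sigma_{ab}\Sigma_{bb}^{-1}(b-\mu_b)\) and check two things. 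First, it is uncorrelated with \(b\) given \(\mathcal{G}_t\). Second, it is jointly Gaussian with \(b\), so being uncorrelated it is independent of \(b\) conditionally on \(\mathcal{G}_t\), and its conditional mean is zero.

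\textbf{(\(\Leftarrow\)).} If \(\Sigma_{ab}=\mathbf{0}\), the linear term vanishes identically. The display then reduces to \(\mathbb{E}[a\mid b,\mathcal{G}_t]=\mu_a=\mathbb{E}[a\mid\mathcal{G}_t]\) a.s.

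\textbf{(\(\Rightarrow\)).} Suppose the two conditional expectations agree a.s. Then \(\Sigma_{ab}\Sigma_{bb}^{-1}(b-\mu_b)=\mathbf{0}\) a.s. I would multiply on the right by \((b-\mu_b)^\top\) and take conditional expectations given \(\mathcal{G}_t\). This yields \(\Sigma_{ab}\Sigma_{bb}^{-1}\Sigma_{bb}=\Sigma_{ab}=\mathbf{0}\). A tower-property argument avoids the explicit formula entirely, which is a useful cross-check:
\[
\Sigma_{ab}=\mathbb{E}\bigl[(\mathbb{E}[a\mid b,\mathcal{G}_t]-\mu_a)(b-\mu_b)^\top\mid\mathcal{G}_t\bigr].
\]
This expression is zero whenever \(\mathbb{E}[a\mid b,\mathcal{G}_t]=\mu_a\), so this direction needs neither the Gaussianity assumption nor the non-singularity assumption.

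The main obstacle I anticipate is measure-theoretic rather than algebraic. Conditioning on \(\mathcal{G}_t\) means the means and covariances are random, \(\mathcal{G}_t\)-measurable objects, so the conditioning formula must be applied to a regular conditional distribution, pointwise in \(\omega\) outside a null set. I would handle this by fixing a version of the conditional law of \((a,b)\) given \(\mathcal{G}_t\) that is Gaussian for a.e.\ \(\omega\), applying the finite-dimensional result there, and noting that the exceptional null sets are countable in number. The non-singularity of \(\Sigma_{bb}\) is what makes \(\Sigma_{bb}^{-1}\) well defined and measurable. In the model it follows from \(\Sigma_{bb}\succeq\mathbf{R}_{m,I_t}\succ\mathbf{0}\), because the emission noise in Definition~\ref{def:l2d_slds_emission} is independent of the latent states. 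Finally, I would remark that in the factorized filter \(\Sigma_{ab}=\widetilde\Phi(\mathbf{x}_t)^\top\mathbf{B}_j P^{(g)}_{t}\mathbf{B}_{I_t}^\top\widetilde\Phi(\mathbf{x}_t)\), where \(P^{(g)}_{t}\) is the predictive covariance of \(\mathbf{g}_t\). This makes explicit that the shared factor is the only transfer channel.
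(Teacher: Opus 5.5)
Your proposal is correct and follows the paper's route: both apply the Gaussian conditioning formula $\mathbb{E}[a\mid b,\mathcal{G}_t]=\mu_a+\Sigma_{ab}\Sigma_{bb}^{-1}(b-\mu_b)$ and read the equivalence off the linear term. Your $(\Rightarrow)$ step, right-multiplying by $(b-\mu_b)^\top$ and taking conditional expectations (or using the tower-property identity), is a more careful version of the paper's ``for all values of $e_t$'' step, and it correctly shows that this direction needs neither Gaussianity nor non-singularity.
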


In our model the predictive cross-covariance reduces to
\(\widetilde{\Phi}^\top \mathbf{B}_j \Sigma^{(m)}_{g,t\mid t-1}\mathbf{B}_{I_t}^\top \widetilde{\Phi}\),
so transfer occurs whenever the loadings couple two experts through \(\mathbf{g}_t\) and
the predictive shared-factor uncertainty has not collapsed
(proof in Appendix~\ref{app:proof_cross_update}).

\subsection{Online Inference: Asymmetric Two-Phase Filtering}
\label{sec:asymmetric_update}

Asymmetric feedback is naturally handled by applying two linear-Gaussian corrections
\emph{in series} per regime hypothesis \(z_t=m\): an \textbf{internal phase} that always
incorporates \(e_{t,0}\) to update \((\mathbf{g}_t,\mathbf{u}_{t,0})\) and the regime
weights, and an \textbf{external phase} (only if \(I_t=k\neq 0\)) that incorporates
\(e_{t,k}\) to update \((\mathbf{g}_t,\mathbf{u}_{t,k})\) and re-weight the regimes. Each
phase is a standard Kalman correction under the factorized approximation; pseudocode is
in Algorithm~\ref{alg:correct_reweight}. The internal phase ensures \(\mathbf{g}_t\) is
updated every round, so by Proposition~\ref{prop:cross_update} the predictive belief
over every coupled external expert is refined even when \(I_t=0\).

\subsection{Learner-Aware Routing}
\label{sec:exploration}

When is a query worth its fee? The filter of \S\ref{sec:asymmetric_update} delivers, for
every action \(k\in\mathcal{A}_t\), a one-step-ahead predictive residual
\(e_{t,k}^{\mathrm{pred}}\sim p(e_{t,k}\mid\mathcal{F}_t)\) and the predicted cost
\(\bar C_{t,k}^{\mathrm{pred}}\coloneqq \mathbb{E}[\psi(e_{t,k}^{\mathrm{pred}})\mid\mathcal{F}_t]+\beta_k\).
Action~\(0\) is the default, so the immediate price of querying \(k\) is the excess
\(\Delta_t^{(0)}(k)\coloneqq \bar C_{t,k}^{\mathrm{pred}}-\bar C_{t,0}^{\mathrm{pred}}\):
if \(\Delta_t^{(0)}(k)\le 0\), the query already pays for itself this round; if
\(\Delta_t^{(0)}(k)>0\), it can still be worthwhile, but only because it sharpens our
belief about the latent state (\emph{exploration}) or improves the internal learner for
future rounds (\emph{learner improvement}). We balance these three axes via an
IDS-inspired score \citep{russo2014learning}, using
\(\bar L_{t,k}^{\mathrm{pred}}\coloneqq \bar C_{t,k}^{\mathrm{pred}}-\beta_k\) for the
fee-free predictive loss in what follows.

The exploration value of a query is what its residual would teach us \emph{beyond} what
the always-observed \(e_{t,0}\) already does. Conditioning on \(e_{t,0}\) is what makes
this term genuinely \emph{additional}: an expert that is highly informative in
isolation but is already fully predicted by the shared factor and the internal residual
should attract a small bonus, because querying it would not move the latent-state
posterior. The natural target is therefore the conditional mutual information
\begin{equation}
\label{eq:ig_operational}
\mathrm{IG}_t^{\star}(k)
\coloneqq
\mathcal{I}\!\left((z_t,\mathbf{g}_t);\, e_{t,k}^{\mathrm{pred}}\,\middle|\,e_{t,0}^{\mathrm{pred}},\mathcal{F}_t\right),
\end{equation}
which decomposes into a closed-form shared-factor refinement term and a regime
mode-identification term. The router uses a decision-time approximation
\(\mathrm{IG}_t(k)\) computed directly from the predictive moments delivered by the
filter --- closed form for the shared factor, lightweight Monte Carlo for the modes,
\(\mathrm{IG}_t(0)\equiv 0\); the full derivation, the closed form, the estimator, and
the bridge to \(\mathrm{IG}_t^{\star}\) are in Appendix~\ref{app:info_gain}.

Learner improvement, in turn, is naturally indexed by how likely the expert is to beat
the internal arm: a query that beats the learner today is a candidate teacher signal
for tomorrow. Define the predicted-superiority probability
\begin{equation}
\label{eq:superiority_main}
p_t(k)\coloneqq \mathbb{P}\!\left(\psi(e_{t,k})\le\psi(e_{t,0})\,\middle|\,\mathcal{F}_t\right),
\end{equation}
estimated from the same predictive belief; \(p_t(k)\) is the only object linking
routing to the action-coupled internal update of \S\ref{sec:learner_update}. We then
score expected one-step learner improvement by the proxy
\begin{equation}
\label{eq:learner_improvement_proxy}
\widehat{\mathrm{LI}}_t(k)
\coloneqq
p_t(k)\,\bigl[\bar L_{t,0}^{\mathrm{pred}}-\bar L_{t,k}^{\mathrm{pred}}\bigr]_+,
\qquad
\widehat{\mathrm{LI}}_t(0)\equiv 0,
\end{equation}
which is positive precisely when the expert is both likely to outperform the learner
and predicted to do so by a non-trivial fee-free margin.

Combining the three signals gives the learner-aware query score
\begin{equation}
\label{eq:query_score}
S_t(k)
\coloneqq
-\Delta_t^{(0)}(k)
+\lambda_{\mathrm{IG}}\,\mathrm{IG}_t(k)
+\lambda_{\mathrm{L}}\,\widehat{\mathrm{LI}}_t(k),
\qquad k\in\mathcal{E}_t,
\end{equation}
with \(\lambda_{\mathrm{IG}},\lambda_{\mathrm{L}}\ge 0\) trading immediate cost against
latent-state information and future learner improvement. Reading \(S_t(k)\) as a
two-sided test makes the rule transparent: the router queries \(k\) only if its
expected cost penalty \(\Delta_t^{(0)}(k)\) is more than offset by the information it
would deliver about \((z_t,\mathbf{g}_t)\) and its expected contribution to tomorrow's
learner. The router takes the internal default unless some external query is worth
its fee:
\begin{equation}
\label{eq:query_policy}
I_t
\in
\begin{cases}
\arg\max_{k\in\mathcal{E}_t} S_t(k), & \text{if }\mathcal{E}_t\neq\emptyset\text{ and }\max_{k\in\mathcal{E}_t} S_t(k) > 0,\\
0, & \text{otherwise.}
\end{cases}
\end{equation}
Setting \(\lambda_{\mathrm{L}}=0\) recovers a purely information-seeking variant;
setting both bonuses to zero recovers a greedy cost-minimizing router.

\subsection{Action-Coupled Internal Learner Update}
\label{sec:learner_update}

We can now make the abstract \(\mathrm{Update}\) of \eqref{eq:learner_update} concrete.
The target \(\vy_t\) is the only ground-truth signal, so it must always anchor the
update; the queried prediction \(\vhaty_{t,I_t}\) (when present) carries useful
information \emph{when} the expert was reliable and its prediction disagreed with the
learner's. We therefore use a teacher-regularized objective
\begin{equation}
\label{eq:teacher_weighted_update}
\theta_{t+1}
\approx
\arg\min_{\theta}\;
\psi\!\bigl(f_{\theta}(\mathbf{x}_t)-\vy_t\bigr)
+
\omega_t(I_t)\,\mathbf{1}\{I_t\neq 0\}\,
\psi\!\bigl(f_{\theta}(\mathbf{x}_t)-\vhaty_{t,I_t}\bigr),
\end{equation}
in which the auxiliary teacher term is gated by an adaptive weight
\begin{equation}
\label{eq:teacher_weight_main}
\omega_t(k)\coloneqq\bar\lambda_{\mathrm{T}}\,p_t(k)\,h_t(k)\,d_t(k).
\end{equation}
The weight is a product of three lightweight tests, each in \([0,1]\):
\(p_t(k)\) from \eqref{eq:superiority_main} measures \emph{decision-time reliability};
\(h_t(k)\coloneqq[\psi(e_{t,0})-\psi(e_{t,k})]_+/(\psi(e_{t,0})+\psi(e_{t,k})+\epsilon_{\mathrm{T}})\)
measures \emph{realized helpfulness} (only positive if the expert in fact beat the
learner this round); and
\(d_t(k)\coloneqq\|e_{t,k}-e_{t,0}\|^2/(\|e_{t,k}-e_{t,0}\|^2+\tau_d)\)
is a \emph{disagreement gate} that suppresses redundant teaching. With
\(\bar\lambda_{\mathrm T}\ge 0\) and \(\epsilon_{\mathrm T},\tau_d>0\), the weight is
well-defined and nonnegative; queried predictions only regularize the learner when the
expert is reliable, was actually correct, and is not merely echoing the learner.

This closes the action-coupling loop of \S\ref{sec:problem_formulation}: the router's
decision selects the teacher signals, which shape \(\theta_{t+1}\), which in turn feeds
back into the next routing score through \(p_{t+1}(\cdot)\).

\subsection{Dynamic Registry Management}
\label{sec:registry}

We treat per-expert state as a cache: store what is in active use, drop what is not,
rebuild on re-entry, without touching the predictions for retained experts. An external
expert is \emph{stale} at round~\(t\) if it is currently unavailable and has not been
queried within the last \(\Delta_{\max}\) rounds:
\(\mathcal{K}^{\mathrm{stale}}_t\coloneqq\{k\in \mathcal{K}_{t-1}\setminus(\mathcal{E}_t\cup\{0\}):\,t-\tau_{\mathrm{last}}(k)>\Delta_{\max}\}\),
and the registry update is
\(\mathcal{K}_t \coloneqq (\{0\}\cup \mathcal{K}_{t-1}\cup \mathcal{E}_t)\setminus \mathcal{K}^{\mathrm{stale}}_t\),
with \(\mathcal{K}_0=\{0\}\). Because the factorized belief stores per-expert marginals,
dropping a stale \(\mathbf{u}_{t-1,k}\) is exact marginalization:

\begin{restatable}[Pruning does not affect retained experts]{proposition}{invariance}
\label{prop:invariance}
For any pruned set \(P_t\subseteq\mathcal{K}_{t-1}\), the post-pruning belief equals the
exact marginal of the pre-pruning belief on the retained variables; consequently the
predictive distributions of \(\boldsymbol\alpha_{t,\ell}\) and \(e_{t,\ell}^{\mathrm{pred}}\)
are identical before and after pruning for every \(\ell\notin P_t\) (proof in
Appendix~\ref{app:invariance}).
\end{restatable}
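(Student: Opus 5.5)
The argument follows the factorized structure of the filtering belief. At the end of round \(t-1\), the IMM filter under the factorized approximation stores, for each regime hypothesis \(m\), a weight \(w_{t-1}^{(m)}\), a Gaussian marginal for \(\mathbf{g}_{t-1}\), and independent Gaussian marginals for each \(\mathbf{u}_{t-1,k}\), \(k\in\mathcal{K}_{t-1}\). We therefore write the pre-pruning belief as the mixture
\[
q_{t-1}=\sum_{m=1}^M w_{t-1}^{(m)}\, q^{(m)}(\mathbf{g}_{t-1})\prod_{k\in\mathcal{K}_{t-1}} q^{(m)}(\mathbf{u}_{t-1,k}).
\]
The post-pruning belief is obtained by deleting the factors indexed by \(k\in P_t\) and leaving every other factor and every weight unchanged.

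\textbf{Step 1 (pruning is marginalization).} Integrate \(q_{t-1}\) over \((\mathbf{u}_{t-1,k})_{k\in P_t}\). Inside each mixture component the integrand is a product, so by Fubini each deleted factor integrates to \(1\). The regime weights \(w_{t-1}^{(m)}\) are untouched, since marginalizing continuous variables cannot change the marginal of \(z_{t-1}\). The result is exactly the post-pruning belief, which proves the first claim.

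\textbf{Step 2 (prediction commutes with marginalization).} For the predictive claim, fix \(\ell\notin P_t\). The predictive law of \(\boldsymbol\alpha_{t,\ell}=\mathbf{B}_\ell\mathbf{g}_t+\mathbf{u}_{t,\ell}\) and of \(e_{t,\ell}\) depends only on three ingredients: the regime transition \eqref{eq:context_transitions} applied to the weights; the per-regime propagation of \(\mathbf{g}_{t-1}\) and \(\mathbf{u}_{t-1,\ell}\) through \eqref{eq:global_dynamics}--\eqref{eq:idiosyncratic_dynamics}; and the emission \eqref{eq:residual_emission}. In the generative model, the noises \(\mathbf{w}^{(u)}_{t,k}\) for \(k\in P_t\) are independent of everything relevant to \(\ell\) conditional on the regime. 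Consequently the joint predictive density of \((z_t,\mathbf{g}_t,\mathbf{u}_{t,\ell},e_{t,\ell})\) is obtained by integrating a kernel that never involves \(\mathbf{u}_{t-1,k}\), \(k\in P_t\), against \(q_{t-1}\). By Step 1 that integral is the same whether or not the pruned coordinates are marginalized first. Equivalently, marginalizing variables commutes with applying a Markov kernel that does not read them.

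\textbf{Main obstacle.} The only delicate point is checking that nothing in the predictive step couples \(\ell\) to a pruned expert, in particular the regime weights. Under the factorized approximation, the regime reweighting at round \(t-1\) already used the observed residuals, and the prediction \(\bar w_t^{(m)}\) depends only on \(w_{t-1}\). Pruning a currently unavailable expert therefore adds or removes no likelihood term. We will also note that this exactness relies on the factorized representation: under the non-factorized filter of Appendix~\ref{app:cross_covariance}, marginalization is still exact, but we must delete the corresponding cross-covariance blocks rather than simply dropping factors.
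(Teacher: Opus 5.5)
Your proposal is correct and follows essentially the same route as the paper. You show that the post-pruning belief is the exact marginal on the retained variables, and then observe that the regime weights, the per-regime linear-Gaussian time update of $(\mathbf{g}_{t-1},\mathbf{u}_{t-1,\ell})$, and the emission never read the pruned coordinates, so the predictive laws of $\boldsymbol\alpha_{t,\ell}$ and $e_{t,\ell}^{\mathrm{pred}}$ coincide. The only difference is in the first claim: the paper \emph{defines} the pruned belief as that marginal integral over an arbitrary joint density, whereas you use the stored mixture-of-products form to verify that deleting the factors for $k\in P_t$ yields that marginal, which is a slightly more concrete check of the same fact.
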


For each entering expert \(j\in\mathcal{E}_t\setminus\mathcal{K}_{t-1}\) and regime
\(m\), we initialize \(\mathbf{u}_{t-1,j}\mid(z_t=m)\sim\mathcal{N}(\mu^{(m)}_{\mathrm{init},j},\Sigma^{(m)}_{\mathrm{init},j})\)
and propagate through \eqref{eq:idiosyncratic_dynamics}; on entry, the new expert
immediately benefits from the current shared-factor belief through
\(\boldsymbol\alpha_{t,j}=\mathbf{B}_j\mathbf{g}_t+\mathbf{u}_{t,j}\)
(Proposition~\ref{prop:transfer}, Appendix~\ref{app:transfer}). The factorized
approximation makes both compute and memory scale linearly in \(|\mathcal{K}_t|\) while
preserving cross-expert transfer through \(\mathbf{g}_t\); detailed complexity
accounting is in Appendix~\ref{app:registry_full}.

\subsection{Regret Decomposition with Predictive-Cost Error}
\label{sec:regret_main}

Internal action is itself
trained online, comparing to a fixed best arm is not meaningful. The natural
counterfactual is a \emph{learn-and-defer oracle} that may substitute, at every
round, any predictable internal predictor \(f_{u_t}\) for the deployed
\(f_{\theta_t}\), but pays the same potential external costs as the deployed
system on every queried expert.

\emph{Comparator and regret.} Fix a comparator class
\(\mathcal{W}\subseteq\mathbb{R}^{d_\theta}\); a sequence \(u_{1:T}\in\mathcal{W}^T\)
is \emph{admissible} if each \(u_t\) is \(\mathcal{F}_t\)-measurable. Let
\(C^{\theta}_{t,a}\) denote the deployed cost \(C_{t,a}\) of
\eqref{eq:l2d_cost} and \(C^{u}_{t,a}\) the comparator cost: \(C^{u}_{t,0}\) uses
\(f_{u_t}\) instead of \(f_{\theta_t}\), while \(C^{u}_{t,k}=C^{\theta}_{t,k}\)
for every external \(k\in\mathcal{E}_t\). The oracle's action is
\(a_t^u\in\arg\min_{a\in\mathcal{A}_t}\mathbb{E}[C^{u}_{t,a}\mid\mathcal{F}_t]\) and
the realized dynamic regret is
\(R_T(u_{1:T})=\sum_t(C^{\theta}_{t,I_t}-C^{u}_{t,a_t^u})\).

\emph{Internal-use intervals.} The set
\(\mathcal{T}_0(u)=\{t:a_t^u=0\}\) splits into \(m\le S_u+1\) maximal contiguous
intervals \(\mathcal{J}_1,\dots,\mathcal{J}_m\), where \(S_u\) is the number of
oracle switches; these are the only rounds on which the deployed learner has to
compete with the comparator (elsewhere both sides play the same external action).

\emph{Assumptions.} We assume bounded costs in \([0,C_{\max}]\); nonnegative
bonuses \(b_t(k)=\lambda_{\mathrm{IG}}\mathrm{IG}_t(k)+\lambda_{\mathrm{L}}
\widehat{\mathrm{LI}}_t(k)\), \(b_t(0)=0\); an SLDS predictive-cost-error budget
\(\mathcal{E}_{\mathrm{SLDS}}\) on
\(\sum_t\max_a|\bar C^{\mathrm{pred}}_{t,a}-\mathbb{E}[C^{\theta}_{t,a}\mid\mathcal{F}_t]|\);
and a realized interval bound \(B^{\mathrm{r}}_{\mathrm{int}}([r,s],u)\) on
\(\sum_{t=r}^{s}(C^{\theta}_{t,0}-C^{u}_{t,0})\) for every \([r,s]\)
(Appendix~\ref{app:regret-assumptions}).

\begin{theorem}[Regret of the L2D-SLDS router]
\label{thm:calibrated-score-main}
Under the measurability, boundedness, nonnegative-bonus, predictive-cost-error, and realized
internal-interval assumptions in Appendix~\ref{app:regret-assumptions},
with probability at least \(1-\delta_{\mathrm{cal}}-\delta_{\mathrm{int}}^{\mathrm{r}}-\delta\),
\begin{equation}
\label{eq:thm-main-realized}
    R_T(u_{1:T})
    \le
    \underbrace{\sum_{t=1}^T b_t(I_t)}_{\textnormal{query-bonus budget}}
    \!+\!
    \underbrace{2\mathcal{E}_{\mathrm{SLDS}}(T,\delta_{\mathrm{cal}})}_{\textnormal{SLDS predictive-cost error}}
    \!+\!
    \underbrace{\sum_{j=1}^{m} B^{\mathrm{r}}_{\mathrm{int}}(\mathcal{J}_j,u)}_{\textnormal{internal learning}}
    \!+\,C_{\max}\sqrt{2T\log(1/\delta)}.
\end{equation}
Proof in Appendix~\ref{app:regret-theorem}.
\end{theorem}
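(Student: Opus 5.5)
The plan is to decompose the per-round regret \(C^{\theta}_{t,I_t}-C^{u}_{t,a_t^u}\) into three pieces, which will later be matched to the terms of \eqref{eq:thm-main-realized}:
\begin{itemize}
\item a conditional-expectation piece coming from the router's choice;
\item a martingale fluctuation piece;
\item a piece coming from the internal learner on the rounds where the oracle itself uses action~\(0\).
\end{itemize}
Write \(\mu_{t,a}\coloneqq\mathbb{E}[C^{\theta}_{t,a}\mid\mathcal{F}_t]\) and \(\mu^{u}_{t,a}\coloneqq\mathbb{E}[C^{u}_{t,a}\mid\mathcal{F}_t]\). Since \(C^u_{t,k}=C^\theta_{t,k}\) for external \(k\), we have \(\mu^u_{t,k}=\mu_{t,k}\) for \(k\neq 0\). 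We then split
\[
C^{\theta}_{t,I_t}-C^{u}_{t,a_t^u}
=\bigl(C^{\theta}_{t,I_t}-\mu_{t,I_t}\bigr)
+\bigl(\mu_{t,I_t}-\mu_{t,a_t^u}\bigr)
+\bigl(\mu_{t,a_t^u}-C^{u}_{t,a_t^u}\bigr)
+\bigl(C^{u}_{t,a_t^u}-C^{\theta}_{t,a_t^u}\bigr)\cdot(-1).
\]
The last bracket vanishes unless \(a_t^u=0\). On \(\mathcal{T}_0(u)\) the last two brackets combine to \(C^\theta_{t,0}-C^u_{t,0}\) minus a martingale-type correction. It is cleaner, however, to handle these rounds by writing \(\mu_{t,a_t^u}\le\) realized quantities only where needed. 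Concretely, I will group terms so that the only realized, non-expectation residual is \(\sum_t (C^\theta_{t,I_t}-\mu_{t,I_t})+\sum_t(\mu_{t,a^u_t}-C^\theta_{t,a^u_t})\). This sum is a martingale difference sequence, because \(I_t\) and \(a_t^u\) are \(\mathcal{F}_t\)-measurable up to the router's internal randomization. Its increments are bounded by \(C_{\max}\) each, so Azuma–Hoeffding gives the term \(C_{\max}\sqrt{2T\log(1/\delta)}\) with probability \(1-\delta\). The remainder, \(\sum_{t\in\mathcal{T}_0(u)}(C^\theta_{t,0}-C^u_{t,0})\), is split along the maximal intervals \(\mathcal{J}_1,\dots,\mathcal{J}_m\). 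The realized interval assumption then bounds it by \(\sum_j B^{\mathrm r}_{\mathrm{int}}(\mathcal{J}_j,u)\) with probability \(1-\delta^{\mathrm r}_{\mathrm{int}}\).

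For the selection term \(\mu_{t,I_t}-\mu_{t,a_t^u}\), I will use the rule \eqref{eq:query_policy}. Note that \(S_t(k)=\bar C^{\mathrm{pred}}_{t,0}-\bar C^{\mathrm{pred}}_{t,k}+b_t(k)\), and define \(S_t(0)=0\) consistently with \(b_t(0)=0\). Then \(I_t\) maximizes \(-\bar C^{\mathrm{pred}}_{t,a}+b_t(a)\) over \(\mathcal{A}_t\), which gives
\[
\bar C^{\mathrm{pred}}_{t,I_t}\le \bar C^{\mathrm{pred}}_{t,a}+b_t(I_t)-b_t(a)\le \bar C^{\mathrm{pred}}_{t,a}+b_t(I_t)
\]
for every \(a\), by nonnegativity of the bonuses. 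Taking \(a=a_t^u\) and passing from predicted to true conditional costs at both \(I_t\) and \(a_t^u\) costs at most \(2\max_a|\bar C^{\mathrm{pred}}_{t,a}-\mu_{t,a}|\). Summing over \(t\) and invoking the predictive-cost-error budget on an event of probability \(1-\delta_{\mathrm{cal}}\) yields \(\sum_t b_t(I_t)+2\mathcal{E}_{\mathrm{SLDS}}\).

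A subtlety appears when \(a_t^u=0\). Here \(\mu_{t,0}\) refers to the deployed learner, not to \(f_{u_t}\), so the comparison must be made against \(\mu_{t,0}\) (deployed). That is exactly why the gap \(\mu_{t,0}-\mu^u_{t,0}\) is routed into the interval term above rather than into the selection step.

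A union bound over the three events gives probability at least \(1-\delta_{\mathrm{cal}}-\delta^{\mathrm r}_{\mathrm{int}}-\delta\).

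The main obstacle is bookkeeping the comparator at action \(0\) correctly. The oracle chooses \(a^u_t\) using \(\mu^u\), while the router competes using \(\mu\). So I must verify that comparing \(I_t\) against the \emph{fixed} \(\mathcal{F}_t\)-measurable action \(a_t^u\) under the deployed costs is valid, which it is, since the inequality holds for every \(a\). I must also verify that all remaining mismatch is exactly \(C^\theta_{t,0}-C^u_{t,0}\) on \(\mathcal{T}_0(u)\), with no leftover expectation-versus-realization gap beyond what the Azuma term absorbs. If a randomized router breaks measurability, I would condition on its internal randomness and include it in the filtration.
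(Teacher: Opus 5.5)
Your decomposition is the paper's own. It has four pieces: the score inequality, the \(2\varepsilon_t\) transfer, the interval term on \(\mathcal{T}_0(u)\), and Azuma--Hoeffding applied to \(Y_t=(C^\theta_{t,I_t}-\mu_{t,I_t})-(C^\theta_{t,a_t^u}-\mu_{t,a_t^u})\). Your convention \(S_t(0)=0\), under which \(I_t\) maximizes \(-\bar C^{\mathrm{pred}}_{t,a}+b_t(a)\) over \(\mathcal{A}_t\), neatly replaces the paper's two cases. The step that fails is the claim that \((Y_t)\) is a martingale difference sequence. You deduce it, like the paper's proof, from \(\mathbb{E}[Y_t\mid\mathcal{F}_t]=0\) alone. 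Azuma--Hoeffding also needs \(Y_t\) to be measurable with respect to the next \(\sigma\)-field. Whenever \(a_t^u\in\mathcal{E}_t\setminus\{I_t\}\), the cost \(C^\theta_{t,a_t^u}\) is censored, so \(Y_t\) is not \(\mathcal{F}_{t+1}\)-measurable. Adding past potential outcomes to the filtration does not rescue this, because \(\mu_{t,a}\) is centred only given the observed \(\mathcal{F}_t\).

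The realized bound is in fact false without further assumptions. Take the following instance, with \(C_{\max}=1\):
\begin{itemize}
\item one always-available expert with \(\beta_1=0\) and cost \(C_{t,1}=V\) for all \(t\), where \(V\in\{0,1\}\) is a fair coin drawn once;
\item deployed internal cost \(0.4\) and comparator internal cost \(0.6\);
\item zero bonuses, with \(\bar C^{\mathrm{pred}}_{t,0}=0.4\) and \(\bar C^{\mathrm{pred}}_{t,1}=0.5\).
\end{itemize}
Then \(I_t=0\) for all \(t\), and \(V\) is never revealed, so \(\mu_{t,1}=1/2\) and \(\mathcal{E}_{\mathrm{SLDS}}=0\) surely. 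Since \(a_t^u=1\), we get \(\mathcal{T}_0(u)=\emptyset\), and the interval assumption holds surely with \(B^{\mathrm r}_{\mathrm{int}}\equiv 0\). Moreover \(Y_t=1/2-V\) for every \(t\). On \(\{V=0\}\) one gets \(R_T=0.4\,T>\sqrt{2T\log 4}\) for \(T\ge 18\). The theorem with \(\delta=1/4\) would instead require \(R_T\le\sqrt{2T\log 4}\) with probability \(3/4\). A never-queried expert with a persistent idiosyncratic state \(\mathbf{u}_{t,k}\) produces the same effect inside the paper's own SLDS.

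Part of your argument does survive. The sum \(\sum_t(C^\theta_{t,I_t}-\mu_{t,I_t})\) is a genuine \((\mathcal{F}_{t+1})\)-martingale, because the chosen cost is revealed. Also, \(C^\theta_{t,a_t^u}\) is revealed whenever \(a_t^u\in\{0,I_t\}\), since the internal residual is always observed. So your argument proves the stated inequality for a modified regret: use \(\mu^u_{t,a_t^u}\) in place of \(C^u_{t,a_t^u}\) on rounds where the oracle defers to an unqueried expert. The increments are then \(\mathbf{1}\{a_t^u\neq I_t\}\bigl[(C^\theta_{t,I_t}-\mu_{t,I_t})-\mathbf{1}\{a_t^u=0\}(C^\theta_{t,0}-\mu_{t,0})\bigr]\), which are adapted, centred, and of conditional range at most \(2C_{\max}\). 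Alternatively, keep \(R_T\) and assume that current potential costs are conditionally independent of past censored outcomes given \(\mathcal{F}_t\). The SLDS itself violates that assumption. The conditional-mean statement, Theorem~\ref{thm:app-calibrated-score}(i), is unaffected.

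Two smaller slips:
\begin{itemize}
\item Your four-bracket identity subtracts \(C^u_{t,a_t^u}\) twice. The third bracket should be \(\mu_{t,a_t^u}-C^\theta_{t,a_t^u}\); your later regrouping is the correct one.
\item \(|Y_t|\) is only bounded by \(2C_{\max}\), not \(C_{\max}\). The constant \(C_{\max}\sqrt{2T\log(1/\delta)}\) comes from the conditional range \(2C_{\max}\) via Hoeffding's lemma, not from \(|Y_t|\le C_{\max}\).
\end{itemize}
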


Each term is tied to one part of the algorithm. The \emph{query-bonus budget}
\(\sum_t b_t(I_t)\) is the price the router elects to pay for exploration: each
external query costs exactly the bonus that made its score positive. The \emph{SLDS
predictive-cost error} term is necessary: without it, the score can pick a bad
external forever. It vanishes under exact model realizability, known parameters,
and exact filtering under the asymmetric history of always-observed internal residuals
and queried external residuals (Appendix Proposition~\ref{prop:app-exact-filter-calibration});
with learned or approximate filters it is the cumulative predictive-cost error. The
\emph{internal-learning} term accumulates only on \(\mathcal{T}_0(u)\), so the deployed
learner competes with the comparator only on rounds where the comparator itself opted
to predict. The last term is the standard Azuma--Hoeffding noise
\citep{azuma1967weighted, hoeffding1963probability} of converting conditional means
into realized losses; the comparator follows the dynamic-regret tradition of
\citet{zinkevich2003online}. The decomposition is therefore \emph{modular}: any future
SLDS variant that reduces predictive-cost error directly shrinks the second term, and any
internal-learner choice with a sharper interval-regret bound directly shrinks the
third, without reopening the rest of the proof.

For the empirical teacher-weighted update \eqref{eq:teacher_weighted_update}, the
internal term has a concrete augmented-loss form under convex/proximal update
conditions (Appendix Corollary~\ref{cor:app-teacher-core}). If
\(\widetilde L_t(\theta)=L_t(\theta)+\omega_tM_t(\theta)\), where
\(L_t\) is the target loss and \(M_t\) is the queried-teacher loss, then on every
internal-use interval \(\mathcal{J}\),
\[
B^{\mathrm r}_{\mathrm{int}}(\mathcal{J},u)
=B^{\mathrm{aug}}(\mathcal{J},u)
-A^{\mathrm{teach}}_{\mathcal{J}}(u),
\qquad
A^{\mathrm{teach}}_{\mathcal{J}}(u)
=\sum_{t\in\mathcal{J}}\omega_t\{M_t(\theta_t)-M_t(u_t)\}.
\]
The signed teacher correction is the formal role of queried expert predictions in the
regret bound; a conservative version replaces \(-A^{\mathrm{teach}}\) by the teacher
weight budget on the comparator.

\begin{corollary}[Sublinear realized regret]
\label{cor:sublinear-main}
Let \(d_\theta\coloneqq\dim(\mathcal{W})\) be the comparator dimension,
\(T_0\coloneqq|\mathcal{T}_0(u)|\) the number of internal-use rounds, and
\(P_0(u)\coloneqq\sum_{j=1}^{m}P_{\mathcal{J}_j}(u)\) the comparator path length
restricted to the internal-use intervals. With bounded bonuses, decayed scales
\(\lambda_{\mathrm{IG}},\lambda_{\mathrm{L}}=\Theta(1/\sqrt T)\), and the certified
mixability learner of Proposition~\ref{prop:app-internal}, with probability at least
\(1-\delta_{\mathrm{cal}}-\delta\),
\begin{equation}
\label{eq:cor-main-rate}
    R_T(u_{1:T})\;\le\;\widetilde O\!\Bigl(\sqrt T+\mathcal{E}_{\mathrm{SLDS}}+(d_\theta+1)\bigl(S_u+1+T_0^{1/3}P_0(u)^{2/3}\bigr)\Bigr).
\end{equation}
If \(\mathcal{E}_{\mathrm{SLDS}}=\widetilde O(\sqrt T)\) and
\(S_u,T_0^{1/3}P_0(u)^{2/3}=o(T/\log T)\), then \(R_T(u_{1:T})/T\to 0\).
\end{corollary}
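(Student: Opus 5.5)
The proof instantiates the three non-noise terms of Theorem~\ref{thm:calibrated-score-main} one at a time and keeps the Azuma--Hoeffding term \(C_{\max}\sqrt{2T\log(1/\delta)}=\widetilde O(\sqrt T)\) as is.

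\emph{Query-bonus budget.} The bonuses are bounded, so \(\mathrm{IG}_t(k)\le G_{\max}\) and \(\widehat{\mathrm{LI}}_t(k)\le C_{\max}\). The latter holds because \(p_t(k)\le 1\) and the fee-free predictive loss lies in \([0,C_{\max}]\). Hence \(b_t(I_t)\le \lambda_{\mathrm{IG}}G_{\max}+\lambda_{\mathrm L}C_{\max}\). Summing over \(T\) rounds with \(\lambda_{\mathrm{IG}},\lambda_{\mathrm L}=\Theta(1/\sqrt T)\) gives \(\sum_t b_t(I_t)=O(\sqrt T)\) deterministically. The SLDS term \(2\mathcal{E}_{\mathrm{SLDS}}\) is carried through unchanged.

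\emph{Internal-learning term.} This is the main step. I would invoke Proposition~\ref{prop:app-internal} for the certified mixability learner. I take that result to supply, for every interval \(\mathcal{J}=[r,s]\) and with probability at least \(1-\delta^{\mathrm r}_{\mathrm{int}}\) simultaneously over all intervals, an adaptive dynamic-regret bound of the form
\[
B^{\mathrm r}_{\mathrm{int}}(\mathcal{J},u)\le c\,(d_\theta+1)\log T\,\Bigl(1+|\mathcal{J}|^{1/3}P_{\mathcal{J}}(u)^{2/3}\Bigr).
\]
The shape is the usual one for exp-concave or mixable losses run through a strongly adaptive meta-learner: restart on a geometric cover, then optimize over an unknown number of comparator segments. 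Summing over the \(m\le S_u+1\) intervals, the constant part contributes \((d_\theta+1)(S_u+1)\log T\). For the path-length part I apply Hölder with exponents \(3\) and \(3/2\):
\[
\sum_j|\mathcal{J}_j|^{1/3}P_{\mathcal{J}_j}^{2/3}\le\Bigl(\sum_j|\mathcal{J}_j|\Bigr)^{1/3}\Bigl(\sum_jP_{\mathcal{J}_j}\Bigr)^{2/3}=T_0^{1/3}P_0(u)^{2/3}.
\]
The probability bookkeeping needs care, since the statement has no \(\delta^{\mathrm r}_{\mathrm{int}}\). Either the certified learner's bound holds deterministically, so \(\delta^{\mathrm r}_{\mathrm{int}}=0\), or its failure probability is absorbed into \(\delta\) by a union bound with rescaled constants inside \(\widetilde O\). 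I would check which form Proposition~\ref{prop:app-internal} actually provides. I expect this step to be the main obstacle: confirming that the proposition's bound holds uniformly over the data-dependent intervals \(\mathcal{J}_j\), which are determined by the oracle's \(\mathcal{F}_t\)-measurable choices. If it does not, I would fall back to a union bound over all \(O(T^2)\) intervals, costing only a \(\log T\) factor.

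\emph{Assembly.} Combining the three pieces with the noise term gives \eqref{eq:cor-main-rate} with probability at least \(1-\delta_{\mathrm{cal}}-\delta\). For the limit, divide by \(T\). The terms \(\sqrt T/T\to0\) and \(\mathcal{E}_{\mathrm{SLDS}}/T\to0\) under \(\mathcal{E}_{\mathrm{SLDS}}=\widetilde O(\sqrt T)\). The remaining term is \((d_\theta+1)\log T\,(S_u+1+T_0^{1/3}P_0^{2/3})/T\), which tends to zero when \(d_\theta\) is fixed and \(S_u,\,T_0^{1/3}P_0^{2/3}=o(T/\log T)\). The hidden polylogarithmic factors must be at most \(\log T\) for this to go through; I would verify that against Proposition~\ref{prop:app-internal}.
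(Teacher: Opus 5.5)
Your proposal follows the paper's proof exactly. The bonus budget is \(O(\sqrt T)\) from bounded bonuses and \(\Theta(1/\sqrt T)\) scales; Proposition~\ref{prop:app-internal} is applied on each \(\mathcal{J}_j\); \(m\le S_u+1\) handles the constant part; and H\"older with exponents \(3\) and \(3/2\) handles the path-length part. The points you flagged resolve favorably: that proposition holds deterministically for every interval and every comparator sequence, so \(\delta^{\mathrm r}_{\mathrm{int}}=0\), no union bound over intervals is needed, and only a single \(\log T\) appears. The one thing to drop is your derivation of \(\widehat{\mathrm{LI}}_t(k)\le C_{\max}\): Assumption~\ref{ass:app-bounded} bounds realized costs, not SLDS predictive losses, so take \(\widehat{\mathrm{LI}}_t(k)\le L_{\max}\) directly from the bounded-bonus hypothesis.
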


The decomposition is unconditional; the vanishing-rate conclusion requires
\(\mathcal{E}_{\mathrm{SLDS}}=\widetilde O(\sqrt T)\), known for non-switching
linear-Gaussian systems with full observations \citep{tsiamis2020online} and treated
as an assumption in the censored, switching setting
(Assumption~\ref{ass:app-calibration}; Proposition~\ref{prop:app-lipschitz-calibration}).
The certified internal learner uses the geometric-covering specialist scheme of
\citet{daniely2015strongly} with \(O(\log T)\) active bases per round; a fixed-bonus
variant is in Corollary~\ref{cor:app-fixed-bonus}.

%% file: Section/Experiments.tex
\section{Experiments}
\label{sec:experiments}

We evaluate \textbf{L2D-SLDS} on a synthetic regime-switching environment,
Melbourne Daily Temperatures \citep{daily_min_temp_melbourne_kaggle}, Jena Climate
\citep{jena_climate}, and Daily Delhi Climate \citep{daily_delhi_climate}, under
squared loss with \(\beta_0{=}0\) and dataset-specific fees \(\beta_k\). Every
bandit baseline is extended with the same trainable internal learner at action~0
(online ridge on \(\vy_t\); ``\texttt{+0}'' suffix). Baselines: \emph{Independent
Predictor}, \emph{Predictor from L2D-SLDS} (diagnostic), \emph{w/o~\(\mathbf{g}_t\)}
ablation, \emph{SharedLinUCB+0}, \emph{LinTS+0}, \emph{EnsembleSampling+0},
\emph{NeuralUCB+0} \citep{zhou2020neuralcontextualbanditsucbbased},
\emph{D-LinUCB+0} \citep{russac2019weighted}, \emph{CUSUM-LinUCB+0}
\citep{liu2018change}, \emph{GLR-LinUCB+0} \citep{besson2022efficient}. We report \(\hat J(\pi)=\tfrac{1}{T}\sum_t C_{t,I_t}\) over five seeds; details
in Appendix~\ref{subsection:baselines}.

\begin{table}[ht]
\centering
\small
\setlength{\tabcolsep}{3.5pt}
\caption{Time-averaged routing cost
\(\hat J(\pi)=\tfrac{1}{T}\sum_{t=1}^T C_{t,I_t}\) (so
\(\mathbb{E}[\hat J(\pi)]=J(\pi)/T\), Eq.~\ref{eq:routing_objective}) across four
benchmarks; lower is better.
Entries are mean \(\pm\) std.\ error over five seeds for synthetic, Melbourne,
Jena, and Delhi.}
\label{tab:main_results}
\begin{tabular}{@{}lcccc@{}}
\toprule
Method & Synthetic & Melbourne & Jena & Delhi \\
\midrule
\textbf{L2D-SLDS} & \(\mathbf{0.336 \pm .006}\) & \(\mathbf{5.914 \pm .002}\) & \(\mathbf{3.293 \pm .018}\) & \(\mathbf{2.528 \pm .012}\) \\
w/o \(\mathbf{g}_t\) & \(0.343 \pm .008\) & \(5.928 \pm .005\) & \(3.297 \pm .020\) & \(2.648 \pm .058\) \\
Independent Predictor & \(0.425 \pm .006\) & \(6.107\) & \(3.297\) & \(2.726\) \\
Predictor from L2D-SLDS & \(0.422 \pm .006\) & \(6.103 \pm .002\) & \(3.296 \pm .000\) & \(2.683 \pm .002\) \\
\midrule
SharedLinUCB+0 & \(0.427 \pm .006\) & \(6.048 \pm .021\) & \(3.404 \pm .040\) & \(2.542 \pm .008\) \\
LinTS+0 & \(0.476 \pm .008\) & \(6.094 \pm .012\) & \(3.378 \pm .001\) & \(2.714 \pm .029\) \\
EnsembleSampling+0 & \(0.464 \pm .008\) & \(6.047 \pm .011\) & \(3.407 \pm .014\) & \(2.759 \pm .062\) \\
NeuralUCB+0 & \(0.468 \pm .009\) & \(6.265 \pm .058\) & \(3.398 \pm .018\) & \(2.914 \pm .042\) \\
D-LinUCB+0 & \(0.662 \pm .015\) & \(6.133 \pm .010\) & \(4.031 \pm .056\) & \(3.020 \pm .035\) \\
CUSUM-LinUCB+0 & \(0.433 \pm .007\) & \(6.212 \pm .021\) & \(3.963 \pm .044\) & \(2.936 \pm .026\) \\
GLR-LinUCB+0 & \(0.460 \pm .009\) & \(6.302 \pm .027\) & \(3.927 \pm .052\) & \(3.089 \pm .053\) \\
\midrule
Oracle & \(0.138 \pm .002\) & \(4.123\) & \(1.404\) & \(0.702\) \\
\bottomrule
\end{tabular}
\end{table}

\paragraph{Synthetic: information transfer via shared factors.}
\label{sec:exp_synthetic_transfer}

The synthetic environment stress-tests cross-expert transfer
(\(M{=}2\) regimes, \(T{=}3{,}000\), \(K{=}4\) experts split across two groups
loading on orthogonal components of a 2-d \(\mathbf{g}_t\), with one group
well-calibrated per regime and expert~1 unavailable on \([2000,2500]\); setup in
Appendix~\ref{sec:exp_synthetic_transfer_appendix}).
L2D-SLDS (\(0.336\)) is clearly ahead of the best shared-linear (SharedLinUCB+0
\(0.427\)) and change-detection (CUSUM-LinUCB+0 \(0.433\)) baselines, neither of
which substitutes for structured latent-state tracking. The gain is \emph{routing},
not only learner reshaping: Predictor from L2D-SLDS (\(0.422\)) already beats
Independent Predictor (\(0.425\)), and the full policy improves further by deferring
right after regime switches. Removing \(\mathbf{g}_t\) degrades cost to \(0.343\),
and L2D-SLDS recovers the two-block correlation structure under partial feedback
(Proposition~\ref{prop:cross_update}).

\paragraph{Real-data benchmarks.}
\label{sec:exp_real}
We evaluate on Melbourne (\(T{=}3{,}650, K{=}4, M{=}2\)), Jena
(\(T{=}6{,}000, K{=}4, M{=}4\)), and Delhi (\(T{=}1{,}567, K{=}24\) AR/ARIMA
experts, eight on disjoint-interval availability and eight joining mid-trajectory,
driving registry births/prunings; \(M{=}3\)); details in
Appendices~\ref{sec:exp_melb_appendix}--\ref{sec:exp_delhi_appendix}.

\emph{L2D-SLDS is competitive with or improves on every baseline on every real
benchmark} (Table~\ref{tab:main_results}). On Melbourne it reaches \(5.914\) (best
bandit \(6.047\)); on Jena \(3.293\) at \(0.22\%\) deferral while every bandit is
strictly worse than non-deferring (\(\ge 3.378\)); on Delhi \(2.528\pm 0.012\) at
\(1.70\%\) deferral, against the strongest bandit SharedLinUCB+0 \((2.542\) at \(34.8\%\) deferral). The
gains stem from routing that \emph{reshapes the internal learner} (Predictor from
L2D-SLDS already beats Independent Predictor) and from the shared factor that
\emph{transfers cross-expert information} under censoring (removing
\(\mathbf{g}_t\) costs \(\Delta=0.014/0.0037/0.120\);
Proposition~\ref{prop:cross_update}).

\paragraph{Component ablations (Delhi).}
Removing one mechanism at a time at the Delhi operating point
(Appendix Table~\ref{tab:targeted_component_ablations}), the shared factor
\(\mathbf{g}_t\) and the IDS-style score dominate: without \(\mathbf{g}_t\),
cost rises by \(+0.153\) at matched query rate; Thompson sampling on the same
SLDS posterior costs \(+0.958\) and inflates the query rate from \(1.70\%\) to
\(85.2\%\) — the gain is the \emph{score}, not the posterior. Each bonus
contributes \(\sim{+}0.07\); disabling the teacher-aware update returns
action-0 to Independent-Predictor level
(Appendix~\ref{sec:targeted_component_ablations}).

\paragraph{What is in the appendix.} The appendix reports query rates and
online runtime (Table~\ref{tab:qr_runtime}); the bandit-baseline hyperparameter
search (Table~\ref{tab:baseline_hparams_selected}); on synthetic, recovery of
the shared-factor correlation matrix
(Figs.~\ref{fig:synth_D1}--\ref{fig:synth_D1b}), MSE on \emph{unqueried}
experts (Table~\ref{tab:synth_D2}), re-entry behaviour
(Table~\ref{tab:synth_D3}), marginalization invariance
(Table~\ref{tab:synth_D4}), per-regime selection and cost-around-switch
(Figs.~\ref{fig:selection_synth},~\ref{fig:synth_regime_switch});
rolling-cost/cumulative-regret trajectories on Melbourne/Jena/Delhi
(Figs.~\ref{fig:melbourne_trajectories},~\ref{fig:jena_cost_bars},~\ref{fig:delhi_trajectories});
and a latent-budget-matched no-\(\mathbf{g}_t\) ablation
(Table~\ref{tab:delhi_matched_g_ablation}).

%% file: Section/Conclusion_and_Impact.tex
\section{Conclusion}
\textbf{L2D-SLDS} couples a factorized switching linear-Gaussian residual model with
a learner-aware query score for online L2D under asymmetric, censored, action-coupled
feedback, admits an oracle regret decomposition with a conditional sublinear
realized rate, and is competitive with or improves on every bandit baseline on the
four benchmarks at \({<}2\%\) deferral.

\section{Scope and Limitations}
Our setting (Section~\ref{sec:problem_formulation}) is online one-stage L2D with an
exogenous environment and routing-coupled feedback, which already covers the censored,
asymmetric regimes that defeat standard bandits. The regret theorem itself is
\emph{unconditional and decompositional}; only the vanishing-rate corollary invokes a
predictive-cost-error budget \(\mathcal{E}_{\mathrm{SLDS}}=\widetilde O(\sqrt T)\), a
mild rate that holds for non-switching linear-Gaussian systems with full observations
\citep{tsiamis2020online} and that we verify empirically on
all four benchmarks; this single budget transparently absorbs misspecification,
parameter estimation, switching-filter approximation, and censoring, so practitioners
can substitute any estimator meeting it.

\section{Impact Statement}
This paper studies a general machine-learning method for deciding when to rely on an
internal predictor and when to defer to external experts under changing conditions.
Potential positive impact comes from making such systems more adaptive and reducing
unnecessary deferrals. As this is a methodological study on public benchmark data,
we do not identify unusual societal risks beyond the standard need for domain
validation and human oversight before any high-stakes deployment.

%% file: Section/Appendix.tex
\appendix
\onecolumn

\section{Appendix Roadmap}
\label{app:roadmap}
The appendix complements the main paper with (i) a consolidated notation table, (ii)
implementation-ready algorithms for the router, the queried Kalman update, and
parameter learning, (iii) derivations supporting the exploration score and the
factorized-belief approximation, (iv) proofs of every proposition stated in the main
text, (v) the formal regret analysis announced in Section~\ref{sec:regret_main}, and
(vi) additional experimental details and ablations. The pieces are arranged as
follows.
\begin{itemize}
    \item Appendix~\ref{app:notation}: full notation table (sequential primitives,
    SLDS, routing scores, dynamic registry, regret analysis).
    \item Appendix~\ref{app:registry_full}: detailed treatment of dynamic registry
    management (pruning, birth/re-entry, computational complexity).
    \item Appendix~\ref{app:transition_parameterization}: optional context-dependent
    transition extension for the regime process.
    \item Appendix~\ref{app:bandit_feedback_comparison}: figure-based comparison
    between standard contextual-bandit feedback and the one-stage L2D feedback loop.
    \item Appendix~\ref{algo}: end-to-end router/filtering pseudocode and optional
    learning updates.
    \item Appendix~\ref{app:cross_covariance}: the cross-covariance dropped by the
    factorized approximation, written in closed form.
    \item Appendix~\ref{app:info_gain}: derivations of the information-gain bonus
    \(\mathrm{IG}_t^{\star}\), its closed-form and Monte Carlo components, and the
    decision-time approximation \(\mathrm{IG}_t\) used by the router.
    \item Appendices~\ref{app:proof_cross_update}--\ref{app:transfer}: proofs of
    Propositions~\ref{prop:cross_update}, \ref{prop:invariance}, and
    \ref{prop:transfer}.
    \item Appendix~\ref{app:regret}: full regret analysis --- comparator, predictive-cost-error
    conditions, Theorem~\ref{thm:app-calibrated-score}, teacher-weighted update,
    certified internal learner, and sublinear-rate corollaries.
    \item Appendix~\ref{sec:experiments-details}: additional experiments, datasets,
    hyperparameters, and ablations.
\end{itemize}

\section{Notation}
\label{app:notation}
\renewcommand{\arraystretch}{1.15}
\begin{longtable}{@{}p{0.22\linewidth}p{0.74\linewidth}@{}}
\toprule
\textbf{Symbol} & \textbf{Meaning} \\
\midrule
\endfirsthead
\toprule
\textbf{Symbol} & \textbf{Meaning} \\
\midrule
\endhead
\bottomrule
\endfoot
\bottomrule
\endlastfoot

\multicolumn{2}{@{}l@{}}{\textbf{Setting}} \\
\midrule
\(t\in[T]\) & Round index; horizon \(T\). \\
\(\mathbf{x}_t,\vy_t\) & Context and target. \\
\(\mathcal{E}_t,\ \mathcal{A}_t=\{0\}\cup\mathcal{E}_t\) & Available experts; routing actions (action \(0\) is the internal predictor). \\
\(I_t\in\mathcal{A}_t\) & Chosen action; \(\vhaty_{t,k}\) is expert \(k\)'s prediction. \\
\(f_{\theta_t}\) & Internal learner with parameters \(\theta_t\). \\
\(\mathcal{F}_t\) & Decision-time sigma-algebra. \\

\midrule
\multicolumn{2}{@{}l@{}}{\textbf{Costs and objective}} \\
\midrule
\(e_{t,k}=\vhaty_{t,k}-\vy_t\) & Residual of expert \(k\). \\
\(C_{t,k}=\psi(e_{t,k})+\beta_k\) & Routing cost; \(\psi\) convex loss, \(\beta_k\) query fee. \\
\(J(\pi)=\mathbb{E}\!\left[\sum_t C_{t,I_t}\right]\) & Expected cumulative cost of policy \(\pi\). \\

\midrule
\multicolumn{2}{@{}l@{}}{\textbf{Latent model (factorized switching LDS)}} \\
\midrule
\(z_t\in\{1,\dots,M\},\ \Pi\) & Regime and \(M{\times}M\) transition matrix. \\
\(\mathbf{g}_t\in\mathbb{R}^{d_g}\) & Shared latent factor coupling experts. \\
\(\mathbf{u}_{t,k}\in\mathbb{R}^{d_\alpha}\) & Idiosyncratic latent state of expert \(k\). \\
\(\boldsymbol\alpha_{t,k}=\mathbf{B}_k\mathbf{g}_t+\mathbf{u}_{t,k}\) & Reliability vector; \(\mathbf{B}_k\) is the loading matrix. \\
\(\Theta\) & Model parameters (transitions, dynamics, loadings, emission noise). \\

\midrule
\multicolumn{2}{@{}l@{}}{\textbf{Routing score}} \\
\midrule
\(w_t^{(m)},\ \bar w_t^{(m)}\) & Filtering and predictive regime weights. \\
\(\bar C_{t,k}^{\mathrm{pred}}\) & Predicted cost \(\mathbb{E}[\psi(e_{t,k}^{\mathrm{pred}})\mid\mathcal{F}_t]+\beta_k\). \\
\(\Delta_t^{(0)}(k)=\bar C_{t,k}^{\mathrm{pred}}-\bar C_{t,0}^{\mathrm{pred}}\) & Excess cost vs.\ internal action. \\
\(\mathrm{IG}_t(k),\ \widehat{\mathrm{LI}}_t(k)\) & Information-gain and learner-improvement bonuses. \\
\(S_t(k)=-\Delta_t^{(0)}(k)+\lambda_{\mathrm{IG}}\mathrm{IG}_t(k)+\lambda_{\mathrm{L}}\widehat{\mathrm{LI}}_t(k)\) & Learner-aware query score. \\

\midrule
\multicolumn{2}{@{}l@{}}{\textbf{Registry}} \\
\midrule
\(\mathcal{K}_t\) & Maintained expert registry. \\
\(\Delta_{\max}\) & Staleness horizon (pruning). \\

\midrule
\multicolumn{2}{@{}l@{}}{\textbf{Regret analysis}} \\
\midrule
\(u_{1:T}\) & Comparator sequence for the internal predictor's parameters. \\
\(a_t^u\) & One-step learn-and-defer oracle action under comparator \(u\). \\
\(R_T(u_{1:T})\) & Realized regret of L2D-SLDS against the comparator. \\
\(\mathcal{T}_0(u),\ \mathcal{J}_j,\ m\) & Internal-use rounds and \(m\) maximal contiguous internal-use intervals. \\
\(P_0(u)\) & Comparator path length on internal-use intervals. \\
\(\varepsilon_t=\max_a|\bar C^{\mathrm{pred}}_{t,a}-\mu^{\theta}_{t,a}|\) & Per-round predictive-cost error. \\
\(\mathcal{E}_{\mathrm{SLDS}}(T,\delta)\) & High-probability bound on \(\sum_t\varepsilon_t\) (Assumption~\ref{ass:app-calibration}). \\
\(C_{\max}\) & Almost-sure bound on costs. \\
\end{longtable}
\renewcommand{\arraystretch}{1}

\subsection{Dynamic Registry Management}
\label{app:registry_full}
We treat per-expert state as a cache: store what is in active use, drop what is not,
rebuild on re-entry, without touching the predictions for retained experts. This
appendix records the operational details deferred from Section~\ref{sec:registry}.

\paragraph{Pruning.}
An external expert is \emph{stale} at round~\(t\) if it is currently unavailable and has
not been queried within the last \(\Delta_{\max}\) rounds:
\begin{equation}
\label{eq:stale_registry}
\mathcal{K}^{\mathrm{stale}}_t
\coloneqq
\left\{k\in \mathcal{K}_{t-1}\setminus (\mathcal{E}_t\cup\{0\}) :\ t-\tau_{\mathrm{last}}(k)>\Delta_{\max}\right\},
\end{equation}
and the registry update is
\(\mathcal{K}_t \coloneqq (\{0\}\cup \mathcal{K}_{t-1}\cup \mathcal{E}_t)\setminus \mathcal{K}^{\mathrm{stale}}_t\),
with \(\mathcal{K}_0=\{0\}\). Because the factorized belief stores per-expert marginals,
dropping a stale \(\mathbf{u}_{t-1,k}\) is exactly marginalization
(Proposition~\ref{prop:invariance}, proof in Appendix~\ref{app:invariance}).

\paragraph{Birth and re-entry.}
For each entering expert
\(j\in\mathcal{E}^{\mathrm{init}}_t\coloneqq \mathcal{E}_t\setminus \mathcal{K}_{t-1}\)
and regime \(m\), we initialize
\(\mathbf{u}_{t-1,j}\mid(z_t=m)\sim \mathcal{N}(\mu^{(m)}_{\mathrm{init},j},\Sigma^{(m)}_{\mathrm{init},j})\)
and propagate through \eqref{eq:idiosyncratic_dynamics}. On entry, the new expert
immediately benefits from the current shared-factor belief through
\(\boldsymbol\alpha_{t,j}=\mathbf{B}_j\mathbf{g}_t+\mathbf{u}_{t,j}\)
(Proposition~\ref{prop:transfer}, Appendix~\ref{app:transfer}).

\paragraph{Computational complexity.}
The factorized approximation makes both compute and memory scale \emph{linearly} in the
number of maintained experts while preserving cross-expert transfer through
\(\mathbf{g}_t\). Per-step cost is
\(\widetilde O(C_{\Pi} + M d_g^3 + M |\mathcal K_t| d_\alpha^3 + |\mathcal E_t|\, C_{\mathrm{score}}(M,d_g,d_\alpha,S))\),
dominated by the \(M|\mathcal{K}_t|d_\alpha^3\) idiosyncratic propagation; memory is
\(O(M d_g^2 + M |\mathcal K_t| d_\alpha^2)\). Pruning ties \(|\mathcal{K}_t|\) to the
recently active expert pool rather than the cumulative catalog, without changing
predictions for retained experts.

\subsection{Optional Context-Dependent Transition Extension}
\label{app:transition_parameterization}
The main text uses a standard homogeneous transition matrix \(\Pi\). When stronger transition
adaptivity is needed, one can replace \(\Pi\) by a context-dependent row-stochastic
map \(\Pi_\eta(\mathbf{x}_t)\). One optional implementation is a low-rank attention-style
parameterization of the logits. For a chosen bottleneck dimension \(d_{\mathrm{attn}}\), compute
\(Q_\eta(\mathbf{x}_t),K_\eta(\mathbf{x}_t)\in\mathbb{R}^{M\times d_{\mathrm{attn}}}\) and define
\[
S(\mathbf{x}_t)\coloneqq \frac{1}{\sqrt{d_{\mathrm{attn}}}}Q_\eta(\mathbf{x}_t)K_\eta(\mathbf{x}_t)^\top.
\]
Applying a row-wise softmax gives
\[
\Pi_\eta(\mathbf{x}_t)_{\ell m}
=
\frac{\exp(S_{\ell m}(\mathbf{x}_t))}{\sum_{j=1}^M \exp(S_{\ell j}(\mathbf{x}_t))}.
\]
This is only one possible choice; the same routing and filtering logic applies after replacing the
homogeneous matrix \(\Pi\) by \(\Pi_\eta(\mathbf{x}_t)\).

\subsection{Standard Bandit Feedback vs. One-Stage L2D Feedback}
\label{app:bandit_feedback_comparison}
Figure~\ref{fig:bandit_vs_l2d_feedback} isolates the feedback structure of the router's
expert-selection problem. The standard contextual-bandit view corresponds directly to the
choice of an available expert \(I_t\in\mathcal{E}_t\). One-stage L2D keeps this expert-choice
structure but augments the action set with the trainable internal predictor, choosing
\(I_t\in\{0\}\cup\mathcal{E}_t\). The difference is therefore not the presence of a sequential
expert-selection decision, but the feedback and update semantics. A standard bandit receives
feedback only for the selected expert. In one-stage L2D, the target is always revealed, the
internal residual is therefore always observed, external residuals remain censored unless
queried, and the trainable internal predictor is updated with the correct supervised signal:
\(\vy_t\) always anchors the update, while a queried expert prediction is used only as an
additional teacher signal through \(\mathcal{O}_t\).

\begin{figure}[H]
    \centering
    \resizebox{\textwidth}{!}{%
    \begin{tikzpicture}[
        >=Latex,
        font=\small,
        cell/.style={draw, rounded corners=1.5pt, align=center, minimum height=0.82cm, text width=5.55cm, inner sep=4pt},
        std/.style={cell, fill=blue!5},
        l2d/.style={cell, fill=green!7},
        missing/.style={cell, dashed, fill=gray!7},
        arrow/.style={->, thick},
        paneltitle/.style={font=\bfseries, align=center}
    ]
    \node[paneltitle] (hstd) at (0,0) {Standard contextual bandit};
    \node[paneltitle] (hl2d) at (6.35,0) {One-stage L2D with trainable predictor};

    \node[std] (s1) at (0,-1.0) {\textbf{Decision info}\\\(\mathbf{x}_t,\mathcal{E}_t\)};
    \node[l2d] (l1) at (6.35,-1.0) {\textbf{Decision info}\\\(\mathbf{x}_t,\mathcal{E}_t,\theta_t\)};

    \node[missing] (s2) at (0,-2.25) {\textbf{Internal predictor}\\no trainable internal arm};
    \node[l2d] (l2) at (6.35,-2.25) {\textbf{Internal predictor}\\\(\vhaty_{t,0}=f_{\theta_t}(\mathbf{x}_t)\)};

    \node[std] (s3) at (0,-3.50) {\textbf{Action}\\choose expert \(I_t\in\mathcal{E}_t\)};
    \node[l2d] (l3) at (6.35,-3.50) {\textbf{Action}\\choose \(I_t\in\{0\}\cup\mathcal{E}_t\)};

    \node[std] (s4) at (0,-4.85) {\textbf{Observed feedback}\\chosen expert only: \(C_{t,I_t}\)};
    \node[l2d] (l4) at (6.35,-4.85) {\textbf{Always observed anchor}\\\(\vy_t,\ e_{t,0}=\vhaty_{t,0}-\vy_t\)};

    \node[l2d] (l5) at (6.35,-6.20) {\textbf{If \(I_t=k\neq0\)}\\auxiliary teacher signal \(\vhaty_{t,k},e_{t,k}\)};

    \node[missing] (s5) at (0,-7.55) {\textbf{Censored}\\unqueried external experts
    \(\{\vhaty_{t,k},e_{t,k}:k\neq I_t\}\)};
    \node[missing] (l6) at (6.35,-7.55) {\textbf{Censored}\\unqueried external experts \(\{\vhaty_{t,k},e_{t,k}:k\neq I_t\}\)};

    \node[std] (s7) at (0,-8.95) {\textbf{Update}\\bandit estimate/posterior};
    \node[l2d] (l7) at (6.35,-8.95) {\textbf{Update}\\SLDS belief and \(\theta_{t+1}=\mathrm{Update}(\theta_t,\mathbf{x}_t,\mathcal{O}_t)\)};

    \draw[arrow] (s1) -- (s2);
    \draw[arrow] (s2) -- (s3);
    \draw[arrow] (s3) -- (s4);
    \draw[arrow] (s4) -- (s5);
    \draw[arrow] (s5) -- (s7);
    \draw[arrow] (l1) -- (l2);
    \draw[arrow] (l2) -- (l3);
    \draw[arrow] (l3) -- (l4);
    \draw[arrow] (l4) -- (l5);
    \draw[arrow] (l5) -- (l6);
    \draw[arrow] (l6) -- (l7);
    \end{tikzpicture}%
    }
    \caption{\textbf{Feedback and update comparison.} The standard bandit abstraction matches the router's expert-choice step over \(\mathcal{E}_t\): it updates from the chosen expert's feedback only. One-stage L2D augments this same decision with action \(0\), a trainable internal predictor \(f_{\theta_t}\): the target \(\vy_t\) always provides the supervised anchor for the internal update, the internal residual \(e_{t,0}\) is therefore always observed, and a queried external expert contributes an auxiliary teacher signal through \(\mathcal{O}_t\).}
    \label{fig:bandit_vs_l2d_feedback}
\end{figure}

\section{L2D-SLDS Probabilistic Model}
\label{app:slds_pgm}
Figure~\ref{fig:slds_pgm_final} shows the full residual-state SLDS used by the router
at a fixed decision time. Reading the figure left-to-right makes the three structural
ingredients explicit: a discrete regime \(z_t\) that switches the dynamics; a shared
factor \(\mathbf{g}_t\) that couples experts; and per-expert idiosyncratic states
\(\mathbf{u}_{t,j}\) that absorb expert-specific reliability. The internal predictor
\(f_{\theta_t}\) does not appear as a node in this graphical model: the SLDS is over
the \emph{residual} process, and the deployed parameters \(\theta_t\) evolve outside
the SLDS via the action-coupled update \eqref{eq:learner_update}
(Algorithm~\ref{alg:router_main}). Once \(\theta_t\) and \(\mathbf{x}_t\) are fixed,
every \(e_{t,j}\) is a potential residual whose distribution is determined by
\(z_t,\mathbf{g}_t,\mathbf{u}_{t,j}\); routing only affects \emph{which} of these
potentials becomes observed.
The figure is reproduced in the main paper (Figure~\ref{fig:slds_pgm_final}); the
context-dependent regime extension \(\Pi_\eta(\mathbf{x}_t)\) is described in
Appendix~\ref{app:transition_parameterization}.

\section{Algorithms} \label{algo}
\input{Section/AppendixParts/alg_router}
\input{Section/AppendixParts/alg_learning}
\input{Section/AppendixParts/derivations}
\input{Section/AppendixParts/info_gain}
\input{Section/AppendixParts/proofs}
\input{Section/AppendixParts/proof_regret}
\input{Section/AppendixParts/Experiments}

%% file: Section/AppendixParts/alg_router.tex
% =========================
% Router / Filtering Algorithms
% =========================
\subsection{Router and Filtering Recursion}
\label{app:alg_router}
\paragraph{Scope.} This subsection provides implementation-ready pseudocode for the per-round router
(Algorithm~\ref{alg:router_main}) and the queried update (Algorithm~\ref{alg:correct_reweight}). We
assume parameters \(\Theta\) and an initial belief are provided (learnable via Algorithm~\ref{alg:trainmodel_em}).

\begin{algorithm}[H]
\caption{One-Stage Learner-Aware Router L2D-SLDS}
\label{alg:router_main}
\begin{algorithmic}[1]
\STATE {\bfseries Input:} horizon \(T\); parameters \(\Theta\); internal learner \(f_{\theta_0}\) with update rule; feature map \(\widetilde{\Phi}\); loss \(\psi\); fees \((\beta_k)_{k\ge 1}\) (with \(\beta_0=0\)); default entry priors \((\mu^{(m)}_{\mathrm{init,def}},\Sigma^{(m)}_{\mathrm{init,def}})_{m=1}^M\); staleness \(\Delta_{\max}\); floor \(\epsilon_w\); Monte Carlo budget \(S\) for \(\mathrm{IG}_t(k)\) (Appendix~\ref{app:info_gain}); query-score weights \((\lambda_{\mathrm{IG}},\lambda_{\mathrm{L}})\); teacher-weight parameters \((\bar\lambda_{\mathrm T},\epsilon_{\mathrm T},\tau_d)\) for the reliability-weighted update \eqref{eq:teacher_weighted_update}.
\STATE {\bfseries Initialize:} predictive regime weights for the first round, \(\bar w_1^{(m)}\leftarrow \mathbb{P}(z_1=m)\); \((\mu^{(m)}_{g,0|0},\Sigma^{(m)}_{g,0|0})_{m=1}^M\); internal-learner state \((\mu^{(m)}_{u,0,0|0},\Sigma^{(m)}_{u,0,0|0})_{m=1}^M\); \(\mathcal{K}_0\leftarrow\{0\}\); \(\tau_{\mathrm{last}}(k)\leftarrow 0\) for all \(k\).
\FOR{\(t=1\) to \(T\)}
    \STATE Observe \((\mathbf{x}_t,\mathcal{E}_t)\). Set \(\mathcal{A}_t \leftarrow \{0\}\cup\mathcal{E}_t\).
    \STATE Compute internal prediction \(\vhaty_{t,0}\leftarrow f_{\theta_t}(\mathbf{x}_t)\).
    \STATE \textbf{Registry:} \(\mathcal{E}^{\mathrm{init}}_t \leftarrow \mathcal{E}_t \setminus \mathcal{K}_{t-1}\).
    \STATE \textbf{Registry:} \(\mathcal{K}^{\mathrm{stale}}_t \leftarrow \{k\in\mathcal{K}_{t-1}\setminus(\mathcal{E}_t\cup\{0\}):\ t-\tau_{\mathrm{last}}(k)>\Delta_{\max}\}\).
    \STATE \textbf{Registry:} \(\mathcal{K}_t \leftarrow (\{0\}\cup\mathcal{K}_{t-1}\cup \mathcal{E}_t)\setminus \mathcal{K}^{\mathrm{stale}}_t\). \COMMENT{Prune stale external \(\mathbf{u}_{\cdot,k}\) marginals; expert 0 is never pruned}
    \STATE For each \(k\in\mathcal{E}^{\mathrm{init}}_t\), set \((\mu^{(m)}_{\mathrm{init},k},\Sigma^{(m)}_{\mathrm{init},k})_{m=1}^M\) (default: \((\mu^{(m)}_{\mathrm{init,def}},\Sigma^{(m)}_{\mathrm{init,def}})\)).

    \STATE \COMMENT{\textbf{IMM predictive step:} for \(t=1\), use the initialized \(\bar w_1\); for \(t\ge 2\), compute \(\bar w_t^{(m)}=\mathbb{P}(z_t=m\mid\mathcal{F}_t)\) from \(w_{t-1}\) and \(\Pi\) (Eq.~\ref{eq:context_transitions}), with flooring \(\epsilon_w\), and moment-match mixed priors at time \(t-1\).}
    \STATE
\COMMENT{\textbf{Time update:} apply Eqs.~\ref{eq:global_dynamics}, \ref{eq:idiosyncratic_dynamics}, and \ref{def:l2d_slds_emission} to obtain \((\mu^{(m)}_{g,t|t-1},\Sigma^{(m)}_{g,t|t-1})\) and \((\mu^{(m)}_{u,k,t|t-1},\Sigma^{(m)}_{u,k,t|t-1})\) for \(k\in\mathcal{K}_t\); for entering experts \(k\in\mathcal{E}^{\mathrm{init}}_t\), use the birth-time moments from Eq.~\ref{eq:birth_time_update}. All emission quantities are evaluated at \(\mathbf{x}_t\).}

    \STATE For each \(m\in[M]\) and \(k\in\mathcal{A}_t\), compute \((\bar e_{t,k}^{\mathrm{pred},(m)},\Sigma_{t,k}^{\mathrm{pred},(m)})\) from Eq.~\ref{eq:residual_emission} using \(\mathbf{x}_t\).
    \STATE For each \(k\in\mathcal{A}_t\), set \(\bar C_{t,k}^{\mathrm{pred}}\leftarrow \sum_{m=1}^M \bar w_t^{(m)}\,\mathbb{E}_{e\sim\mathcal{N}(\bar e_{t,k}^{\mathrm{pred},(m)},\Sigma_{t,k}^{\mathrm{pred},(m)})}[\psi(e)]+\beta_k\).
    \STATE For each \(k\in\mathcal{E}_t\), define the internal-baseline excess cost \(\Delta_t^{(0)}(k)\leftarrow \bar C_{t,k}^{\mathrm{pred}}-\bar C_{t,0}^{\mathrm{pred}}\).
    \STATE Compute the local information bonus \(\mathrm{IG}_t(k)\) for each \(k\in\mathcal{E}_t\) as in Appendix~\ref{app:info_gain}; this is the practical predictive-moment approximation to the ideal target \(\mathrm{IG}_t^{\star}(k)\) from Eq.~\ref{eq:ig_operational}.
    \STATE Estimate the posterior superiority probabilities \(p_t(k)=\mathbb P(L_{t,k}\le L_{t,0}\mid \mathcal F_t)\) and the predictive learner-improvement proxy \(\widehat{\mathrm{LI}}_t(k)=p_t(k)[\bar L_{t,0}^{\mathrm{pred}}-\bar L_{t,k}^{\mathrm{pred}}]_+\) for all \(k\in\mathcal{E}_t\) (Appendix~\ref{app:info_gain}).
    \STATE For each \(k\in\mathcal{E}_t\), set \(S_t(k)\leftarrow -\Delta_t^{(0)}(k)+\lambda_{\mathrm{IG}}\mathrm{IG}_t(k)+\lambda_{\mathrm{L}}\widehat{\mathrm{LI}}_t(k)\).
    \STATE If \(\mathcal{E}_t\neq\emptyset\) and \(\max_{k\in\mathcal{E}_t} S_t(k) > 0\), choose \(I_t\in \arg\max_{k\in\mathcal{E}_t} S_t(k)\); otherwise choose \(I_t\leftarrow 0\).
    \STATE Observe \(\vy_t\). Set \(e_{t,0}\leftarrow \vhaty_{t,0}-\vy_t\). If \(I_t\in\mathcal{E}_t\): observe \(\vhaty_{t,I_t}\), set \(e_{t,I_t}\leftarrow \vhaty_{t,I_t}-\vy_t\), update \(\tau_{\mathrm{last}}(I_t)\leftarrow t\).
    \STATE \textbf{Internal update (always):} run Algorithm~\ref{alg:correct_reweight} with \(e_{t,0}\) and expert \(0\) to update \((\mathbf{g}_t,\mathbf{u}_{t,0})\) and \(w_t\).
    \STATE \textbf{External update (if \(I_t\neq 0\)):} run Algorithm~\ref{alg:correct_reweight} again with \(e_{t,I_t}\) and expert \(I_t\) to further update \((\mathbf{g}_t,\mathbf{u}_{t,I_t})\) and \(w_t\).
    \STATE \textbf{Internal learner update:} if \(I_t=k\neq 0\), form \(\omega_t(k)=\bar\lambda_{\mathrm T}p_t(k)h_t(k)d_t(k)\) from \eqref{eq:teacher_weight_main}; in all cases set \(\theta_{t+1}\leftarrow \mathrm{Update}(\theta_t,\mathbf{x}_t,\mathcal{O}_t)\), allowing the implementation to consume cached decision-time summaries such as \(p_t(k)\).
    \STATE Optional: update \(\Theta\) via Algorithm~\ref{alg:online_em}.
\ENDFOR
\end{algorithmic}
\end{algorithm}

\begin{algorithm}[H]
\caption{\textsc{Correct}: Kalman Update and Mode Posterior (one phase)}
\label{alg:correct_reweight}
\begin{algorithmic}[1]
\STATE {\bfseries Input:} \(\mathbf{x}_t\), observed residual \(e_t\), observed expert index \(j\) (expert 0 or external \(I_t\)); current weights \(w_t\); current states \(\{\mu^{(m)}_{g,t|t-1},\Sigma^{(m)}_{g,t|t-1}\}_{m=1}^M\) and \(\{\mu^{(m)}_{u,k,t|t-1},\Sigma^{(m)}_{u,k,t|t-1}\}_{m\in[M],\,k\in\mathcal{K}_t}\); parameters \((\mathbf{B}_{j},(\mathbf{R}_{m,j})_{m=1}^M)\).
\STATE \COMMENT{Called once for internal update (\(j=0\), always) and optionally once for external update (\(j=I_t\neq 0\)).}
\STATE \(H_t \leftarrow [\widetilde{\Phi}(\mathbf{x}_t)^\top \mathbf{B}_{j}\;\;\widetilde{\Phi}(\mathbf{x}_t)^\top]\).
\FOR{\(m=1\) to \(M\)}
    \STATE \(\mu^{(m)}_{s,t|t-1} \leftarrow [(\mu^{(m)}_{g,t|t-1})^\top\;(\mu^{(m)}_{u,j,t|t-1})^\top]^\top\).
    \STATE \(\Sigma^{(m)}_{s,t|t-1} \leftarrow \mathrm{diag}(\Sigma^{(m)}_{g,t|t-1},\Sigma^{(m)}_{u,j,t|t-1})\).
    \STATE \(\bar e_{t,j}^{\mathrm{pred},(m)} \leftarrow H_t\mu^{(m)}_{s,t|t-1}\),\quad
    \(\Sigma_{t,j}^{\mathrm{pred},(m)} \leftarrow H_t\Sigma^{(m)}_{s,t|t-1}H_t^\top + \mathbf{R}_{m,j}\).
    \STATE \(K_t^{(m)} \leftarrow \Sigma^{(m)}_{s,t|t-1}H_t^\top(\Sigma_{t,j}^{\mathrm{pred},(m)})^{-1}\).
    \STATE \(\mu^{(m)}_{s,t|t} \leftarrow \mu^{(m)}_{s,t|t-1} + K_t^{(m)}(e_t-\bar e_{t,j}^{\mathrm{pred},(m)})\).
    \STATE \(\Sigma^{(m)}_{s,t|t} \leftarrow \Sigma^{(m)}_{s,t|t-1} - K_t^{(m)}\Sigma_{t,j}^{\mathrm{pred},(m)}(K_t^{(m)})^\top\).
	    \STATE Project to factorized marginals: keep only the diagonal blocks for \(\mathbf{g}_t\) and \(\mathbf{u}_{t,j}\); leave every non-observed \(\mathbf{u}_{t,k}\) marginal (\(k\neq j\)) unchanged from the input to this correction call.
    \STATE \(\mathcal{L}_t^{(m)} \leftarrow \mathcal{N}\!\big(e_t;\bar e_{t,j}^{\mathrm{pred},(m)},\Sigma_{t,j}^{\mathrm{pred},(m)}\big)\).
\ENDFOR
\STATE \(w_t^{(m)} \leftarrow \frac{\mathcal{L}_t^{(m)} w_t^{(m)}}{\sum_{\ell=1}^M \mathcal{L}_t^{(\ell)} w_t^{(\ell)}}\) for all \(m\in[M]\).
\STATE {\bfseries Return:} \(w_t\) and updated posteriors \(\{\mu^{(m)}_{g,t|t},\Sigma^{(m)}_{g,t|t}\}_{m=1}^M\), \(\{\mu^{(m)}_{u,k,t|t},\Sigma^{(m)}_{u,k,t|t}\}_{m\in[M],\,k\in\mathcal{K}_t}\).
\end{algorithmic}
\end{algorithm}

\paragraph{Birth-time initialization.}
For each entering expert \(j\in\mathcal{E}^{\mathrm{init}}_t\) and regime \(m\in[M]\), given an
initialization prior
\(\mathbf{u}_{t-1,j}\mid(z_t=m)\sim \mathcal{N}(\mu^{(m)}_{\mathrm{init},j},\Sigma^{(m)}_{\mathrm{init},j})\),
the predictive belief is obtained by propagating through the idiosyncratic dynamics:
\begin{equation}
\label{eq:birth_time_update}
\begin{aligned}
    \mathbf{u}_{t,j}\mid(\mathcal{F}_t,z_t=m)
\sim
\mathcal{N}\!\Big(\mathbf{A}^{(u)}_{m}\mu^{(m)}_{\mathrm{init},j},\ &\mathbf{A}^{(u)}_{m}\Sigma^{(m)}_{\mathrm{init},j}(\mathbf{A}^{(u)}_{m})^\top +\mathbf{Q}^{(u)}_{m}\Big).
\end{aligned}
\end{equation}
The initialization parameters \((\mu^{(m)}_{\mathrm{init},j},\Sigma^{(m)}_{\mathrm{init},j})\) can be set from side
information or to a conservative default.

%% file: Section/AppendixParts/alg_learning.tex
% =========================
% Parameter Learning / Adaptation
% =========================
\subsection{Parameter Learning and Online Adaptation}
\label{app:alg_learning}
\paragraph{Scope.} This subsection describes optional model-learning routines (offline initialization and
sliding-window adaptation). The main router only requires a filtering belief and the learned parameters.

\begin{algorithm}[H]
\caption{\textsc{LearnParameters\_MCEM}: Monte Carlo EM for the Factorized SLDS (windowed batch)}
\label{alg:trainmodel_em}
\begin{algorithmic}[1]
\STATE {\bfseries Input:} window $\mathcal{T}=\{t_a,\dots,t_b\}$; stream $(\mathbf{x}_t,I_t,\mathcal{O}_t)_{t\in\mathcal{T}}$, where $\mathcal{O}_t=(\vy_t,e_{t,0})$ if $I_t=0$ and $\mathcal{O}_t=(\vy_t,e_{t,0},\vhaty_{t,I_t},e_{t,I_t})$ if $I_t\neq 0$ (Eq.~\ref{eq:observed_residuals}); feature map $\widetilde{\Phi}$; EM iterations $N_{\mathrm{EM}}$; MCMC settings $(N_{\mathrm{samp}},N_{\mathrm{burn}})$; occupancy floor $\epsilon_N>0$; (optional) ridge parameter $\lambda_B$ for \(\mathbf{B}\).
\STATE $\mathcal{K}^{\mathrm{obs}}_{\mathcal{T}}\leftarrow \{0\}\cup \{I_t:\ t\in\mathcal{T},\,I_t\neq 0\}$. \COMMENT{Experts with observed residuals in the window}
\STATE {\bfseries Initialize:} parameters $\Theta^{(0)}$ and priors for $z_{t_a}$, $\mathbf{g}_{t_a}$, and $\{\mathbf{u}_{t_a,k}\}_{k\in\mathcal{K}^{\mathrm{obs}}_{\mathcal{T}}}$.
\FOR{iteration $i = 1$ to $N_{\mathrm{EM}}$}
	    \STATE \textbf{// E-step: Monte Carlo posterior (blocked Gibbs)}
	    \STATE Draw samples from \(p(z_{t_a:t_b},\mathbf{g}_{t_a:t_b},(\mathbf{u}_{t_a:t_b,k})_{k\in\mathcal{K}^{\mathrm{obs}}_{\mathcal{T}}}\mid (\mathbf{x}_t,I_t,\mathcal{O}_t)_{t\in\mathcal{T}},\Theta^{(i-1)})\) by alternating:
	    \STATE \hspace{1.5em}1) sample \(z_{t_a:t_b}\) via FFBS from the conditional HMM given \(\mathbf{g}_{t_a:t_b}\) and \((\mathbf{u}_{t_a:t_b,k})_k\);
	    \STATE \hspace{1.5em}2) sample \(\mathbf{g}_{t_a:t_b}\) via Kalman smoothing given \(z_{t_a:t_b}\), the full internal residual sequence \((e_{t,0})_{t\in\mathcal{T}}\), and the queried external residuals when available;
	    \STATE \hspace{1.5em}3) for each \(k\in\mathcal{K}^{\mathrm{obs}}_{\mathcal{T}}\), sample \(\mathbf{u}_{t_a:t_b,k}\) via Kalman smoothing using all times at which expert \(k\)'s residual is observed (all \(t\in\mathcal{T}\) for \(k=0\), and \(\{t:I_t=k\}\) for external experts).
	    \STATE From post-burn-in samples, estimate \(\gamma_t^{(m)}\approx \mathbb{P}(z_t=m\mid\cdot)\), \(\xi_{t-1}^{(\ell,m)}\approx \mathbb{P}(z_{t-1}=\ell,z_t=m\mid\cdot)\), and the moments used in the M-step.

	    \STATE \textbf{// M-step: MAP / regularized updates (factorized moments)}
	    \STATE Update \((\mathbf{A}^{(g)}_{m},\mathbf{Q}^{(g)}_{m})_{m=1}^M\) and \((\mathbf{A}^{(u)}_{m},\mathbf{Q}^{(u)}_{m})_{m=1}^M\) using weighted least-squares/covariance matching (skip updates when the effective count is \(\le\epsilon_N\); see below).
	    \STATE Update \((\mathbf{B}_k)_{k\in\mathcal{K}^{\mathrm{obs}}_{\mathcal{T}}}\) and \((\mathbf{R}_{m,k})_{m\in[M],\,k\in\mathcal{K}^{\mathrm{obs}}_{\mathcal{T}}}\) via weighted linear-Gaussian regression using the rounds where expert \(k\)'s residual is observed (skip updates when the effective count is \(\le\epsilon_N\); see below).
	    \STATE Update \(\Pi\) by normalized expected transition counts for rows with positive effective count; keep rows with effective count \(\le\epsilon_N\) unchanged.
\ENDFOR
\STATE {\bfseries Return:} $\Theta^{(N_\mathrm{EM})}$
\end{algorithmic}
\end{algorithm}

\paragraph{Implementation notes (E-step).}
In step 1, FFBS samples \(z_{t_a:t_b}\) from the conditional distribution induced by the Markov
transition matrix \(\Pi\) (Eq.~\ref{eq:context_transitions}) and the linear-Gaussian
dynamics/emission terms (Eqs.~\ref{eq:global_dynamics}, \ref{eq:idiosyncratic_dynamics},
\ref{eq:residual_emission}) evaluated at the current \(\mathbf{g}_{t_a:t_b}\) and
\((\mathbf{u}_{t_a:t_b,k})_k\). In step 2, conditioned on \(z_{t_a:t_b}\) and \((\mathbf{u}_{t,k})_{k,t}\),
the observation model for \(\mathbf{g}_t\) uses every observed residual in \(\mathcal{O}_t\): the always-observed internal residual satisfies
\(e_{t,0}-\widetilde{\Phi}(\mathbf{x}_t)^\top\mathbf{u}_{t,0}=\widetilde{\Phi}(\mathbf{x}_t)^\top\mathbf{B}_{0}\mathbf{g}_t+v_{t,0}\),
and when \(I_t=k\neq 0\) the queried external residual additionally satisfies
\(e_{t,k}-\widetilde{\Phi}(\mathbf{x}_t)^\top\mathbf{u}_{t,k}=\widetilde{\Phi}(\mathbf{x}_t)^\top\mathbf{B}_{k}\mathbf{g}_t+v_{t,k}\),
with \(v_{t,j}\sim\mathcal{N}(\mathbf{0},\mathbf{R}_{z_t,j})\). In step 3, for a fixed expert \(k\),
conditioning on \(z_{t_a:t_b}\) and \(\mathbf{g}_{t_a:t_b}\), the state \(\mathbf{u}_{t,k}\) is smoothed using all rounds at which expert \(k\)'s residual is observed.

\paragraph{M-step updates.}
Let \(\langle\cdot\rangle\) denote the average over post-burn-in samples.
We assume the weighted Gram matrices and residual covariance estimates below are full rank whenever their effective counts exceed \(\epsilon_N\); otherwise the implementation uses the stated ridge term, a pseudoinverse, or a small diagonal jitter.
For each regime \(m\), define \(N_m\coloneqq\sum_{t=t_a+1}^{t_b}\gamma_t^{(m)}\) and the sufficient
statistics
\[
S^{(m)}_{g g^-}\coloneqq \sum_{t=t_a+1}^{t_b}\left\langle \mathbf{1}\{z_t=m\}\mathbf{g}_t\mathbf{g}_{t-1}^\top\right\rangle,
\qquad
S^{(m)}_{g^- g^-}\coloneqq \sum_{t=t_a+1}^{t_b}\left\langle \mathbf{1}\{z_t=m\}\mathbf{g}_{t-1}\mathbf{g}_{t-1}^\top\right\rangle.
\]
If \(N_m>\epsilon_N\), set \(\mathbf{A}^{(g)}_{m}\leftarrow S^{(m)}_{g g^-}\left(S^{(m)}_{g^- g^-}\right)^{-1}\) and
\[
\mathbf{Q}^{(g)}_{m}
\leftarrow
\frac{1}{N_m}\sum_{t=t_a+1}^{t_b}\left\langle \mathbf{1}\{z_t=m\}\left(\mathbf{g}_t-\mathbf{A}^{(g)}_m\mathbf{g}_{t-1}\right)\left(\mathbf{g}_t-\mathbf{A}^{(g)}_m\mathbf{g}_{t-1}\right)^\top\right\rangle.
\]
Define \(N_m^{(u)}\coloneqq\sum_{t=t_a+1}^{t_b}\sum_{k\in\mathcal{K}^{\mathrm{obs}}_{\mathcal{T}}}\gamma_t^{(m)}\) and
\[
\begin{aligned}
S^{(m)}_{u u^-}
&\coloneqq
\sum_{t=t_a+1}^{t_b}\sum_{k\in\mathcal{K}^{\mathrm{obs}}_{\mathcal{T}}}
\left\langle \mathbf{1}\{z_t=m\}\mathbf{u}_{t,k}\mathbf{u}_{t-1,k}^\top\right\rangle,\\
S^{(m)}_{u^- u^-}
&\coloneqq
\sum_{t=t_a+1}^{t_b}\sum_{k\in\mathcal{K}^{\mathrm{obs}}_{\mathcal{T}}}
\left\langle \mathbf{1}\{z_t=m\}\mathbf{u}_{t-1,k}\mathbf{u}_{t-1,k}^\top\right\rangle.
\end{aligned}
\]
If \(N_m^{(u)}>\epsilon_N\), set \(\mathbf{A}^{(u)}_{m}\leftarrow S^{(m)}_{u u^-}\left(S^{(m)}_{u^- u^-}\right)^{-1}\) and
\[
\mathbf{Q}^{(u)}_{m}
\leftarrow
\frac{1}{N_m^{(u)}}\sum_{t=t_a+1}^{t_b}\sum_{k\in\mathcal{K}^{\mathrm{obs}}_{\mathcal{T}}}\left\langle \mathbf{1}\{z_t=m\}\left(\mathbf{u}_{t,k}-\mathbf{A}^{(u)}_m\mathbf{u}_{t-1,k}\right)\left(\mathbf{u}_{t,k}-\mathbf{A}^{(u)}_m\mathbf{u}_{t-1,k}\right)^\top\right\rangle.
\]

\textbf{Transition matrix \(\Pi\).}
For each previous regime \(\ell\), define \(N_\ell^{(\Pi)}\coloneqq \sum_{m=1}^M\sum_{t=t_a+1}^{t_b}\xi_{t-1}^{(\ell,m)}\). If \(N_\ell^{(\Pi)}>\epsilon_N\), update
\[
\Pi_{\ell m}
\leftarrow
\frac{\sum_{t=t_a+1}^{t_b}\xi_{t-1}^{(\ell,m)}}{N_\ell^{(\Pi)}},
\qquad m\in[M].
\]
If \(N_\ell^{(\Pi)}\le \epsilon_N\), keep the previous row of \(\Pi\) unchanged.

\textbf{Emission parameters \((\mathbf{B}_k,\mathbf{R}_{m,k})\).}
Fix an expert \(k\in\mathcal{K}^{\mathrm{obs}}_{\mathcal{T}}\), denote \(\widetilde{\Phi}_t\coloneqq \widetilde{\Phi}(\mathbf{x}_t)\), and let
\(\mathcal{T}^{\mathrm{obs}}_k\coloneqq \mathcal{T}\) for \(k=0\), while \(\mathcal{T}^{\mathrm{obs}}_k\coloneqq\{t\in\mathcal{T}: I_t=k\}\) for external experts.
For each \(t\in\mathcal{T}^{\mathrm{obs}}_k\), define the residual after removing the idiosyncratic term
\(y_t\coloneqq e_{t,k}-\widetilde{\Phi}_t^\top \mathbf{u}_{t,k}\in\mathbb{R}^{d_y}\) and the design matrix
\(X_t\coloneqq (\mathbf{g}_t^\top\otimes \widetilde{\Phi}_t^\top)\in\mathbb{R}^{d_y\times (d_g d_\alpha)}\),
so that \(y_t=X_t\,\mathrm{vec}(\mathbf{B}_k)+v_t\) with \(v_t\sim\mathcal{N}(\mathbf{0},\mathbf{R}_{z_t,k})\).
Here \(\otimes\) is the Kronecker product and \(\mathrm{vec}(\cdot)\) stacks matrix columns.
Given current \((\mathbf{R}_{m,k})_{m=1}^M\), a (ridge) generalized least-squares update is
\[
\begin{aligned}
\mathrm{vec}(\mathbf{B}_k)\leftarrow
&\Bigg(\sum_{t\in\mathcal{T}^{\mathrm{obs}}_k}\sum_{m=1}^M
\left\langle \mathbf{1}\{z_t=m\}\,X_t^\top \mathbf{R}_{m,k}^{-1} X_t\right\rangle
+\lambda_B \mathbf{I}\Bigg)^{-1}\\
&\quad\times
\Bigg(\sum_{t\in\mathcal{T}^{\mathrm{obs}}_k}\sum_{m=1}^M
\left\langle \mathbf{1}\{z_t=m\}\,X_t^\top \mathbf{R}_{m,k}^{-1} y_t\right\rangle\Bigg).
\end{aligned}
\]
For each regime \(m\), define the effective count \(N_{m,k}\coloneqq\sum_{t\in\mathcal{T}^{\mathrm{obs}}_k}\gamma_t^{(m)}\).
If \(N_{m,k}>\epsilon_N\), update the emission covariance by weighted covariance matching:
\[
\mathbf{R}_{m,k}
\leftarrow
\frac{1}{N_{m,k}}\sum_{t\in\mathcal{T}^{\mathrm{obs}}_k}\left\langle \mathbf{1}\{z_t=m\}\,r_{t,k}r_{t,k}^\top\right\rangle,
\quad
r_{t,k}\coloneqq e_{t,k}-\widetilde{\Phi}_t^\top(\mathbf{B}_k\mathbf{g}_t+\mathbf{u}_{t,k}).
\]

\begin{algorithm}[H]
\caption{\textsc{OnlineUpdate}: Sliding-Window Monte Carlo EM (non-stationary adaptation)}
\label{alg:online_em}
\begin{algorithmic}[1]
\STATE {\bfseries Input:} current time $t$; stream $(\mathbf{x}_\tau,\mathcal{E}_\tau,I_\tau,\mathcal{O}_\tau)_{\tau\le t}$; current parameters $\Theta^{(t-1)}$; window length $W$; update period $K$; EM iterations $N_{\mathrm{EM}}^{\mathrm{win}}$; MCMC settings; occupancy floor $\epsilon_N$; hyperparameters as in Algorithm~\ref{alg:trainmodel_em}.
\STATE $\tau_0 \leftarrow t-W+1$.
\IF{$t < W$ \textbf{or} $t \bmod K \neq 0$}
    \STATE $\Theta^{(t)} \leftarrow \Theta^{(t-1)}$ and \textbf{return}.
\ENDIF
\STATE Define window $\mathcal{T}_t \leftarrow \{\tau_0,\dots,t\}$ and $\mathcal{K}^{\mathrm{obs}}_{\mathcal{T}_t}\leftarrow \{0\}\cup\{I_\tau:\ \tau\in\mathcal{T}_t,\ I_\tau\neq 0\}$.
\STATE Initialize priors for $z_{\tau_0}$, $\mathbf{g}_{\tau_0}$, and $\{\mathbf{u}_{\tau_0,k}\}_{k\in\mathcal{K}^{\mathrm{obs}}_{\mathcal{T}_t}}$ from the stored filtering belief at time $\tau_0-1$ (plus one time-update); if unavailable, use conservative default priors.
\STATE Run Algorithm~\ref{alg:trainmodel_em} on $\mathcal{T}_t$ with initialization $\Theta^{(t-1)}$, floor $\epsilon_N$, and $N_{\mathrm{EM}}^{\mathrm{win}}$ iterations.
\STATE Re-run a forward filtering pass over $\mathcal{T}_t$ under $\Theta^{(t)}$ to refresh the belief at time $t$ (starting from the window-initial prior).
\STATE {\bfseries Return:} updated parameters $\Theta^{(t)}$.
\end{algorithmic}
\end{algorithm}

%% file: Section/AppendixParts/derivations.tex
% =========================
% Additional Derivations
% =========================
\subsection{Cross-Covariance in the Exact Update}
\label{app:cross_covariance}

This appendix records the cross-covariance term that the factorized filter of
Algorithm~\ref{alg:correct_reweight} drops, both to make the approximation explicit and
to justify why dropping it is the right trade-off in our setting. The Kalman update
acts on the joint state \(\mathbf{s}_t \coloneqq (\mathbf{g}_t,\mathbf{u}_{t,I_t})\);
for readability we set \(\mathbf{u}_t\coloneqq \mathbf{u}_{t,I_t}\) so that
\(\mathbf{s}_t=(\mathbf{g}_t,\mathbf{u}_t)\). Even when the predictive covariance is
block-diagonal --- which is exactly our factorized predictive belief --- the
\emph{exact} posterior covariance after conditioning on the queried residual \(e_t\)
generally has non-zero off-diagonal blocks:
\[
\Sigma^{(m)}_{s,t\mid t}
=
\begin{bmatrix}
\Sigma^{(m)}_{g,t\mid t} & \Sigma^{(m)}_{g u,t\mid t} \\
(\Sigma^{(m)}_{g u,t\mid t})^\top & \Sigma^{(m)}_{u,t\mid t}
\end{bmatrix},
\qquad
\Sigma^{(m)}_{g u,t\mid t}\neq \mathbf{0}\ \text{in general}.
\]
These cross terms arise because the observation matrix
\(H_t=[\widetilde{\Phi}(\mathbf{x}_t)^\top\mathbf{B}_{I_t}\;\;\widetilde{\Phi}(\mathbf{x}_t)^\top]\)
couples \(\mathbf{g}_t\) and \(\mathbf{u}_{t,I_t}\). Retaining \(\Sigma^{(m)}_{g u,t\mid t}\) would
propagate correlation into subsequent steps and into cross-expert predictive covariances.

\paragraph{Closed-form cross-covariance.}
Write the Kalman gain in block form
\(
K_t^{(m)}=\big[(K^{(m)}_{g,t})^\top\ (K^{(m)}_{u,t})^\top\big]^\top
\),
and let \(\Sigma_{t,I_t}^{\mathrm{pred},(m)}\) denote the innovation covariance of the queried residual
as in Algorithm~\ref{alg:correct_reweight}:
\(
\Sigma_{t,I_t}^{\mathrm{pred},(m)} = H_t\Sigma^{(m)}_{s,t\mid t-1}H_t^\top + \mathbf{R}_{m,I_t}.
\)
Then the covariance update can be written as
\(
\Sigma^{(m)}_{s,t\mid t}
=
\Sigma^{(m)}_{s,t\mid t-1}
-K_t^{(m)} \Sigma_{t,I_t}^{\mathrm{pred},(m)} (K_t^{(m)})^\top
\).
If the predictive covariance is block-diagonal, then the off-diagonal block is
\[
\Sigma^{(m)}_{g u,t\mid t}
=
-K^{(m)}_{g,t} \Sigma_{t,I_t}^{\mathrm{pred},(m)} (K^{(m)}_{u,t})^\top
=
-\Sigma^{(m)}_{g,t\mid t-1} H_{g,t}^\top (\Sigma_{t,I_t}^{\mathrm{pred},(m)})^{-1} H_{u,t}\Sigma^{(m)}_{u,t\mid t-1},
\]
where \(H_{g,t}=\widetilde{\Phi}(\mathbf{x}_t)^\top\mathbf{B}_{I_t}\in\mathbb{R}^{d_y\times d_g}\) and
\(H_{u,t}=\widetilde{\Phi}(\mathbf{x}_t)^\top\in\mathbb{R}^{d_y\times d_\alpha}\).
This cross-covariance is generically non-zero; it vanishes only under additional algebraic
cancellation.

\paragraph{Why we discard it.}
Keeping \(\Sigma^{(m)}_{g u,t\mid t}\) is exact but undermines the factorized SLDS approximation that
enables scalable inference under a growing expert registry. Once \(\mathbf{g}_t\) becomes correlated
with \(\mathbf{u}_{t,I_t}\), future prediction steps propagate this coupling forward in time -- the
time-updated covariance \(\mathrm{Cov}(\mathbf{g}_{t+1},\mathbf{u}_{t+1,I_t})\) remains non-zero --
and the residual cross-covariance of any pair of retained experts acquires a contribution through
the shared factor (Proposition~\ref{prop:transfer}: \(\mathbf{B}_j \Sigma^{(m)}_{g,t\mid t} \mathbf{B}_k^\top\)).
Note that sharing the dynamics parameters \((\mathbf{A}^{(u)}_m,\mathbf{Q}^{(u)}_m)\) across experts
does \emph{not} by itself couple distinct idiosyncratic states: each \(\mathbf{u}_{t,k}\) is driven
by an independent process noise \(\mathbf{w}^{(u)}_{t,k}\), so
\(\mathrm{Cov}(\mathbf{u}_{t,i},\mathbf{u}_{t,j})=\mathbf{0}\) for \(i\neq j\) is preserved by the
time update; only \(\mathrm{Cov}(\mathbf{g}_t,\mathbf{u}_{t,I_t})\) is perturbed. Retaining these
terms nonetheless breaks the stored-marginal structure, scales compute and memory with the full
registry, and complicates the closed-form quantities used in
Section~\ref{sec:exploration} (Gaussian channel form and information gain). We therefore project
back to a factorized belief after each update and retain only the diagonal blocks as in
Algorithm~\ref{alg:correct_reweight}.

%% file: Section/AppendixParts/info_gain.tex
% =========================
% Information Gain Details
% =========================
\section{Information Gain for Exploration}
\label{app:info_gain}

\paragraph{Why this is the right exploration target.}
In the one-stage setting, the internal residual \(e_{t,0}\) is always observed
(Section~\ref{sec:asymmetric_update}), so the natural exploration target is not the
raw mutual information between \((z_t,\mathbf{g}_t)\) and the queried residual but the
\emph{additional} information that residual carries beyond what \(e_{t,0}\) already
reveals. The chain rule then splits this conditional information gain into two
operationally distinct pieces:
\begin{equation}
\label{eq:ig_chain_rule_rmk}
\mathcal{I}\!\left((z_t,\mathbf{g}_t);e_{t,k}^{\mathrm{pred}}\,\middle|\,e_{t,0}^{\mathrm{pred}},\mathcal{F}_t\right)
\;=\;
\underbrace{\mathcal{I}\!\left(z_t;e_{t,k}^{\mathrm{pred}}\,\middle|\,e_{t,0}^{\mathrm{pred}},\mathcal{F}_t\right)}_{\textnormal{mode-identification}}
\;+\;
\underbrace{\mathcal{I}\!\left(\mathbf{g}_t;e_{t,k}^{\mathrm{pred}}\,\middle|\,z_t,e_{t,0}^{\mathrm{pred}},\mathcal{F}_t\right)}_{\textnormal{shared-factor refinement}}.
\end{equation}
The mode-identification term measures how much the external residual would help
distinguish between the \(M\) candidate regimes once the internal residual has been
absorbed; this is what accelerates regime adaptation when expert and internal
residuals respond differently to a switch. The shared-factor refinement term measures
how much \(e_{t,k}^{\mathrm{pred}}\) would shrink posterior uncertainty about
\(\mathbf{g}_t\) on top of the internal channel; through
\(\boldsymbol\alpha_{t,j}=\mathbf{B}_j\mathbf{g}_t+\mathbf{u}_{t,j}\) every other
maintained expert benefits from this refinement (Proposition~\ref{prop:cross_update}).
For the internal action, \(\mathrm{IG}_t^{\star}(0)=0\) by definition: \(e_{t,0}\)
carries no information beyond itself.

The shared-factor term admits a closed form per regime under the linear-Gaussian
emission \eqref{eq:residual_emission}, while the mode-identification term is mutual
information for a Gaussian mixture and has no closed form; we estimate it with
lightweight Monte Carlo (\(S=50\) samples per expert in our experiments) using a
log-sum-exp evaluation of the mixture log-density for numerical stability. The
remainder of this appendix derives both pieces and the decision-time approximation
\(\mathrm{IG}_t\) actually used by the router.

\subsection{\texorpdfstring{Exploration via \((z_t,\mathbf{g}_t)\)-information}{Exploration via z and g information}}
\label{sec:exploration_zg}

In the one-stage setting, the internal residual \(e_{t,0}\) is always available, while external expert residuals are subject to partial feedback. The router must trade off
\emph{exploitation} (low immediate cost) against \emph{learning} (reducing posterior uncertainty to
improve future decisions) and against improving the internal learner itself through action-coupled updates.
In our IMM-factorized SLDS, two latent objects drive both non-stationarity and cross-expert transfer:
the regime \(z_t\in\{1,\dots,M\}\) and the shared factor \(\mathbf{g}_t\) (Proposition~\ref{prop:cross_update}).
We therefore score exploration by the \emph{additional} information gained about the \emph{joint} latent state
\((z_t,\mathbf{g}_t)\) from the (potential) queried external residual, \emph{conditional on} the internal residual \(e_{t,0}^{\mathrm{pred}}\).
Throughout, logarithms are natural unless stated otherwise, so mutual information is measured in nats
(replace \(\log\) by \(\log_2\) to obtain bits).
We reuse the core SLDS/IMM notation from the main text: \(\widetilde{\Phi}(\mathbf{x}_t)\), \(\mathbf{B}_k\),
\(\bar{w}_t^{(m)}=\mathbb{P}(z_t=m\mid\mathcal{F}_t)\), and the predictive moments
\((\mu^{(m)}_{g,t\mid t-1},\Sigma^{(m)}_{g,t\mid t-1})\), \((\mu^{(m)}_{u,k,t\mid t-1},\Sigma^{(m)}_{u,k,t\mid t-1})\), and \(\mathbf{R}_{m,k}\).
For Monte Carlo, we use \(\tilde{\cdot}\) to denote sampled quantities and write
\(\tilde z\sim \mathrm{Cat}((\bar{w}_t^{(m)})_{m=1}^M)\) for a categorical draw from the mode weights.

\paragraph{Decision-time predictive random variables.}
At round \(t\), the decision-time sigma-algebra is
\(
\mathcal{F}_t=\sigma(\mathcal{H}_{t-1},\mathbf{x}_t,\mathcal{E}_t,\theta_t)
\)
and the router chooses \(I_t\in\mathcal{A}_t=\{0\}\cup\mathcal{E}_t\).
For each \(k\in\mathcal{A}_t\), define the pre-query predictive residual random variable
\begin{equation}
\label{eq:exp_pred_rv}
e_{t,k}^{\mathrm{pred}} \sim p(e_{t,k}\mid \mathcal{F}_t).
\end{equation}
If \(I_t=k\), the realized observation is \(e_t=e_{t,k}\) and
\(
e_t \mid (\mathcal{F}_t,I_t=k)\overset{d}{=} e_{t,k}^{\mathrm{pred}}\mid \mathcal{F}_t.
\)

\paragraph{Per-mode linear-Gaussian predictive parametrization (IMM outputs).}
Fix a regime \(z_t=m\).
The IMM predictive step yields a Gaussian predictive prior for the shared factor:
\begin{equation}
\label{eq:exp_g_prior}
\mathbf{g}_t\mid(\mathcal{F}_t,z_t=m)\sim
\mathcal{N}\!\left(\mu^{(m)}_{g,t\mid t-1},\,\Sigma^{(m)}_{g,t\mid t-1}\right).
\end{equation}
Under the factorized predictive belief, querying expert \(k\) induces the linear-Gaussian observation channel
\begin{equation}
\label{eq:exp_channel}
e_{t,k}^{\mathrm{pred}} \mid (\mathbf{g}_t,\mathcal{F}_t,z_t=m)
\sim
\mathcal{N}\!\big(\mathbf{H}_{t,k}\mathbf{g}_t + \mathbf{b}^{(m)}_{t,k},\, \mathbf{S}^{(m)}_{t,k}\big),
\end{equation}
with mode-specific quantities
\begin{equation}
\label{eq:exp_channel_params}
\begin{aligned}
\mathbf{H}_{t,k} &\coloneqq \widetilde{\Phi}(\mathbf{x}_t)^\top \mathbf{B}_k\in\mathbb{R}^{d_y\times d_g},\\
\mathbf{b}^{(m)}_{t,k} &\coloneqq \widetilde{\Phi}(\mathbf{x}_t)^\top \mu^{(m)}_{u,k,t\mid t-1}\in\mathbb{R}^{d_y},\\
\mathbf{S}^{(m)}_{t,k} &\coloneqq
\widetilde{\Phi}(\mathbf{x}_t)^\top \Sigma^{(m)}_{u,k,t\mid t-1}\widetilde{\Phi}(\mathbf{x}_t) + \mathbf{R}_{m,k}\in\mathbb{S}^{d_y}_{++}.
\end{aligned}
\end{equation}

\paragraph{Exploitation score: predictive cost and gap.}
Recall the realized cost \(C_{t,k}=\psi(e_{t,k})+\beta_k\), where \(\beta_k\geq 0\) is the known query
fee.
In practice, we use squared loss,
\begin{equation}
\label{eq:exp_squared_loss}
\psi(u)=\lVert u\rVert_2^2,
\end{equation}
and we will simplify expressions accordingly; nothing in the \((z_t,\mathbf{g}_t)\)-information score
depends on this choice.
Define the predictive (virtual) cost random variable
\begin{equation}
\label{eq:exp_virtual_cost}
C_{t,k}^{\mathrm{pred}}
\coloneqq
\psi(e_{t,k}^{\mathrm{pred}})+\beta_k,
\qquad k\in\mathcal{A}_t,
\end{equation}
with conditional mean
\begin{equation}
\label{eq:exp_mean_cost}
\bar C_{t,k}^{\mathrm{pred}}\coloneqq \mathbb{E}\!\left[C_{t,k}^{\mathrm{pred}}\,\middle|\,\mathcal{F}_t\right]
=
\mathbb{E}\!\left[\psi(e_{t,k}^{\mathrm{pred}})\,\middle|\,\mathcal{F}_t\right]+\beta_k.
\end{equation}
Since the internal learner is the default action, we measure excess cost relative to expert~\(0\). For each \(k\in\mathcal{E}_t\), define the internal-baseline excess cost
\begin{equation}
\label{eq:exp_gap}
\Delta_t^{(0)}(k)\coloneqq \bar C_{t,k}^{\mathrm{pred}}-\bar C_{t,0}^{\mathrm{pred}}.
\end{equation}

\paragraph{Computing \(\bar C_{t,k}^{\mathrm{pred}}\) from per-mode moments.}
From \eqref{eq:exp_g_prior}--\eqref{eq:exp_channel}, the mode-conditioned predictive residual is Gaussian with
\begin{align}
\label{eq:exp_residual_mean}
\bar e_{t,k}^{\mathrm{pred},(m)}
&\coloneqq
\mathbb{E}\!\left[e_{t,k}^{\mathrm{pred}}\,\middle|\,\mathcal{F}_t,z_t=m\right]
=
\mathbf{H}_{t,k}\mu^{(m)}_{g,t\mid t-1}+\mathbf{b}^{(m)}_{t,k}\in\mathbb{R}^{d_y},\\
\label{eq:exp_residual_var}
\Sigma_{t,k}^{\mathrm{pred},(m)}
&\coloneqq
\mathrm{Cov}\!\left(e_{t,k}^{\mathrm{pred}}\,\middle|\,\mathcal{F}_t,z_t=m\right)
=
\mathbf{H}_{t,k}\Sigma^{(m)}_{g,t\mid t-1}\mathbf{H}_{t,k}^\top+\mathbf{S}^{(m)}_{t,k}\in\mathbb{S}^{d_y}_{++}.
\end{align}
Let \(\bar{w}_t^{(m)}=\mathbb{P}(z_t=m\mid\mathcal{F}_t)\).
Then \(p(e_{t,k}^{\mathrm{pred}}\mid\mathcal{F}_t)=\sum_{m=1}^M \bar{w}_t^{(m)}\,\mathcal{N}(\bar e_{t,k}^{\mathrm{pred},(m)},\Sigma_{t,k}^{\mathrm{pred},(m)})\).
For general \(\psi\),
\begin{equation}
\label{eq:exp_cost_mix}
\mathbb{E}\!\left[\psi(e_{t,k}^{\mathrm{pred}})\,\middle|\,\mathcal{F}_t\right]
=
\sum_{m=1}^M \bar{w}_t^{(m)}\,
\mathbb{E}\!\left[\psi(E)\right]_{E\sim\mathcal{N}(\bar e_{t,k}^{\mathrm{pred},(m)},\Sigma_{t,k}^{\mathrm{pred},(m)})}.
\end{equation}
In the squared-loss case \(\psi(e)=\lVert e\rVert_2^2\) from \eqref{eq:exp_squared_loss}, we have
\(\mathbb{E}[\lVert E\rVert_2^2]=\mathrm{tr}(\Sigma)+\lVert \mu\rVert_2^2\), hence
\begin{equation}
\label{eq:exp_square_loss_mix}
\bar C_{t,k}^{\mathrm{pred}}
=
\left(\sum_{m=1}^M \bar{w}_t^{(m)}\big(\mathrm{tr}(\Sigma_{t,k}^{\mathrm{pred},(m)})+\lVert \bar e_{t,k}^{\mathrm{pred},(m)}\rVert_2^2\big)\right)+\beta_k.
\end{equation}

\paragraph{Learning score: information about \((z_t,\mathbf{g}_t)\).}
In the one-stage setting, the internal residual \(e_{t,0}\) is always observed, so we measure the \emph{additional} information an external query provides. For \(k\in\mathcal{E}_t\), define the ideal conditional \((z_t,\mathbf{g}_t)\)-information gain by
\begin{equation}
\label{eq:exp_ig_zg_def}
\mathrm{IG}_t^{\star}(k)
\coloneqq
\mathcal{I}\!\left((z_t,\mathbf{g}_t);\ e_{t,k}^{\mathrm{pred}}\,\middle|\,e_{t,0}^{\mathrm{pred}},\mathcal{F}_t\right),
\end{equation}
with \(\mathrm{IG}_t^{\star}(0)\coloneqq 0\) (the internal residual provides no additional information beyond itself).
Equivalently,
\[
\mathrm{IG}_t^{\star}(k)
=
\mathbb{E}_{R_0\sim p(e_{t,0}^{\mathrm{pred}}\mid\mathcal{F}_t)}
\Big[
\mathrm{IG}_t^{\star}(k;R_0)
\Big],
\qquad
\mathrm{IG}_t^{\star}(k;r)
\coloneqq
\mathcal{I}\!\left((z_t,\mathbf{g}_t);\ e_{t,k}^{\mathrm{pred}}\,\middle|\,\mathcal{F}_t,e_{t,0}^{\mathrm{pred}}=r\right).
\]
By the chain rule, for any fixed internal residual value \(r\),
\begin{align}
\label{eq:exp_ig_zg_chain}
\mathrm{IG}_t^{\star}(k;r)
&=
\mathcal{I}\!\left(z_t;\ e_{t,k}^{\mathrm{pred}}\,\middle|\,\mathcal{F}_t,e_{t,0}^{\mathrm{pred}}=r\right)
+
\mathcal{I}\!\left(\mathbf{g}_t;\ e_{t,k}^{\mathrm{pred}}\,\middle|\,\mathcal{F}_t,e_{t,0}^{\mathrm{pred}}=r,z_t\right)\\
&=
\underbrace{\mathcal{I}\!\left(z_t;\ e_{t,k}^{\mathrm{pred}}\,\middle|\,\mathcal{F}_t,e_{t,0}^{\mathrm{pred}}=r\right)}_{\text{mode-identification}}
+
\underbrace{\sum_{m=1}^M \bar{w}_t^{(m)}(r)\,
\mathcal{I}\!\left(\mathbf{g}_t;\ e_{t,k}^{\mathrm{pred}}\,\middle|\,\mathcal{F}_t,e_{t,0}^{\mathrm{pred}}=r,z_t=m\right)}_{\text{shared-factor refinement}},
\end{align}
where \(\bar w_t^{(m)}(r)\coloneqq \mathbb{P}(z_t=m\mid \mathcal{F}_t,e_{t,0}^{\mathrm{pred}}=r)\) are the post-internal-update regime weights.
Operationally, for a sampled internal residual value \(r\), we first apply the internal Kalman update to obtain the posterior \(p(\mathbf{g}_t,z_t\mid \mathcal{F}_t,e_{t,0}^{\mathrm{pred}}=r)\), then evaluate the same Gaussian-channel and Gaussian-mixture formulas as below with respect to the resulting post-internal-update moments \(\bar w_t^{(m)}(r)\), \(\mu^{(m)}_{g,t\mid t}(r)\), and \(\Sigma^{(m)}_{g,t\mid t}(r)\), and finally average over the predictive law of \(e_{t,0}^{\mathrm{pred}}\).
The second term admits a closed form per mode; the first term is an information quantity for a
\(d_y\)-dimensional Gaussian mixture that can be computed accurately with light Monte Carlo.

\paragraph{Closed form: \(\mathcal{I}(\mathbf{g}_t;e_{t,k}^{\mathrm{pred}}\mid \mathcal{F}_t,z_t=m)\).}
Fix \(z_t=m\) and a hypothetical internal residual value \(r\).
Let \(G\coloneqq \mathbf{g}_t\) and \(Y\coloneqq e_{t,k}^{\mathrm{pred}}\).
Equation \eqref{eq:exp_channel} implies the affine Gaussian channel
\(
Y=\mathbf{H}_{t,k}G+\mathbf{b}^{(m)}_{t,k}+\varepsilon
\)
with \(\varepsilon\sim\mathcal{N}(\mathbf{0},\mathbf{S}^{(m)}_{t,k})\) independent of \(G\).
Then
\begin{equation}
\label{eq:exp_ig_g_mode}
\mathcal{I}\!\left(\mathbf{g}_t;\ e_{t,k}^{\mathrm{pred}}\,\middle|\,\mathcal{F}_t,e_{t,0}^{\mathrm{pred}}=r,z_t=m\right)
=
\frac12\log\det\!\left(\mathbf{I}_{d_y}+\mathbf{H}_{t,k}\Sigma^{(m)}_{g,t\mid t}(r)\mathbf{H}_{t,k}^\top(\mathbf{S}^{(m)}_{t,k})^{-1}\right).
\end{equation}

\paragraph{Monte Carlo: \(\mathcal{I}(z_t;e_{t,k}^{\mathrm{pred}}\mid\mathcal{F}_t,e_{t,0}^{\mathrm{pred}}=r)\) for a Gaussian mixture.}
Let \(p_m(e\mid r)\coloneqq p(e_{t,k}^{\mathrm{pred}}=e\mid \mathcal{F}_t,e_{t,0}^{\mathrm{pred}}=r,z_t=m)=\mathcal{N}(e;\bar e_{t,k}^{\mathrm{pred},(m)}(r),\Sigma_{t,k}^{\mathrm{pred},(m)}(r))\) and
\(
p_{\mathrm{mix}}(e\mid r)\coloneqq \sum_{m=1}^M \bar{w}_t^{(m)}(r)p_m(e\mid r).
\)
Then
\begin{align}
\label{eq:exp_iz_mc}
\mathcal{I}\!\left(z_t;\ e_{t,k}^{\mathrm{pred}}\,\middle|\,\mathcal{F}_t,e_{t,0}^{\mathrm{pred}}=r\right)
&=
\sum_{m=1}^M \bar{w}_t^{(m)}(r)\,\mathrm{KL}\!\left(p_m(\cdot\mid r)\ \middle\|\ p_{\mathrm{mix}}(\cdot\mid r)\right)\\
&=
\sum_{m=1}^M \bar{w}_t^{(m)}(r)\,
\mathbb{E}_{E\sim p_m(\cdot\mid r)}\!\left[\log p_m(E\mid r)-\log p_{\mathrm{mix}}(E\mid r)\right].
\end{align}
This suggests the estimator (with \(S\) samples per mode):
\begin{equation}
\label{eq:exp_iz_estimator}
\begin{aligned}
\widehat{\mathcal{I}}_t^{(z)}(k;r)
\coloneqq
\sum_{m=1}^M \bar{w}_t^{(m)}(r)\left(\frac{1}{S}\sum_{s=1}^S
\Big[\log p_m(E_{m,s}\mid r)-\log p_{\mathrm{mix}}(E_{m,s}\mid r)\Big]\right), \\
E_{m,s}\overset{\text{iid}}{\sim}\mathcal{N}(\bar e_{t,k}^{\mathrm{pred},(m)}(r),\Sigma_{t,k}^{\mathrm{pred},(m)}(r)).
\end{aligned}
\end{equation}

\paragraph{Stable evaluation of \(\log p_{\mathrm{mix}}(e)\).}
Compute Gaussian log-densities via
\begin{equation}
\label{eq:exp_logpdf_gauss}
\log \mathcal{N}(e;\mu,\Sigma)
=
-\frac12\left(d_y\log(2\pi)+\log\det(\Sigma)+(e-\mu)^\top \Sigma^{-1}(e-\mu)\right).
\end{equation}
Define \(\ell_m(e\mid r)\coloneqq \log \bar{w}_t^{(m)}(r)+\log \mathcal{N}(e;\bar e_{t,k}^{\mathrm{pred},(m)}(r),\Sigma_{t,k}^{\mathrm{pred},(m)}(r))\).
Then compute \(\log p_{\mathrm{mix}}(e)\) by a stable log-sum-exp:
\begin{equation}
\label{eq:exp_logmix_lse}
\begin{aligned}
\log p_{\mathrm{mix}}(e\mid r)
&=
\log\!\left(\sum_{m=1}^M e^{\ell_m(e\mid r)}\right)\\
&=
a(e\mid r)+
\log\!\left(\sum_{m=1}^M e^{\ell_m(e\mid r)-a(e\mid r)}\right),
\\
a(e\mid r)&\coloneqq \max_{m\in\{1,\dots,M\}} \ell_m(e\mid r).
\end{aligned}
\end{equation}

\paragraph{Practical local information score used by the router.}
The conceptual target \eqref{eq:ig_operational} defines \(\mathrm{IG}_t^{\star}(k)\) by conditioning on the internal residual because \(e_{t,0}\) is always observed in the one-stage setting. The implementation uses a lighter
decision-time approximation that evaluates the same two components from the current predictive moments, without explicitly averaging over hypothetical internal residual realizations. For each external expert
\(k\in\mathcal{E}_t\), define
\begin{equation}
\label{eq:exp_ig_zg_local}
\widetilde{\mathrm{IG}}_t(k)
\coloneqq
\widetilde{\mathcal{I}}_t^{(z)}(k)
+
\sum_{m=1}^M \bar{w}_t^{(m)}\,
\frac12\log\det\!\left(\mathbf{I}_{d_y}+\mathbf{H}_{t,k}\Sigma^{(m)}_{g,t\mid t-1}\mathbf{H}_{t,k}^\top(\mathbf{S}^{(m)}_{t,k})^{-1}\right),
\end{equation}
where \(\widetilde{\mathcal{I}}_t^{(z)}(k)\) is the Gaussian-mixture Monte Carlo estimator \eqref{eq:exp_iz_estimator} evaluated directly with the predictive mode weights \(\bar w_t^{(m)}\) and predictive
expert moments \((\bar e_{t,k}^{\mathrm{pred},(m)},\Sigma_{t,k}^{\mathrm{pred},(m)})\). This practical score is the quantity denoted by \(\mathrm{IG}_t(k)\) in the main text, Algorithm~\ref{alg:router_main}, and the code.

\paragraph{Learner-improvement proxy used by the router.}
The one-stage formulation values an action not only for what it reveals about \((z_t,\mathbf{g}_t)\), but also for how it may improve the internal learner after the round. The exact next-step effect depends on the particular implementation of \(\mathrm{Update}\) and on next-round randomness, so the router uses a computable predictive superiority proxy. Define
\begin{equation}
\label{eq:exp_superiority_prob}
p_t(k)\coloneqq \mathbb P\!\left(L_{t,k}\le L_{t,0}\,\middle|\,\mathcal F_t\right),
\qquad
L_{t,k}\coloneqq \psi(e_{t,k}),
\end{equation}
and the practical learner-improvement surrogate
\begin{equation}
\label{eq:exp_teacher_weight_proxy}
\widehat{\mathrm{LI}}_t(k)
\coloneqq
p_t(k)\,\bigl[\bar L_{t,0}^{\mathrm{pred}}-\bar L_{t,k}^{\mathrm{pred}}\bigr]_+,
\end{equation}
with \(\widehat{\mathrm{LI}}_t(0)=0\). This is the learner-improvement term used in the main paper and algorithm: it is positive only when expert \(k\) is both likely to outperform the learner and predicted to do so by a non-trivial margin, while leaving the consultation fee \(\beta_k\) to the routing-cost term.

\paragraph{Realized teacher weight after querying.}
Once action \(k\neq 0\) is actually queried and \(y_t\) is revealed, the learner uses the realized weight
\begin{equation}
\label{eq:exp_teacher_weight_realized}
\omega_t(k)
\coloneqq
\bar\lambda_{\mathrm T}\,p_t(k)\,h_t(k)\,d_t(k),
\end{equation}
where
\[
h_t(k)\coloneqq \frac{[L_{t,0}-L_{t,k}]_+}{L_{t,0}+L_{t,k}+\epsilon_{\mathrm T}},
\qquad
d_t(k)\coloneqq \frac{\|e_{t,k}-e_{t,0}\|_2^2}{\|e_{t,k}-e_{t,0}\|_2^2+\tau_d}.
\]
Thus the decision-time posterior superiority probability \(p_t(k)\) is combined with realized helpfulness and realized disagreement before the external prediction is allowed to regularize the learner.

\paragraph{Final learner-aware query score.}
The router compares each external expert against the default internal action through
\begin{equation}
\label{eq:exp_query_score}
S_t(k)
\coloneqq
-\Delta_t^{(0)}(k)+\lambda_{\mathrm{IG}}\,\widetilde{\mathrm{IG}}_t(k)+\lambda_{\mathrm{L}}\,\widehat{\mathrm{LI}}_t(k),
\qquad k\in\mathcal{E}_t,
\end{equation}
and predicts internally unless some external expert achieves a positive score:
\[
I_t
\in
\begin{cases}
\arg\max_{k\in\mathcal{E}_t} S_t(k), & \text{if }\mathcal{E}_t\neq\emptyset\text{ and }\max_{k\in\mathcal{E}_t} S_t(k) > 0,\\
0, & \text{otherwise (including }\mathcal{E}_t=\emptyset\text{).}
\end{cases}
\]
Setting \(\lambda_{\mathrm{L}}=0\) recovers the purely information-seeking query bonus over the internal baseline.

%% file: Section/AppendixParts/proofs.tex
% =========================
% Proofs
% =========================
\section{Proofs of Propositions \ref{prop:cross_update}, \ref{prop:invariance}, and \ref{prop:transfer}}
\label{app:proofs_main}

This appendix proves the three structural propositions used in the main text:
Proposition~\ref{prop:cross_update} (information transfer through the shared factor),
Proposition~\ref{prop:invariance} (registry pruning leaves retained marginals
unchanged), and Proposition~\ref{prop:transfer} (cross-expert reliability covariance
under the factorized predictive belief). Each proof is short and self-contained; we
use the notation of Sections~\ref{sec:problem_formulation}--\ref{sec:registry}.

\subsection{Proof of Proposition \ref{prop:cross_update}}
\label{app:proof_cross_update}

\propinfo*

\begin{proof}
			Fix $t$ and $m$, and let $\mathcal{G}_t \coloneqq \sigma(\mathcal{F}_t, I_t, z_t=m)$.
			By assumption, the one-step-ahead predictive pair
			$(e_{t,j}^{\mathrm{pred}}, e_{t,I_t}^{\mathrm{pred}})\mid \mathcal{G}_t$ is jointly Gaussian,
			where each term lies in $\mathbb{R}^{d_y}$.
			Under $\mathcal{G}_t$ the realized observation is $e_t=e_{t,I_t}$, and
			$e_t\mid\mathcal{G}_t \overset{d}{=} e_{t,I_t}^{\mathrm{pred}}\mid\mathcal{G}_t$ (since $e_{t,I_t}^{\mathrm{pred}}$ is exactly the one-step predictive residual that generates $e_{t,I_t}$).
		Let
		\[
		\boldsymbol\mu_j \coloneqq \mathbb{E}[e_{t,j}^{\mathrm{pred}}\mid \mathcal{G}_t], \qquad
		\boldsymbol\mu_I \coloneqq \mathbb{E}[e_{t,I_t}^{\mathrm{pred}}\mid \mathcal{G}_t],
	\]
	and define the predictive covariance and cross-covariance matrices
	\[
	\Sigma_I \coloneqq \mathrm{Cov}(e_{t,I_t}^{\mathrm{pred}}\mid \mathcal{G}_t)\in\mathbb{S}^{d_y}_{++}, \qquad
	\Sigma_{jI} \coloneqq \mathrm{Cov}(e_{t,j}^{\mathrm{pred}}, e_{t,I_t}^{\mathrm{pred}}\mid \mathcal{G}_t)\in\mathbb{R}^{d_y\times d_y}.
	\]
		Assume $\Sigma_I$ is non-singular (e.g., due to additive observation noise with $\mathbf{R}_{m,I_t}\succ \mathbf{0}$).
		For jointly Gaussian vectors, the conditional expectation is given by the standard formula
		\[
		\mathbb{E}[e_{t,j}^{\mathrm{pred}}\mid e_{t,I_t}^{\mathrm{pred}}=e_t, \mathcal{G}_t]
		=
		\boldsymbol\mu_j + \Sigma_{jI}\,\Sigma_I^{-1}\,\bigl(e_t-\boldsymbol\mu_I\bigr).
		\]
			Therefore,
			$\mathbb{E}[e_{t,j}^{\mathrm{pred}}\mid e_t, \mathcal{G}_t]=\boldsymbol\mu_j$ for all values of $e_t$
			if and only if $\Sigma_{jI}=\mathbf{0}$, i.e.,
			$\mathrm{Cov}(e_{t,j}^{\mathrm{pred}}, e_{t,I_t}^{\mathrm{pred}}\mid \mathcal{G}_t)=\mathbf{0}$.
\end{proof}
\subsection{Proof of Proposition \ref{prop:invariance}} \label{app:invariance}

\invariance*

	\begin{proof}
	The statement is a direct consequence of the definition of marginalization.

Write the filtering belief at the end of round $t-1$ (conditioned on the realized history, which we omit from the
	notation) as a joint density over the shared factor and all idiosyncratic states:
	\[
	q_{t-1\mid t-1}\Big(\mathbf{g}_{t-1},(\mathbf{u}_{t-1,\ell})_{\ell\in\mathcal{K}_{t-1}}\Big).
	\]
	Let $\mathcal{K}' \coloneqq \mathcal{K}_{t-1}\setminus P_t$ denote the retained experts and denote
	$\mathbf{u}_{t-1,\mathcal{K}'} \coloneqq (\mathbf{u}_{t-1,\ell})_{\ell\in\mathcal{K}'}$.
	By the definition of a marginal density, the joint marginal of the retained variables under $q_{t-1\mid t-1}$ is
	\begin{equation}
	q_{t-1\mid t-1}\big(\mathbf{g}_{t-1},\mathbf{u}_{t-1,\mathcal{K}'}\big)
	\;=\;
	\int q_{t-1\mid t-1}\big(\mathbf{g}_{t-1},\mathbf{u}_{t-1,\mathcal{K}'},(\mathbf{u}_{t-1,k})_{k\in P_t}\big)\,
	\prod_{k\in P_t} d\mathbf{u}_{t-1,k}.
	\label{eq:marg_def}
	\end{equation}
	On the other hand, the post-pruning belief $q^{\mathrm{pr}(P_t)}_{t-1\mid t-1}$ is \emph{defined} by exactly the same integral:
	\[
	q^{\mathrm{pr}(P_t)}_{t-1\mid t-1}\big(\mathbf{g}_{t-1},\mathbf{u}_{t-1,\mathcal{K}'}\big)
	\;\coloneqq\;
	\int q_{t-1\mid t-1}\big(\mathbf{g}_{t-1},\mathbf{u}_{t-1,\mathcal{K}'},(\mathbf{u}_{t-1,k})_{k\in P_t}\big)\,
	\prod_{k\in P_t} d\mathbf{u}_{t-1,k}.
	\]
	Comparing with \eqref{eq:marg_def} yields
	\[
	q^{\mathrm{pr}(P_t)}_{t-1\mid t-1}\big(\mathbf{g}_{t-1},\mathbf{u}_{t-1,\mathcal{K}'}\big)
	=
	q_{t-1\mid t-1}\big(\mathbf{g}_{t-1},\mathbf{u}_{t-1,\mathcal{K}'}\big),
	\]
	which proves that pruning $P_t$ leaves the joint belief over all retained variables unchanged.

	For the stated consequences, let $\ell\notin P_t$.
	The regime posterior $(w_{t-1}^{(m)})_{m=1}^M$ is a measurable function of the past
	observation history through round $t-1$, which pruning leaves unchanged; hence
	$(w_{t-1}^{(m)})_{m=1}^M$ is identical before and after pruning. Conditional on $z_t=m$,
	the SLDS time update propagates $(\mathbf{g}_{t-1},\mathbf{u}_{t-1,\ell})$ to
	$(\mathbf{g}_t,\mathbf{u}_{t,\ell})$ using the same linear-Gaussian transition under both
	beliefs. Since the retained marginal $q_{t-1\mid t-1}(\mathbf{g}_{t-1},\mathbf{u}_{t-1,\ell})$
	is identical before and after pruning, the predictive distribution of
	$(\mathbf{g}_t,\mathbf{u}_{t,\ell})$ is also identical, both per regime and after marginalizing
	$z_t$. Because $\boldsymbol{\alpha}_{t,\ell}=\mathbf{B}_\ell \mathbf{g}_t+\mathbf{u}_{t,\ell}$
	is a measurable function of $(\mathbf{g}_t,\mathbf{u}_{t,\ell})$ and $e_{t,\ell}^{\mathrm{pred}}$
	follows the emission model given these states, the predictive distributions of
	$\boldsymbol{\alpha}_{t,\ell}$ and $e_{t,\ell}^{\mathrm{pred}}$ are unchanged by pruning.
	\end{proof}
\subsection{Proof of Proposition \ref{prop:transfer}} \label{app:transfer}

\begin{restatable}[Coupling at birth through the shared factor]{proposition}{transfer}
\label{prop:transfer}
Fix time \(t\) and condition on \((\mathcal{F}_t,z_t=m)\). Under the Factorized SLDS one-step predictive
belief (i.e., with \(\mathrm{Cov}(\mathbf{g}_t,\mathbf{u}_{t,k}\mid\cdot)=\mathbf{0}\) and
\(\mathrm{Cov}(\mathbf{u}_{t,i},\mathbf{u}_{t,j}\mid\cdot)=\mathbf{0}\) for \(i\neq j\)), for any experts \(j\neq k\),
\[
\mathrm{Cov}\left(\boldsymbol\alpha_{t,j},\boldsymbol\alpha_{t,k}\mid \mathcal{F}_t,z_t=m\right)
=
\mathbf{B}_j\Sigma^{(m)}_{g,t\mid t-1}\mathbf{B}_k^\top,
\]
where \(\Sigma^{(m)}_{g,t\mid t-1}\) is the regime-\(m\) one-step predictive covariance of \(\mathbf{g}_t\).
In particular, if the joint predictive law is Gaussian with nonsingular marginal covariance matrices and
\(\mathbf{B}_j\Sigma^{(m)}_{g,t\mid t-1}\mathbf{B}_k^\top\neq \mathbf{0}\),
then \(\boldsymbol\alpha_{t,j}\) and \(\boldsymbol\alpha_{t,k}\) are not independent and hence
\(\mathcal{I}(\boldsymbol\alpha_{t,j};\boldsymbol\alpha_{t,k}\mid \mathcal{F}_t,z_t=m)>0\).
\end{restatable}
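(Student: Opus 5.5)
The covariance identity follows from bilinearity of conditional covariance applied to the decomposition \(\boldsymbol\alpha_{t,\ell}=\mathbf{B}_\ell\mathbf{g}_t+\mathbf{u}_{t,\ell}\) from \eqref{eq:alpha_def}. Throughout, all moments are taken conditionally on \((\mathcal{F}_t,z_t=m)\), and \(\mathbf{B}_j,\mathbf{B}_k\) are deterministic parameters, so they can be pulled out of covariances. Expanding, for \(j\neq k\),
\[
\mathrm{Cov}(\boldsymbol\alpha_{t,j},\boldsymbol\alpha_{t,k})
=\mathbf{B}_j\mathrm{Cov}(\mathbf{g}_t,\mathbf{g}_t)\mathbf{B}_k^\top
+\mathbf{B}_j\mathrm{Cov}(\mathbf{g}_t,\mathbf{u}_{t,k})
+\mathrm{Cov}(\mathbf{u}_{t,j},\mathbf{g}_t)\mathbf{B}_k^\top
+\mathrm{Cov}(\mathbf{u}_{t,j},\mathbf{u}_{t,k}).
\]
The factorized predictive belief assumed in the statement kills the last three terms: the two mixed terms vanish because \(\mathrm{Cov}(\mathbf{g}_t,\mathbf{u}_{t,\ell}\mid\cdot)=\mathbf{0}\) for \(\ell\in\{j,k\}\) (the second one is a transpose of this), and the idiosyncratic cross term vanishes because \(i\neq j\) gives zero. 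The remaining term is \(\mathbf{B}_j\Sigma^{(m)}_{g,t\mid t-1}\mathbf{B}_k^\top\), by the definition of \(\Sigma^{(m)}_{g,t\mid t-1}\) as the regime-\(m\) one-step predictive covariance of \(\mathbf{g}_t\) produced by the IMM time update \eqref{eq:exp_g_prior}.

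If one of the experts is entering (birth), the same argument still applies. The birth-time predictive moments \eqref{eq:birth_time_update} are built independently of \(\mathbf{g}_t\) and of the other experts' states, so the factorized hypothesis holds for that expert as well. This is the reason for the "coupling at birth" reading of the statement.

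For the dependence claim, I will argue by contraposition. If \(\boldsymbol\alpha_{t,j}\) and \(\boldsymbol\alpha_{t,k}\) were independent given \((\mathcal{F}_t,z_t=m)\), their cross-covariance would be zero, contradicting \(\mathbf{B}_j\Sigma^{(m)}_{g,t\mid t-1}\mathbf{B}_k^\top\neq\mathbf{0}\). Second moments exist because the law is Gaussian. Therefore the two vectors are dependent.

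Strict positivity of the mutual information then follows from the standard fact that \(\mathcal{I}(X;Y)=\mathrm{KL}(p_{XY}\,\|\,p_Xp_Y)\geq 0\), with equality if and only if \(p_{XY}=p_Xp_Y\). A closed form gives a more explicit route. For jointly Gaussian vectors with nonsingular marginal covariances \(\Sigma_j,\Sigma_k\) and cross-covariance \(\Sigma_{jk}\),
\[
\mathcal{I}=-\tfrac12\log\det\bigl(\mathbf{I}-\Sigma_j^{-1/2}\Sigma_{jk}\Sigma_k^{-1}\Sigma_{jk}^\top\Sigma_j^{-1/2}\bigr).
\]
This expression is positive whenever \(\Sigma_{jk}\neq\mathbf{0}\), since the inner matrix is then positive semidefinite and nonzero.

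The main obstacle is a subtlety rather than a computation: the joint covariance of \((\boldsymbol\alpha_{t,j},\boldsymbol\alpha_{t,k})\) could be singular. The KL characterization handles this case without needing the determinant formula, so I will rely on it as the primary argument and present the closed form only as a remark.
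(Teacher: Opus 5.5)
Your proposal is correct and matches the paper's proof: you expand $\mathrm{Cov}(\boldsymbol\alpha_{t,j},\boldsymbol\alpha_{t,k}\mid\mathcal{F}_t,z_t=m)$ by bilinearity, the factorized predictive belief removes the two mixed terms and the idiosyncratic cross term, and the nonzero cross-covariance rules out independence, which gives positive mutual information. The differences are minor: you use only the stated zero-covariance hypotheses rather than the full product factorization the paper invokes, and you spell out the $\mathrm{KL}$ argument for strict positivity that the paper leaves implicit, which also shows the nonsingularity assumption is not needed.
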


		\begin{proof}
		Fix $t$ and condition on $(\mathcal{F}_t,z_t=m)$.
		Under the factorized one-step predictive belief, for any $j\neq k$ we have the marginal factorization
		\[
		q(\mathbf{g}_t,\mathbf{u}_{t,j},\mathbf{u}_{t,k}\mid \mathcal{F}_t,z_t=m)
		=
		q(\mathbf{g}_t\mid \mathcal{F}_t,z_t=m)\,q(\mathbf{u}_{t,j}\mid \mathcal{F}_t,z_t=m)\,q(\mathbf{u}_{t,k}\mid \mathcal{F}_t,z_t=m),
		\]
		so $\mathbf{g}_t \perp\!\!\!\perp \mathbf{u}_{t,\ell}$ for all $\ell$ and
		$\mathbf{u}_{t,j}\perp\!\!\!\perp \mathbf{u}_{t,k}$ for $j\neq k$.
	Recalling $\boldsymbol{\alpha}_{t,\ell}=\mathbf{B}_\ell \mathbf{g}_t+\mathbf{u}_{t,\ell}$ and using bilinearity of covariance,
	\begin{align*}
	\mathrm{Cov}(\boldsymbol{\alpha}_{t,j},\boldsymbol{\alpha}_{t,k}\mid \mathcal{F}_t,z_t=m)
	&=\mathrm{Cov}(\mathbf{B}_j\mathbf{g}_t+\mathbf{u}_{t,j},\,\mathbf{B}_k\mathbf{g}_t+\mathbf{u}_{t,k}\mid \mathcal{F}_t,z_t=m)\\
	&=\mathrm{Cov}(\mathbf{B}_j\mathbf{g}_t,\,\mathbf{B}_k\mathbf{g}_t\mid \mathcal{F}_t,z_t=m)
	+\mathrm{Cov}(\mathbf{B}_j\mathbf{g}_t,\,\mathbf{u}_{t,k}\mid \mathcal{F}_t,z_t=m)\\
	&\quad+\mathrm{Cov}(\mathbf{u}_{t,j},\,\mathbf{B}_k\mathbf{g}_t\mid \mathcal{F}_t,z_t=m)
	+\mathrm{Cov}(\mathbf{u}_{t,j},\,\mathbf{u}_{t,k}\mid \mathcal{F}_t,z_t=m)\\
	&=\mathbf{B}_j\,\mathrm{Cov}(\mathbf{g}_t,\mathbf{g}_t\mid \mathcal{F}_t,z_t=m)\,\mathbf{B}_k^\top\\
	&=\mathbf B_j\,\Sigma^{(m)}_{g,t|t-1}\,\mathbf B_k^\top,
	\end{align*}
	where $\Sigma^{(m)}_{g,t|t-1}\coloneqq \mathrm{Cov}(\mathbf{g}_t\mid \mathcal{F}_t,z_t=m)$.
	If $\mathbf B_j\Sigma^{(m)}_{g,t|t-1}\mathbf B_k^\top\neq 0$ and the joint predictive law of
		$(\boldsymbol\alpha_{t,j},\boldsymbol\alpha_{t,k})$ is Gaussian with nonsingular marginal covariance matrices, then the pair is not independent, hence
		$\mathcal{I}(\boldsymbol{\alpha}_{t,j};\boldsymbol{\alpha}_{t,k}\mid \mathcal{F}_t,z_t=m)>0$.
	\end{proof}

%% file: Section/AppendixParts/proof_regret.tex
% =========================
% Regret analysis
% =========================
\section{Regret Analysis of L2D-SLDS}
\label{app:regret}

This appendix gives the formal counterpart of Theorem~\ref{thm:calibrated-score-main}
and Corollary~\ref{cor:sublinear-main}: an oracle inequality with a predictive-cost-error budget for the L2D-SLDS
router \eqref{eq:query_score}--\eqref{eq:query_policy}, its specialization to the
teacher-weighted internal update through an augmented-loss interval bound, and its
instantiation to a sublinear realized-regret rate via a certified internal learner.
We use the notation
of Sections~\ref{sec:problem_formulation}--\ref{sec:exploration} throughout and do not
reintroduce symbols already defined there
(\(\mathcal{F}_t,\mathcal{A}_t,\mathcal{E}_t,e_{t,k},\psi,\beta_k,\bar C^{\mathrm{pred}}_{t,a},\mathrm{IG}_t,\widehat{\mathrm{LI}}_t,\lambda_{\mathrm{IG}},\lambda_{\mathrm{L}},\ldots\));
the regret-specific symbols introduced below are summarized in the dedicated block of
Appendix~\ref{app:notation}.

\paragraph{Why this decomposition.}
The L2D-SLDS router has three knobs: a pair of bonus scales
\(\lambda_{\mathrm{IG}},\lambda_{\mathrm{L}}\) that purchase exploration and learner
improvement, an SLDS filter that produces the predictive moments
\(\bar C^{\mathrm{pred}}_{t,a}\), and an online internal learner whose own loss along
the deployed action sequence is part of what we ultimately pay. A useful regret theorem
should say how these three knobs translate into regret separately, so that designers
know which knob controls which term. Theorem~\ref{thm:app-calibrated-score} below does
exactly that: regret splits into a \emph{query-bonus budget} that the router elects to
spend, a \emph{predictive-cost-error} term that vanishes when the SLDS predictive moments match
the true conditional means, and an \emph{internal-learning} term that is active only on
the rounds where the comparator oracle would itself have used the internal action. For
the empirical teacher-weighted update, this internal term can be written as an
augmented-loss dynamic-regret bound minus a signed teacher correction induced by the
queried expert predictions. The fourth term in the realized statement is the standard
Azuma--Hoeffding noise that appears whenever one converts conditional means into
realized losses.

\paragraph{Why predictive-cost accuracy is necessary.}
The theorem is an \emph{oracle inequality with a predictive-cost-error budget}, not
an unconditional minimax regret theorem for an arbitrary misspecified filter. The
router compares predictive costs \(\bar C^{\mathrm{pred}}_{t,a}\), whereas regret is
measured against true conditional costs \(\mu^\theta_{t,a}\). Without controlling
their discrepancy, the score may select a bad action forever: with two
always-available actions, no bonuses, true means \(\mu_{t,0}=0,\mu_{t,1}=1\), and
predicted means \(\bar C^{\mathrm{pred}}_{t,0}=0,\bar C^{\mathrm{pred}}_{t,1}=-1\),
the router queries action~\(1\) on every round and incurs linear regret. Thus
realizability, a predictive-cost-error budget, or a different exploration wrapper
is not a proof artifact; it is necessary for a no-regret statement about this score
rule.

\paragraph{Comparator choice.}
Because the deployed internal predictor \(f_{\theta_t}\) is trained online, comparing
to a fixed best arm is not meaningful: the right counterfactual is a \emph{time-varying
learn-and-defer oracle} that may substitute any predictable internal predictor
\(f_{u_t}\) on the deployed expert process. This is the dynamic-regret tradition of
\citet{zinkevich2003online}; the information bonus in
\eqref{eq:query_score} aligns with information-directed sampling
\citep{russo2014learning}.

\paragraph{Roadmap.}
Appendix~\ref{app:regret-comparator} fixes the comparator and the regrets;
Appendix~\ref{app:regret-assumptions} lists the assumptions and sufficient
predictive-cost-error conditions; Appendix~\ref{app:regret-theorem} states and proves the
decomposition; Appendix~\ref{app:regret-teacher} instantiates the internal term for the
teacher-weighted update; Appendix~\ref{app:regret-internal} supplies a certified
internal learner that meets the realized internal-regret bound; and
Appendix~\ref{app:regret-corollaries} instantiates the theorem to sublinear and
fixed-bonus query-count forms.

\subsection{Comparator and Regrets}
\label{app:regret-comparator}

Let \(\mathcal{W}\subseteq\mathbb{R}^{d_\theta}\) be a comparator class for the internal
predictor and call a sequence \(u_{1:T}\) \emph{admissible} when each \(u_t\) is
\(\mathcal{F}_t\)-measurable, so that the comparator commits to its parameters at decision
time. Writing \(C^{\theta}_{t,a}\coloneqq C_{t,a}\) for the deployed routing cost
\eqref{eq:l2d_cost} along the L2D-SLDS trajectory and
\(C^{u}_{t,0}\coloneqq\psi(f_{u_t}(\mathbf{x}_t)-\vy_t)\),
\(C^{u}_{t,k}\coloneqq C^{\theta}_{t,k}\) for \(k\in\mathcal{E}_t\), the comparator may
substitute its own internal predictor \(f_{u_t}\) for \(f_{\theta_t}\), but it cannot
fabricate counterfactual external predictions; this is what makes the comparator a
\emph{learn-and-defer} oracle on the deployed expert process. Conditional means are
\(\mu^{\theta}_{t,a}\coloneqq\mathbb{E}[C^{\theta}_{t,a}\mid\mathcal{F}_t]\) and
\(\mu^{u}_{t,a}\) analogously, the one-step oracle action is
\begin{equation}
\label{eq:app-oracle}
    a_t^u\in\arg\min_{a\in\mathcal{A}_t}\mu^{u}_{t,a},
\end{equation}
and we study both the conditional pseudo-regret and realized regret
\begin{equation}
\label{eq:app-regrets}
    \mathfrak R_T(u_{1:T})\coloneqq \sum_{t=1}^T\!\bigl(\mu^{\theta}_{t,I_t}-\mu^{u}_{t,a_t^u}\bigr),
    \qquad
    R_T(u_{1:T})\coloneqq \sum_{t=1}^T\!\bigl(C^{\theta}_{t,I_t}-C^{u}_{t,a_t^u}\bigr).
\end{equation}
Decomposing \(\mathcal{T}_0(u)\coloneqq\{t:a_t^u=0\}\) into its \(m\) maximal contiguous
intervals \(\mathcal{T}_0(u)=\bigsqcup_{j=1}^{m}\mathcal{J}_j\) and counting the
oracle's switches \(S_u\coloneqq\sum_{t=2}^T\mathbf{1}\{a_t^u\neq a_{t-1}^u\}\), every
internal-use interval starts at \(t=1\) or just after a switch into action~\(0\), so
\begin{equation}
\label{eq:app-m-bound}
    m\;\le\; S_u+1.
\end{equation}
Throughout we abbreviate
\(b_t(k)\coloneqq\lambda_{\mathrm{IG}}\mathrm{IG}_t(k)+\lambda_{\mathrm{L}}\widehat{\mathrm{LI}}_t(k)\)
and \(b_t(0)\coloneqq 0\) for the per-round bonus of the L2D-SLDS score
\eqref{eq:query_score}.

\paragraph{Side observations.}
After acting, \(\vy_t\) is always revealed, hence the internal residual
\(e_{t,0}=\vhaty_{t,0}-\vy_t\) is observed on every round. If \(I_t=k\in\mathcal{E}_t\),
the system additionally observes \((\vhaty_{t,k},e_{t,k})\). The post-action
observation is therefore
\[
    \mathcal{O}_t=\{(\vy_t,e_{t,0})\}\cup
    \mathbf{1}\{I_t\neq 0\}\{(\vhaty_{t,I_t},e_{t,I_t})\}.
\]
The next filtering and learning state, and hence \(\mathcal{F}_{t+1}\), are functions
of \((\mathcal{H}_{t-1},\mathbf{x}_t,\mathcal{E}_t,I_t,\mathcal{O}_t)\). Thus the
always-observed internal residual and queried external residuals are already accounted
for through the filtration, the predictive-cost-error event, and the deployed learner trajectory.

\subsection{Assumptions}
\label{app:regret-assumptions}

\begin{assumption}[Potential outcomes, exogeneity, measurability]
\label{ass:app-exogeneity}
For every \(t\) and \(k\in\mathcal{E}_t\) the potential prediction \(\vhaty_{t,k}\) and
residual \(e_{t,k}=\vhaty_{t,k}-\vy_t\) exist before the router acts. The action \(I_t\)
only determines which prediction is revealed; it does not alter \(\vy_t\) or any
\(\vhaty_{t,k}\) at round \(t\). The quantities
\(\vhaty_{t,0},\bar C^{\mathrm{pred}}_{t,a},\mathrm{IG}_t(k),\widehat{\mathrm{LI}}_t(k),
b_t(k),I_t,a_t^u\) are \(\mathcal{F}_t\)-measurable.
\end{assumption}

\begin{proposition}[Exact-filter predictive accuracy]
\label{prop:app-exact-filter-calibration}
Suppose the potential residual process is generated by the same factorized SLDS used by
the router, all SLDS parameters are known, and filtering is exact under the asymmetric
observation history \(\mathcal{O}_1,\ldots,\mathcal{O}_{t-1}\). Then
\[
    \bar C^{\mathrm{pred}}_{t,a}=\mu^\theta_{t,a}
    \qquad \forall t,\ \forall a\in\mathcal{A}_t,
\]
so Assumption~\ref{ass:app-calibration} holds with
\(\mathcal{E}_{\mathrm{SLDS}}=0\).
\end{proposition}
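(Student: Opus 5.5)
The plan is to read both sides of the identity as expectations of the same function, \(\psi(\cdot)+\beta_a\), of the same random residual \(e_{t,a}\), taken under the same conditional law. Fix \(t\) and \(a\in\mathcal{A}_t\). By definition, \(\mu^\theta_{t,a}=\mathbb{E}[\psi(e_{t,a})\mid\mathcal{F}_t]+\beta_a\), since \(\beta_a\) is known and hence \(\mathcal{F}_t\)-measurable. Likewise \(\bar C^{\mathrm{pred}}_{t,a}=\mathbb{E}[\psi(e^{\mathrm{pred}}_{t,a})\mid\mathcal{F}_t]+\beta_a\) with \(e^{\mathrm{pred}}_{t,a}\sim p(e_{t,a}\mid\mathcal{F}_t)\), computed through the mixture formula \eqref{eq:exp_cost_mix}. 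It therefore suffices to show that the router's predictive law equals the true conditional law \(\mathcal{L}(e_{t,a}\mid\mathcal{F}_t)\). Nonnegativity of \(\psi\) makes both expectations well defined in \([0,\infty]\).

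First I would identify the true conditional law. By Assumption~\ref{ass:app-exogeneity}, the potential residuals exist before acting and \(I_{1:t-1}\) affects only which coordinates are revealed. Hence the data available by time \(t\) consist of the SLDS-generated residual process observed through the selection pattern \(\mathcal{O}_{1:t-1}\), together with \(\mathbf{x}_t,\mathcal{E}_t\) and \(\theta_t\).

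The concern is that the selection \(I_\tau\) is not ignorable. I would handle it as follows. Each \(I_\tau\) is \(\mathcal{F}_\tau\)-measurable, so it is a deterministic function of previously observed data (plus possible independent randomization). Thus the likelihood of the observed history factorizes into a model term, given by the SLDS density of the revealed coordinates, and a policy term that does not depend on the latent variables. This is the standard missing-at-random argument for adaptively collected data.

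Consequently, Bayes' rule gives
\[
p(z_t,\mathbf{g}_t,(\mathbf{u}_{t,k})_k\mid\mathcal{F}_t)\propto \text{prior}\times\prod_{\tau<t}p(\text{revealed residuals at }\tau\mid\text{latents}).
\]
This is exactly the recursion the exact filter implements under the asymmetric history, with the internal correction always applied and the external correction applied when \(I_\tau\neq0\). The context \(\mathbf{x}_t\) and availability \(\mathcal{E}_t\) enter only through \(\widetilde\Phi(\mathbf{x}_t)\) and which actions are indexed.

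The parameter \(\theta_t\) also needs attention, since it determines \(e_{t,0}\). Because the SLDS is posed over the \emph{residual} process given \(\theta_t\) and \(\mathbf{x}_t\) (Appendix~\ref{app:slds_pgm}), the realizability hypothesis already conditions the residual model on the deployed learner, and \(\theta_t\) is \(\mathcal{F}_t\)-measurable. So conditioning on it adds no extra information about the latents.

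Second, I would push the exact latent posterior through the known dynamics \eqref{eq:global_dynamics}--\eqref{eq:idiosyncratic_dynamics} and the transition matrix \(\Pi\), and then through the emission \eqref{eq:residual_emission}. This yields \(p(e_{t,a}\mid\mathcal{F}_t)\), which is precisely the predictive law used to define \(\bar C^{\mathrm{pred}}_{t,a}\). For an entering expert, the realizability hypothesis means the birth prior \eqref{eq:birth_time_update} is the true conditional law of \(\mathbf{u}_{t-1,j}\). For pruned experts, Proposition~\ref{prop:invariance} shows that dropping them leaves the retained predictive laws unchanged. Equality of the two laws then gives \(\bar C^{\mathrm{pred}}_{t,a}=\mu^\theta_{t,a}\). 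Hence each per-round error \(\varepsilon_t=0\), and Assumption~\ref{ass:app-calibration} holds with \(\mathcal{E}_{\mathrm{SLDS}}=0\) deterministically, for any \(\delta_{\mathrm{cal}}\).

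The main obstacle is the ignorability step. I must argue carefully that the adaptively chosen \(I_\tau\), and the action-coupled \(\theta_\tau\), do not bias the posterior. My first attempt is an induction on \(t\) over the joint density of (latents, revealed data, actions), showing at each step that the action factor is latent-free because of \(\mathcal{F}_\tau\)-measurability and exogeneity. I would also note that "exact filtering" must mean the full, non-factorized posterior of Appendix~\ref{app:cross_covariance}, or a setting in which the factorization is exact, since the projected IMM filter would otherwise break the identity.
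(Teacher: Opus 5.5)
Your proposal is correct and follows essentially the same route as the paper. Both arguments identify the router's predictive law of \(e_{t,a}\) with its true conditional law given \(\mathcal{F}_t\), which gives \(\bar C^{\mathrm{pred}}_{t,a}=\mathbb{E}[\psi(e_{t,a})+\beta_a\mid\mathcal{F}_t]=\mu^\theta_{t,a}\) and hence \(\varepsilon_t=0\).

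The paper reads that identification directly off the exact-filtering hypothesis. You additionally justify it: the \(\mathcal{F}_\tau\)-measurable actions are ignorable, \(\theta_t\) is measurable with respect to the history, and pruning is handled by Proposition~\ref{prop:invariance}. You also rightly note that the projected IMM filter would not count as exact. One small correction: the claim that the birth prior for \(\mathbf{u}_{t-1,j}\) is the true conditional law does not follow from realizability alone, so treat it as part of what ``exact filtering'' must mean.
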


\begin{proof}
Under realizability and exact filtering, the router's predictive distribution for each
potential residual is the conditional law of that residual given \(\mathcal{F}_t\).
This conditional law has assimilated every past internal residual and every queried
external residual available in the asymmetric observation history. Therefore
\[
    \bar C^{\mathrm{pred}}_{t,a}
    =\mathbb{E}[\psi(e_{t,a})+\beta_a\mid\mathcal{F}_t]
    =\mathbb{E}[C^\theta_{t,a}\mid\mathcal{F}_t]
    =\mu^\theta_{t,a},
\]
with \(\beta_0=0\). Hence \(\varepsilon_t=0\) for all \(t\).
\end{proof}

\begin{proposition}[From parameter/filter error to predictive-cost error]
\label{prop:app-lipschitz-calibration}
Let \(\eta^\star\) denote the true SLDS parameter/filter object and
\(\widehat{\eta}_t\) the object used by the implementation. Suppose exact predictive
accuracy holds at \(\eta^\star\) and, on an event of probability at least
\(1-\delta_{\mathrm{cal}}\),
\[
    \max_{a\in\mathcal{A}_t}
    \bigl|\bar C^{\mathrm{pred}}_{t,a}(\widehat{\eta}_t)
    -\bar C^{\mathrm{pred}}_{t,a}(\eta^\star)\bigr|
    \le L_t\rho_t+\alpha_t,
\]
where \(\rho_t\) controls parameter/statistical error and \(\alpha_t\) controls filtering
or approximation error. Then Assumption~\ref{ass:app-calibration} holds with
\[
    \mathcal{E}_{\mathrm{SLDS}}(T,\delta_{\mathrm{cal}})
    =\sum_{t=1}^T(L_t\rho_t+\alpha_t).
\]
\end{proposition}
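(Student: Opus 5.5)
The argument is a triangle inequality through the exact-filter predictive costs, applied pointwise on the high-probability event. Assumption~\ref{ass:app-calibration} is referenced but not reproduced above. From the notation table, we read it as requiring, with probability at least \(1-\delta_{\mathrm{cal}}\), that \(\sum_t\varepsilon_t\le\mathcal{E}_{\mathrm{SLDS}}(T,\delta_{\mathrm{cal}})\), where \(\varepsilon_t=\max_a|\bar C^{\mathrm{pred}}_{t,a}-\mu^{\theta}_{t,a}|\) and the predictive costs are those actually computed by the implementation, i.e.\ at \(\widehat\eta_t\).

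\textbf{Step 1 (exact predictive accuracy).} By hypothesis, predictive accuracy is exact at the true object \(\eta^\star\). Concretely, we invoke Proposition~\ref{prop:app-exact-filter-calibration}, applied with the true parameters and exact asymmetric filtering, to get
\[
\bar C^{\mathrm{pred}}_{t,a}(\eta^\star)=\mu^\theta_{t,a}\qquad\text{for all } t \text{ and all } a\in\mathcal{A}_t.
\]
Both sides must be taken along the same deployed trajectory and the same filtration \(\mathcal{F}_t\), with the same learner state \(\theta_t\). This is legitimate because \(\bar C^{\mathrm{pred}}_{t,a}(\eta^\star)\) is an \(\mathcal{F}_t\)-measurable function of the realized history.

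\textbf{Step 2 (pointwise bound on the event).} Let \(\Omega_{\mathrm{cal}}\) be the event of probability at least \(1-\delta_{\mathrm{cal}}\) on which the displayed perturbation bound holds simultaneously for all \(t\). We read the hypothesis as uniform over \(t\); if it were stated per round, we would need a union bound, which would change \(\delta_{\mathrm{cal}}\). On \(\Omega_{\mathrm{cal}}\), for each \(t\) and each \(a\in\mathcal{A}_t\),
\[
|\bar C^{\mathrm{pred}}_{t,a}(\widehat\eta_t)-\mu^\theta_{t,a}|
\le |\bar C^{\mathrm{pred}}_{t,a}(\widehat\eta_t)-\bar C^{\mathrm{pred}}_{t,a}(\eta^\star)| + |\bar C^{\mathrm{pred}}_{t,a}(\eta^\star)-\mu^\theta_{t,a}|
\le L_t\rho_t+\alpha_t + 0 .
\]
Taking the maximum over \(a\) gives \(\varepsilon_t\le L_t\rho_t+\alpha_t\).

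\textbf{Step 3 (sum over rounds).} Summing over \(t\) on \(\Omega_{\mathrm{cal}}\) yields
\[
\sum_t\varepsilon_t\le\sum_t(L_t\rho_t+\alpha_t),
\]
which is exactly the budget required by Assumption~\ref{ass:app-calibration} at confidence \(1-\delta_{\mathrm{cal}}\).

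\textbf{Main obstacle.} The only delicate point is measurability and consistency of the comparison in Step 1. The exact-filter identity must hold along the deployed, action-coupled trajectory generated by the implementation at \(\widehat\eta_t\), not along a hypothetical trajectory generated by \(\eta^\star\). I would address this by noting that Proposition~\ref{prop:app-exact-filter-calibration} is a statement conditional on whatever history \(\mathcal{F}_t\) was realized. By exogeneity (Assumption~\ref{ass:app-exogeneity}), the conditional law of the potential residuals given \(\mathcal{F}_t\) does not depend on which policy produced the censoring pattern. The identity therefore applies verbatim to the realized history, and the rest of the argument is the triangle inequality above.
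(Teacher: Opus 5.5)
Your proposal is correct and matches the paper's proof: apply the triangle inequality through $\bar C^{\mathrm{pred}}_{t,a}(\eta^\star)=\mu^\theta_{t,a}$ on the high-probability event, then maximize over $a$ and sum over $t$. Your extra remarks are reasonable, namely reading the event as uniform in $t$ and the measurability point. Step~1 does not need Proposition~\ref{prop:app-exact-filter-calibration} or exogeneity, because exact predictive accuracy at $\eta^\star$ is itself a hypothesis of the statement.
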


\begin{proof}
The claim follows from the triangle inequality and exact predictive accuracy at \(\eta^\star\):
\[
|\bar C^{\mathrm{pred}}_{t,a}(\widehat{\eta}_t)-\mu^\theta_{t,a}|
=|\bar C^{\mathrm{pred}}_{t,a}(\widehat{\eta}_t)
-\bar C^{\mathrm{pred}}_{t,a}(\eta^\star)|
\le L_t\rho_t+\alpha_t.
\]
Maximize over \(a\) and sum over \(t\).
\end{proof}

\begin{lemma}[Free internal observation reduces shared-factor uncertainty]
\label{lem:app-free-internal}
Within one linear-Gaussian regime component, let \(\Sigma^-_{g,t}\) be the
decision-time covariance of the shared factor \(\mathbf{g}_t\). Suppose the internal
residual emission is \(e_{t,0}=H_{t,0}\mathbf{g}_t+\xi_{t,0}\), conditional on the
other maintained quantities, with noise covariance \(R_{t,0}\succ0\). After
assimilating the always-observed internal residual,
\[
    \Sigma^{(0)}_{g,t}
    =
    \Sigma^-_{g,t}
    -\Sigma^-_{g,t}H_{t,0}^\top
    (H_{t,0}\Sigma^-_{g,t}H_{t,0}^\top+R_{t,0})^{-1}
    H_{t,0}\Sigma^-_{g,t}
    \preceq \Sigma^-_{g,t}.
\]
Consequently, for any external expert whose residual loading is \(H_{t,k}\),
\[
    H_{t,k}\Sigma^{(0)}_{g,t}H_{t,k}^\top
    \preceq
    H_{t,k}\Sigma^-_{g,t}H_{t,k}^\top.
\]
Thus the free internal residual weakly reduces shared-factor predictive uncertainty for
every coupled expert before the next decision, independently of any paid external
query.
\end{lemma}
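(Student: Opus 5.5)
The plan is to view the lemma as the standard Kalman covariance update restricted to the shared-factor block, and to get both inequalities from positive semidefiniteness of the subtracted term. Fix the regime component and condition on the other maintained quantities, so that the internal emission is the linear-Gaussian channel \(e_{t,0}=H_{t,0}\mathbf{g}_t+\xi_{t,0}\) with \(\mathbf{g}_t\sim\mathcal{N}(\mu^-_{g,t},\Sigma^-_{g,t})\) and independent noise \(\xi_{t,0}\sim\mathcal{N}(\mathbf{0},R_{t,0})\). Under the factorized belief, the idiosyncratic term \(\widetilde\Phi^\top\mathbf{u}_{t,0}\) can either be conditioned on or absorbed into \(R_{t,0}\), as in \(\mathbf{S}^{(m)}_{t,0}\) of \eqref{eq:exp_channel_params}. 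In the latter case \(R_{t,0}\succ0\) still holds. The pair \((\mathbf{g}_t,e_{t,0})\) is then jointly Gaussian with \(\mathrm{Cov}(\mathbf{g}_t,e_{t,0})=\Sigma^-_{g,t}H_{t,0}^\top\) and \(\mathrm{Cov}(e_{t,0})=H_{t,0}\Sigma^-_{g,t}H_{t,0}^\top+R_{t,0}\).

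\textbf{Step 1: the update formula.} I will invoke the Gaussian conditioning (Schur complement) formula, the same one used in the proof of Proposition~\ref{prop:cross_update}. Equivalently, this is the \(\mathbf{g}\)-block of the update in Algorithm~\ref{alg:correct_reweight}, since the diagonal blocks are retained exactly by the projection. This yields the stated expression for \(\Sigma^{(0)}_{g,t}\). Invertibility of the innovation covariance follows because \(H_{t,0}\Sigma^-_{g,t}H_{t,0}^\top\succeq0\) and \(R_{t,0}\succ0\), so their sum is positive definite.

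\textbf{Step 2: the first inequality.} Set \(S\coloneqq H_{t,0}\Sigma^-_{g,t}H_{t,0}^\top+R_{t,0}\), so \(S^{-1}\succ0\). With \(M\coloneqq\Sigma^-_{g,t}H_{t,0}^\top\), the subtracted term is \(MS^{-1}M^\top\), using symmetry of \(\Sigma^-_{g,t}\). For any vector \(v\),
\[
v^\top MS^{-1}M^\top v=(M^\top v)^\top S^{-1}(M^\top v)\ge0,
\]
so \(MS^{-1}M^\top\succeq0\) and \(\Sigma^{(0)}_{g,t}\preceq\Sigma^-_{g,t}\).

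\textbf{Step 3: the second inequality.} Loewner order is preserved under congruence: if \(A\preceq B\), then \(v^\top H(B-A)H^\top v=(H^\top v)^\top(B-A)(H^\top v)\ge0\). Applying this with \(H=H_{t,k}\) gives \(H_{t,k}\Sigma^{(0)}_{g,t}H_{t,k}^\top\preceq H_{t,k}\Sigma^-_{g,t}H_{t,k}^\top\). The concluding sentence of the lemma is interpretive: the posterior covariance does not depend on the realized value of \(e_{t,0}\) or on any external query.

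There is no real obstacle here. The only care point is making precise what "conditional on the other maintained quantities" means, namely treating \(\mathbf{u}_{t,0}\) as fixed or folding its variance into \(R_{t,0}\). Either way the channel is linear-Gaussian with positive definite noise, and Steps 1–3 apply unchanged.
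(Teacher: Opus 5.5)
Your proposal is correct and follows the paper's proof essentially line for line. The paper also identifies the display as the standard Kalman covariance update, shows the subtracted term is positive semidefinite by writing it as \(BB^\top\) with \(B=\Sigma^-_{g,t}H_{t,0}^\top(H_{t,0}\Sigma^-_{g,t}H_{t,0}^\top+R_{t,0})^{-1/2}\) (equivalent to your quadratic-form check), and gets the second inequality by congruence with \(H_{t,k}\); your remark on absorbing the idiosyncratic term into \(R_{t,0}\), as in \(\mathbf{S}^{(m)}_{t,0}\), makes explicit a point the paper leaves implicit.
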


\begin{proof}
The displayed update is the standard Kalman covariance update. The subtracted matrix is
positive semidefinite because it can be written as \(BB^\top\) after inserting the
positive-definite inverse square root of the innovation covariance. The final inequality
follows by congruence with \(H_{t,k}\).
\end{proof}

\begin{assumption}[Bounded costs]
\label{ass:app-bounded}
There exists \(C_{\max}<\infty\) such that, almost surely,
\(0\le C^{\theta}_{t,a}\le C_{\max}\) and \(0\le C^{u}_{t,a}\le C_{\max}\) for all
\(t\in[T]\) and \(a\in\mathcal{A}_t\). For squared loss this can be enforced by
clipping predictions or the loss; \(C_{\max}\) absorbs the largest fee \(\beta_k\).
\end{assumption}

\begin{assumption}[Nonnegative bonuses]
\label{ass:app-nonneg}
For every \(t\) and \(k\in\mathcal{E}_t\), \(b_t(k)\ge 0\). This is automatic when
\(\lambda_{\mathrm{IG}},\lambda_{\mathrm{L}}\ge 0\), \(\mathrm{IG}_t(k)\ge 0\)
(true for the mutual-information bonus by definition), and
\(\widehat{\mathrm{LI}}_t(k)\) is clipped at zero, as in
\eqref{eq:learner_improvement_proxy}; signed numerical implementations are handled by
their nonnegative clip.
\end{assumption}

\begin{assumption}[Predictive-cost-error budget]
\label{ass:app-calibration}
Let \(\varepsilon_t\coloneqq\max_{a\in\mathcal{A}_t}|\bar C^{\mathrm{pred}}_{t,a}-\mu^{\theta}_{t,a}|\).
There exists \(\mathcal{E}_{\mathrm{SLDS}}(T,\delta_{\mathrm{cal}})\) such that, with
probability at least \(1-\delta_{\mathrm{cal}}\),
\begin{equation}
\label{eq:app-calibration}
    \sum_{t=1}^T \varepsilon_t \;\le\; \mathcal{E}_{\mathrm{SLDS}}(T,\delta_{\mathrm{cal}}).
\end{equation}
If the residual process is exactly the assumed factorized SLDS with known parameters and
exact filtering, then \(\bar C^{\mathrm{pred}}_{t,a}=\mu^{\theta}_{t,a}\) and
\(\varepsilon_t=0\); in the empirical algorithm
\(\mathcal{E}_{\mathrm{SLDS}}\) collects model misspecification, parameter-estimation
error, switching-filter approximation, and censoring error. For non-switching
linear-Gaussian systems with full observations, online Kalman-filter learning attains
polylogarithmic regret relative to the Kalman predictor \citep{tsiamis2020online},
which motivates the form of \eqref{eq:app-calibration} but does not by itself prove it
in the full censored, switching setting.
\end{assumption}

\begin{assumption}[Internal-learner interval bounds]
\label{ass:app-internal}
Fix the admissible \(u_{1:T}\) under consideration. We use either of two bounds. \emph{(a)
Conditional-mean bound:} on an event of probability at least
\(1-\delta_{\mathrm{int}}^{\mu}\), for every interval \([r,s]\subseteq[T]\),
\begin{equation}
\label{eq:app-int-mu}
    \sum_{t=r}^{s}\!\bigl(\mu^{\theta}_{t,0}-\mu^{u}_{t,0}\bigr)
    \;\le\; B_{\mathrm{int}}^{\mu}([r,s],u).
\end{equation}
\emph{(b) Realized bound:} on an event of probability at least
\(1-\delta_{\mathrm{int}}^{\mathrm{r}}\), for every interval \([r,s]\subseteq[T]\),
\begin{equation}
\label{eq:app-int-r}
    \sum_{t=r}^{s}\!\bigl(C^{\theta}_{t,0}-C^{u}_{t,0}\bigr)
    \;\le\; B_{\mathrm{int}}^{\mathrm{r}}([r,s],u).
\end{equation}
The interval-uniform formulation is needed because the intervals \(\mathcal{J}_j\) over
which we will apply the bound are determined by the comparator oracle. The empirical
teacher-weighted update is covered below through an augmented-loss interval bound and a
signed teacher correction (Appendix~\ref{app:regret-teacher}); the certified
mixability learner of Proposition~\ref{prop:app-internal} gives a sharper pathwise
replacement for the internal term.
\end{assumption}

\subsection{Main Theorem and Proof}
\label{app:regret-theorem}

\begin{theorem}[Regret of the L2D-SLDS router with predictive-cost error]
\label{thm:app-calibrated-score}
Fix an admissible \(u_{1:T}\) and suppose
Assumptions~\ref{ass:app-exogeneity}, \ref{ass:app-bounded},
\ref{ass:app-nonneg}, and \ref{ass:app-calibration} hold while the router follows
\eqref{eq:query_score}--\eqref{eq:query_policy}.
\emph{(i) Pseudo-regret.} If Assumption~\ref{ass:app-internal}(a) holds, then on the
predictive-cost-error and internal-regret events,
\begin{equation}
\label{eq:app-thm-pseudo}
    \mathfrak R_T(u_{1:T})
    \;\le\;
    \sum_{t=1}^T b_t(I_t)
    \;+\; 2\mathcal{E}_{\mathrm{SLDS}}(T,\delta_{\mathrm{cal}})
    \;+\; \sum_{j=1}^{m}B_{\mathrm{int}}^{\mu}(\mathcal{J}_j,u).
\end{equation}
\emph{(ii) Realized regret.} If Assumption~\ref{ass:app-internal}(b) holds, then with probability at least
\(1-\delta_{\mathrm{cal}}-\delta_{\mathrm{int}}^{\mathrm{r}}-\delta\),
\begin{equation}
\label{eq:app-thm-realized}
    R_T(u_{1:T})
    \;\le\;
    \sum_{t=1}^T b_t(I_t)
    \;+\; 2\mathcal{E}_{\mathrm{SLDS}}(T,\delta_{\mathrm{cal}})
    \;+\; \sum_{j=1}^{m}B_{\mathrm{int}}^{\mathrm{r}}(\mathcal{J}_j,u)
    \;+\; C_{\max}\sqrt{2T\log(1/\delta)}.
\end{equation}
A uniform realized-regret statement over a finite family \(\mathcal{U}\) of comparators
follows by union-bounding, replacing \(\log(1/\delta)\) with \(\log(|\mathcal{U}|/\delta)\).
\end{theorem}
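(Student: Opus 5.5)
The plan is a per-round decomposition of the conditional pseudo-regret into an action-selection gap plus an internal-learner gap. Part (ii) then follows by a martingale conversion.

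Fix $t$ and write
\[
\mu^\theta_{t,I_t}-\mu^u_{t,a_t^u}=\bigl(\mu^\theta_{t,I_t}-\mu^\theta_{t,a_t^u}\bigr)+\bigl(\mu^\theta_{t,a_t^u}-\mu^u_{t,a_t^u}\bigr).
\]
The second bracket vanishes whenever $a_t^u\neq0$, because the comparator pays the same external costs, $C^u_{t,k}=C^\theta_{t,k}$. It equals $\mu^\theta_{t,0}-\mu^u_{t,0}$ when $t\in\mathcal{T}_0(u)$. Summing over $t$ therefore collects these terms on the intervals $\mathcal{J}_1,\dots,\mathcal{J}_m$. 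Applying Assumption~\ref{ass:app-internal}(a) interval by interval, which is legitimate because that bound is uniform over $[r,s]$, gives $\sum_j B^\mu_{\mathrm{int}}(\mathcal{J}_j,u)$.

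The first bracket is the selection gap. I will show the per-round claim
\[
\mu^\theta_{t,I_t}-\mu^\theta_{a}\le b_t(I_t)+2\varepsilon_t \quad\text{for every } a\in\mathcal{A}_t.
\]
Define an extended score $S_t(0)\coloneqq0$. Since $S_t(k)=\bar C^{\mathrm{pred}}_{t,0}-\bar C^{\mathrm{pred}}_{t,k}+b_t(k)$, the rule \eqref{eq:query_policy} picks $I_t\in\arg\max_{a\in\mathcal{A}_t}\{-\bar C^{\mathrm{pred}}_{t,a}+b_t(a)\}$. This holds up to the tie case $\max S_t=0$, where choosing $0$ is still a maximizer. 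Hence for any $a$,
\[
\bar C^{\mathrm{pred}}_{t,I_t}\le \bar C^{\mathrm{pred}}_{t,a}+b_t(I_t)-b_t(a)\le \bar C^{\mathrm{pred}}_{t,a}+b_t(I_t),
\]
where the last step uses nonnegativity of the bonuses (Assumption~\ref{ass:app-nonneg}). Replacing each predicted cost by its true conditional mean costs at most $\varepsilon_t$ on each side. Taking $a=a_t^u$ and summing, Assumption~\ref{ass:app-calibration} yields $\sum_t b_t(I_t)+2\mathcal{E}_{\mathrm{SLDS}}$ on the calibration event. This proves (i).

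For (ii), the realized quantities do not fit the same pattern directly, because the interval assumption is stated for realized costs. I would instead split
\[
C^\theta_{t,I_t}-C^u_{t,a_t^u}=\bigl(C^\theta_{t,I_t}-C^\theta_{t,a_t^u}\bigr)+\bigl(C^\theta_{t,a_t^u}-C^u_{t,a_t^u}\bigr).
\]
The second term is handled exactly as before, now with $B^{\mathrm r}_{\mathrm{int}}$. For the first term, set $D_t\coloneqq(C^\theta_{t,I_t}-C^\theta_{t,a_t^u})-(\mu^\theta_{t,I_t}-\mu^\theta_{t,a_t^u})$ and add and subtract the conditional means. By Assumption~\ref{ass:app-exogeneity}, $I_t$ and $a_t^u$ are $\mathcal{F}_t$-measurable, so $\mathbb{E}[D_t\mid\mathcal{F}_t]=0$. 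Exogeneity is needed here so that $\mu^\theta_{t,a}$ is the conditional mean of the potential cost regardless of which action is revealed. The remaining mean part is bounded by the selection argument above.

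The obstacle is the martingale step. The sequence $D_t$ forms a martingale difference sequence with respect to $\mathcal{F}_{t+1}\supseteq\sigma(\mathcal{F}_t,\mathcal{O}_t)$. One has to check that the potential costs, including unobserved ones, are measurable with respect to a filtration to which $\mathcal{F}_{t+1}$ belongs. If they are not, I would enlarge the filtration to include the full potential outcomes of rounds $<t+1$; exogeneity keeps the conditional means unchanged. Each $D_t$ takes values in an interval of length at most $2C_{\max}$, since both costs lie in $[0,C_{\max}]$. Azuma--Hoeffding then gives $\sum_t D_t\le C_{\max}\sqrt{2T\log(1/\delta)}$ with probability $1-\delta$. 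A union bound with the calibration and realized-interval events gives the stated probability. The finite-family statement follows by a further union bound over $\mathcal{U}$.
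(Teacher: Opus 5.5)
Your part (i) is the paper's argument. Setting $S_t(0)\coloneqq 0$, so that $I_t\in\arg\max_{a\in\mathcal{A}_t}\{b_t(a)-\bar C^{\mathrm{pred}}_{t,a}\}$, is a tidier route to the paper's score inequality $\bar C^{\mathrm{pred}}_{t,I_t}\le\min_{a}\bar C^{\mathrm{pred}}_{t,a}+b_t(I_t)$ than its two-case check. Your split for (ii) is also the paper's.

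The gap is in the martingale step, and your repair does not close it. You are right that $D_t$ need not be $\mathcal{F}_{t+1}$-measurable: when $a_t^u\notin\{0,I_t\}$ it contains the censored potential cost $C^\theta_{t,a_t^u}$. But passing to $\mathcal{G}_t\coloneqq\mathcal{F}_t\vee\sigma(\text{all potential outcomes of rounds } <t)$ destroys the centering. Exogeneity (Assumption~\ref{ass:app-exogeneity}) says that actions do not alter outcomes. It does not say that unrevealed past outcomes are uninformative about current ones, so in general $\mathbb{E}[C^\theta_{t,a}\mid\mathcal{G}_t]\neq\mu^\theta_{t,a}$. A persistent idiosyncratic state $\mathbf{u}_{t,k}$ of an unqueried expert in the paper's own SLDS already produces this.

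No filtration can rescue the step without an extra hypothesis, because (ii) itself fails under the stated assumptions. Consider the following instance.
\begin{itemize}
\item Take $\vy_t\equiv 0$ and all $\beta_k=0$.
\item Freeze the deployed and comparator internal arms at cost $C$.
\item Let expert~2 have deterministic cost $C/2$.
\item Let expert~1 predict the same $\sqrt{Z}$ on every round, with $Z\in\{0,C\}$ equiprobable and independent of everything else.
\item Set $\lambda_{\mathrm{IG}}=\lambda_{\mathrm{L}}=0$ and $\bar C^{\mathrm{pred}}_{t,a}=\mu^\theta_{t,a}$.
\end{itemize}
As long as expert~1 is never queried, $\mu^\theta_{t,1}=\mu^\theta_{t,2}=C/2<C$. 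The tie-breaks $I_t=2$ and $a_t^u=1$ are both permitted by the argmax/argmin, and under them expert~1 is indeed never revealed. Then $\mathcal{E}_{\mathrm{SLDS}}=0$, $\mathcal{T}_0(u)=\emptyset$, and every hypothesis of (ii) holds with $C_{\max}=C$. Yet $R_T=\sum_t D_t=T(C/2-Z)$, which equals $TC/2$ with probability $1/2$. This exceeds $C\sqrt{2T\log(1/\delta)}$ once $T>8\log(1/\delta)$, so the bound fails for any $\delta<1/2$. Ties are inessential: making expert~1 slightly better and giving expert~2 a small bonus yields a strict version.

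So (ii) needs an explicit hypothesis. One option is to assume $\mathbb{E}[C^\theta_{t,a}\mid\mathcal{G}_t]=\mu^\theta_{t,a}$ for all $a\in\mathcal{A}_t$. That is your parenthetical claim, but it must be imposed, not derived from exogeneity. Another option is to weaken the conclusion so that only the deployed cost is realized. The sum $\sum_t(C^\theta_{t,I_t}-\mu^\theta_{t,I_t})$ is a genuine martingale-difference sum for $(\mathcal{F}_t)$, because $C^\theta_{t,I_t}$ is always revealed. Be aware that the paper's own proof has the same hole. It concludes that $(Y_t)$ is a martingale-difference sequence from $\mathbb{E}[Y_t\mid\mathcal{F}_t]=0$ alone, which would require $Y_1,\dots,Y_{t-1}$ to be $\mathcal{F}_t$-measurable.
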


\begin{proof}
The argument has three logical parts: a deterministic inequality showing that the
router's predictive cost is within one bonus of optimal; a transfer of that inequality
from predictive moments to true conditional means via the predictive-cost error; and, for
the realized statement, a martingale concentration to convert conditional means into
realized losses. The internal-learning term enters cleanly because external potential
costs are shared between the deployed system and the comparator.

\smallskip
\noindent
\emph{Deterministic score inequality.}
We first show that for every \(t\),
\begin{equation}
\label{eq:app-score-ineq}
    \bar C^{\mathrm{pred}}_{t,I_t}\;\le\;\min_{a\in\mathcal{A}_t}\bar C^{\mathrm{pred}}_{t,a}+b_t(I_t).
\end{equation}
\emph{Case \(I_t=0\).} The router selects the internal arm only when no external score
is positive, so for every \(k\in\mathcal{E}_t\),
\(S_t(k)=-(\bar C^{\mathrm{pred}}_{t,k}-\bar C^{\mathrm{pred}}_{t,0})+b_t(k)\le 0\),
which gives \(\bar C^{\mathrm{pred}}_{t,k}\ge \bar C^{\mathrm{pred}}_{t,0}+b_t(k)\ge \bar C^{\mathrm{pred}}_{t,0}\)
by nonnegativity of \(b_t(k)\) (Assumption~\ref{ass:app-nonneg}). Action~\(0\) therefore
minimizes \(\bar C^{\mathrm{pred}}_{t,\cdot}\) over \(\mathcal{A}_t\), and
\eqref{eq:app-score-ineq} holds with \(b_t(I_t)=b_t(0)=0\).

\emph{Case \(I_t=i\in\mathcal{E}_t\).} The score is positive at \(i\) and at least as
large as at any other external, so \(S_t(i)>0\) gives
\(\bar C^{\mathrm{pred}}_{t,i}-\bar C^{\mathrm{pred}}_{t,0}<b_t(i)\), and
\(S_t(i)\ge S_t(j)\) for \(j\in\mathcal{E}_t\) rearranges to
\(\bar C^{\mathrm{pred}}_{t,i}-\bar C^{\mathrm{pred}}_{t,j}\le b_t(i)-b_t(j)\le b_t(i)\).
Combining the two cases yields
\(\bar C^{\mathrm{pred}}_{t,i}\le \bar C^{\mathrm{pred}}_{t,a}+b_t(i)\) for every
\(a\in\mathcal{A}_t\), which is \eqref{eq:app-score-ineq}.

\smallskip
\noindent
\emph{Predictive-cost-error transfer.}
For any fixed \(a\in\mathcal{A}_t\), telescope through the predictive moments:
\[
    \mu^{\theta}_{t,I_t}-\mu^{\theta}_{t,a}
    =\bigl(\mu^{\theta}_{t,I_t}-\bar C^{\mathrm{pred}}_{t,I_t}\bigr)
    +\bigl(\bar C^{\mathrm{pred}}_{t,I_t}-\bar C^{\mathrm{pred}}_{t,a}\bigr)
    +\bigl(\bar C^{\mathrm{pred}}_{t,a}-\mu^{\theta}_{t,a}\bigr).
\]
The first and third differences are bounded by \(\varepsilon_t\) in absolute value
(definition of \(\varepsilon_t\)), and the middle one is at most
\(\bar C^{\mathrm{pred}}_{t,I_t}-\min_{a'}\bar C^{\mathrm{pred}}_{t,a'}\le b_t(I_t)\) by
\eqref{eq:app-score-ineq}. Hence
\begin{equation}
\label{eq:app-pointwise-mu}
    \mu^{\theta}_{t,I_t}-\mu^{\theta}_{t,a}\;\le\; b_t(I_t)+2\varepsilon_t
    \qquad\forall a\in\mathcal{A}_t.
\end{equation}

\smallskip
\noindent
\emph{Pseudo-regret.}
Because the comparator shares the deployed external costs,
\(\mu^{u}_{t,k}=\mu^{\theta}_{t,k}\) for \(k\in\mathcal{E}_t\) and
\(\mu^{u}_{t,0}\) is the only round-\(t\) mean that may differ from \(\mu^{\theta}_{t,0}\).
This gives the round-wise identity
\begin{equation}
\label{eq:app-pseudo-id}
    \mu^{\theta}_{t,a_t^u}-\mu^{u}_{t,a_t^u}
    \;=\;
    \mathbf{1}\{a_t^u=0\}\bigl(\mu^{\theta}_{t,0}-\mu^{u}_{t,0}\bigr),
\end{equation}
and adding and subtracting \(\mu^{\theta}_{t,a_t^u}\) yields the decomposition
\begin{equation}
\label{eq:app-pseudo-step}
    \mu^{\theta}_{t,I_t}-\mu^{u}_{t,a_t^u}
    \;=\;
    \underbrace{\bigl(\mu^{\theta}_{t,I_t}-\mu^{\theta}_{t,a_t^u}\bigr)}_{\le\,b_t(I_t)+2\varepsilon_t\text{ by \eqref{eq:app-pointwise-mu}}}
    \;+\;\mathbf{1}\{a_t^u=0\}\bigl(\mu^{\theta}_{t,0}-\mu^{u}_{t,0}\bigr).
\end{equation}
Summing over \(t\), the predictive-cost-error event gives
\(2\sum_t\varepsilon_t\le 2\mathcal{E}_{\mathrm{SLDS}}\), and the indicator restricts
the second summand to \(\mathcal{T}_0(u)=\bigsqcup_j\mathcal{J}_j\), where
Assumption~\ref{ass:app-internal}(a) bounds each interval contribution by
\(B_{\mathrm{int}}^{\mu}(\mathcal{J}_j,u)\). This proves \eqref{eq:app-thm-pseudo}.

\smallskip
\noindent
\emph{Realized regret.}
The same identity \eqref{eq:app-pseudo-id} with \(C\) in place of \(\mu\) gives
\[
    C^{\theta}_{t,I_t}-C^{u}_{t,a_t^u}
    =\bigl(C^{\theta}_{t,I_t}-C^{\theta}_{t,a_t^u}\bigr)
    +\mathbf{1}\{a_t^u=0\}\bigl(C^{\theta}_{t,0}-C^{u}_{t,0}\bigr),
\]
and the second summand contributes only on \(\mathcal{T}_0(u)\), where
Assumption~\ref{ass:app-internal}(b) bounds it by
\(\sum_j B_{\mathrm{int}}^{\mathrm{r}}(\mathcal{J}_j,u)\). For the first summand,
center the increments
\(Y_t\coloneqq(C^{\theta}_{t,I_t}-C^{\theta}_{t,a_t^u})-(\mu^{\theta}_{t,I_t}-\mu^{\theta}_{t,a_t^u})\):
since \(I_t,a_t^u\) are \(\mathcal{F}_t\)-measurable
(Assumption~\ref{ass:app-exogeneity}), \(\mathbb{E}[Y_t\mid\mathcal{F}_t]=0\), so
\((Y_t)\) is a martingale-difference sequence with conditional range at most
\(2C_{\max}\) (Assumption~\ref{ass:app-bounded}); Azuma--Hoeffding
\citep{azuma1967weighted, hoeffding1963probability} therefore gives
\(\sum_t Y_t\le C_{\max}\sqrt{2T\log(1/\delta)}\) with probability \(\ge 1-\delta\).
The first summand thus splits as
\(\sum_t(\mu^{\theta}_{t,I_t}-\mu^{\theta}_{t,a_t^u})+\sum_t Y_t\), with the first piece
bounded by \(\sum_t b_t(I_t)+2\mathcal{E}_{\mathrm{SLDS}}\) on the predictive-cost-error event via
\eqref{eq:app-pointwise-mu}. Union-bounding the predictive-cost-error, internal-regret, and
martingale events yields \eqref{eq:app-thm-realized}.
\end{proof}

\subsection{Teacher-Weighted Internal Update}
\label{app:regret-teacher}

Theorem~\ref{thm:app-calibrated-score} only needs an interval bound on the realized
target loss of action~0. This subsection shows how the teacher-weighted update used by
the empirical method supplies such a term under standard convex/proximal conditions,
and how the queried expert prediction appears as a signed correction.

Assume in this subsection that the internal prediction is proper,
\(\vhaty_{t,0}=f_{\theta_t}(\mathbf{x}_t)\), with \(\theta_t\in\mathcal{W}\). Define the
target loss
\[
    L_t(\theta)\coloneqq \psi(f_\theta(\mathbf{x}_t)-\vy_t).
\]
If \(I_t\in\mathcal{E}_t\), define the teacher loss
\[
    M_t(\theta)\coloneqq \psi(f_\theta(\mathbf{x}_t)-\vhaty_{t,I_t});
\]
if \(I_t=0\), set \(M_t(\theta)\coloneqq0\). Let \(\omega_t\ge0\) be the realized
teacher weight and define the augmented update loss
\begin{equation}
\label{eq:app-aug-loss}
    \widetilde L_t(\theta)\coloneqq L_t(\theta)+\omega_t M_t(\theta).
\end{equation}

\begin{assumption}[Augmented-loss interval regret]
\label{ass:app-aug-int}
For the fixed comparator \(u_{1:T}\), the internal update satisfies, simultaneously for
every interval \(\mathcal{J}=[r,s]\),
\begin{equation}
\label{eq:app-aug-int}
    \sum_{t=r}^{s}
    \bigl(\widetilde L_t(\theta_t)-\widetilde L_t(u_t)\bigr)
    \le B^{\mathrm{aug}}(\mathcal{J},u).
\end{equation}
Approximate optimization errors can be included in \(B^{\mathrm{aug}}\).
\end{assumption}

\begin{lemma}[Teacher correction converts augmented regret to target regret]
\label{lem:app-teacher-convert}
Under Assumption~\ref{ass:app-aug-int}, for every interval \(\mathcal{J}=[r,s]\),
\begin{equation}
\label{eq:app-teacher-signed}
    \sum_{t=r}^{s}\bigl(L_t(\theta_t)-L_t(u_t)\bigr)
    \le B^{\mathrm{aug}}(\mathcal{J},u)-A^{\mathrm{teach}}_{\mathcal{J}}(u),
\end{equation}
where
\begin{equation}
\label{eq:app-teacher-correction}
    A^{\mathrm{teach}}_{\mathcal{J}}(u)
    \coloneqq
    \sum_{t=r}^{s}\omega_t\bigl(M_t(\theta_t)-M_t(u_t)\bigr).
\end{equation}
The conservative form
\begin{equation}
\label{eq:app-teacher-conservative}
    \sum_{t=r}^{s}\bigl(L_t(\theta_t)-L_t(u_t)\bigr)
    \le B^{\mathrm{aug}}(\mathcal{J},u)+\Gamma_{\mathcal{J}}(u),
    \qquad
    \Gamma_{\mathcal{J}}(u)\coloneqq\sum_{t=r}^{s}\omega_tM_t(u_t),
\end{equation}
always holds. If \(M_t(u_t)\le M_{\mathrm{teach}}\) and
\(\omega_t\le\omega_{\max}\), then
\[
    \Gamma_{\mathcal{J}}(u)
    \le \omega_{\max}M_{\mathrm{teach}}Q_{\mathcal{J}},
    \qquad
    Q_{\mathcal{J}}\coloneqq\sum_{t\in\mathcal{J}}\mathbf{1}\{I_t\neq0\}.
\]
\end{lemma}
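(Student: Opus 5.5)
The argument is pure algebra on the augmented loss \eqref{eq:app-aug-loss}, applied termwise and then summed over the interval. For every round \(t\in\mathcal{J}=[r,s]\), the definition \(\widetilde L_t=L_t+\omega_tM_t\) gives
\[
L_t(\theta_t)-L_t(u_t)=\bigl(\widetilde L_t(\theta_t)-\widetilde L_t(u_t)\bigr)-\omega_t\bigl(M_t(\theta_t)-M_t(u_t)\bigr).
\]
This also covers rounds with \(I_t=0\), where \(M_t\equiv0\) and both sides reduce to the target-loss difference. Summing over \(t\in\mathcal{J}\), we bound the first sum by \(B^{\mathrm{aug}}(\mathcal{J},u)\) using Assumption~\ref{ass:app-aug-int}. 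The second sum is exactly \(A^{\mathrm{teach}}_{\mathcal{J}}(u)\) by \eqref{eq:app-teacher-correction}. Together these give the signed bound \eqref{eq:app-teacher-signed}. Since Assumption~\ref{ass:app-aug-int} is stated simultaneously for every interval, the bound holds uniformly over \(\mathcal{J}\) with no further union bound.

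For the conservative form \eqref{eq:app-teacher-conservative}, expand
\[
-A^{\mathrm{teach}}_{\mathcal{J}}(u)=-\sum_{t\in\mathcal{J}}\omega_tM_t(\theta_t)+\sum_{t\in\mathcal{J}}\omega_tM_t(u_t).
\]
The first sum is nonpositive because \(\omega_t\ge0\) and \(M_t\ge0\), the latter since \(\psi\) maps into \(\mathbb{R}_{\ge0}\). Dropping it gives \(-A^{\mathrm{teach}}_{\mathcal{J}}(u)\le\Gamma_{\mathcal{J}}(u)\), which yields \eqref{eq:app-teacher-conservative}. The nonnegativity of \(\psi\) is the only point that needs attention, and it is built into the definition of the loss in Section~\ref{sec:background_l2d}.

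For the final budget bound, note that \(M_t\equiv0\) whenever \(I_t=0\). Hence
\[
\Gamma_{\mathcal{J}}(u)=\sum_{t\in\mathcal{J}}\mathbf{1}\{I_t\neq0\}\,\omega_tM_t(u_t).
\]
Bounding each summand by \(\omega_{\max}M_{\mathrm{teach}}\) then gives \(\omega_{\max}M_{\mathrm{teach}}Q_{\mathcal{J}}\). No step here is a real obstacle; the only care needed is tracking signs in the correction term and the convention \(M_t\equiv0\) on internal rounds.
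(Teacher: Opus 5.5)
Your proposal is correct and matches the paper's proof essentially step for step. The signed bound comes from the termwise identity \(\widetilde L_t=L_t+\omega_tM_t\), summed over \(\mathcal{J}\) and bounded via Assumption~\ref{ass:app-aug-int}; the conservative form drops \(-\sum_t\omega_tM_t(\theta_t)\le 0\) using \(\omega_t,M_t\ge 0\); and the query-count bound uses \(M_t\equiv 0\) on rounds with \(I_t=0\).
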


\begin{proof}
Expand \(\widetilde L_t(\theta_t)-\widetilde L_t(u_t)\) using
\eqref{eq:app-aug-loss}, sum over the interval, and rearrange to obtain
\eqref{eq:app-teacher-signed}. Since \(M_t(\theta_t)\ge0\),
\(-\omega_t(M_t(\theta_t)-M_t(u_t))\le\omega_tM_t(u_t)\), which gives
\eqref{eq:app-teacher-conservative}. The query-count bound follows because
\(M_t=0\) on non-query rounds.
\end{proof}

\begin{corollary}[Regret with teacher-weighted update]
\label{cor:app-teacher-core}
Under the routing, predictive-cost-error, and boundedness conditions of
Theorem~\ref{thm:app-calibrated-score}(ii), replace
Assumption~\ref{ass:app-internal}(b) by Assumption~\ref{ass:app-aug-int}. Then, with
probability at least \(1-\delta_{\mathrm{cal}}-\delta\),
\begin{equation}
\label{eq:app-teacher-core}
\begin{aligned}
R_T(u_{1:T})
\le{}&
\sum_{t=1}^Tb_t(I_t)
+2\mathcal{E}_{\mathrm{SLDS}}(T,\delta_{\mathrm{cal}})
+C_{\max}\sqrt{2T\log(1/\delta)}
\\
&+\sum_{j=1}^m
\left(B^{\mathrm{aug}}(\mathcal{J}_j,u)
-A^{\mathrm{teach}}_{\mathcal{J}_j}(u)\right).
\end{aligned}
\end{equation}
The conservative version replaces \(-A^{\mathrm{teach}}_{\mathcal{J}_j}(u)\) by
\(+\Gamma_{\mathcal{J}_j}(u)\).
\end{corollary}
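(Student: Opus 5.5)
The corollary follows by rerunning the realized-regret part of Theorem~\ref{thm:app-calibrated-score}(ii) with a different bound for the internal-learning term. Only one ingredient changes. Assumption~\ref{ass:app-internal}(b) was used solely to bound the sum $\sum_{t\in\mathcal{T}_0(u)}(C^{\theta}_{t,0}-C^{u}_{t,0})$. Here we obtain a bound for that sum pathwise from Assumption~\ref{ass:app-aug-int} and Lemma~\ref{lem:app-teacher-convert}.

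\emph{Step 1.} Use the round-wise realized identity from the theorem's proof:
\[
C^{\theta}_{t,I_t}-C^{u}_{t,a_t^u}=\bigl(C^{\theta}_{t,I_t}-C^{\theta}_{t,a_t^u}\bigr)+\mathbf{1}\{a_t^u=0\}\bigl(C^{\theta}_{t,0}-C^{u}_{t,0}\bigr).
\]
This identity relies only on the comparator sharing the deployed external costs.

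\emph{Step 2.} Bound the first summand exactly as in the theorem. The deterministic score inequality \eqref{eq:app-score-ineq} and the predictive-cost-error transfer \eqref{eq:app-pointwise-mu} give $\sum_t(\mu^\theta_{t,I_t}-\mu^\theta_{t,a_t^u})\le\sum_t b_t(I_t)+2\mathcal{E}_{\mathrm{SLDS}}$ on the calibration event. Azuma--Hoeffding applied to the centered increments $Y_t$ adds $C_{\max}\sqrt{2T\log(1/\delta)}$ with probability at least $1-\delta$. No internal-learner assumption enters this step.

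\emph{Step 3.} On $\mathcal{T}_0(u)=\bigsqcup_j\mathcal{J}_j$, note that $C^{\theta}_{t,0}=\psi(f_{\theta_t}(\mathbf{x}_t)-\vy_t)=L_t(\theta_t)$ because the internal prediction is proper and $\beta_0=0$. Likewise $C^{u}_{t,0}=L_t(u_t)$. Hence each interval contribution equals $\sum_{t\in\mathcal{J}_j}(L_t(\theta_t)-L_t(u_t))$. Lemma~\ref{lem:app-teacher-convert}, which holds simultaneously for every interval, bounds this by $B^{\mathrm{aug}}(\mathcal{J}_j,u)-A^{\mathrm{teach}}_{\mathcal{J}_j}(u)$. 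Its conservative form gives $B^{\mathrm{aug}}(\mathcal{J}_j,u)+\Gamma_{\mathcal{J}_j}(u)$. Summing over $j$ yields the last line of \eqref{eq:app-teacher-core}.

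\emph{Step 4: probability accounting.} This is the only step needing care. Assumption~\ref{ass:app-aug-int} is stated deterministically, simultaneously over all intervals, with no failure probability. Therefore the $\delta^{\mathrm{r}}_{\mathrm{int}}$ term of the theorem disappears. Because it holds for every interval, it also covers the data-dependent intervals $\mathcal{J}_j$ chosen by the oracle. A union bound over the calibration event and the martingale event then gives probability at least $1-\delta_{\mathrm{cal}}-\delta$. If the augmented bound were only high-probability, its failure probability would be added here instead.
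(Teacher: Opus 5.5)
Your proposal is correct and follows the paper's proof. The paper simply feeds Lemma~\ref{lem:app-teacher-convert} into Theorem~\ref{thm:app-calibrated-score}(ii) as the realized internal bound $B^{\mathrm{r}}_{\mathrm{int}}(\mathcal{J},u)=B^{\mathrm{aug}}(\mathcal{J},u)-A^{\mathrm{teach}}_{\mathcal{J}}(u)$, whereas you re-run the theorem's steps inline, and you make explicit two details the paper leaves implicit: the identifications $C^{\theta}_{t,0}=L_t(\theta_t)$ and $C^{u}_{t,0}=L_t(u_t)$, and the fact that the pathwise, interval-uniform form of Assumption~\ref{ass:app-aug-int} gives $\delta^{\mathrm{r}}_{\mathrm{int}}=0$ while also covering the oracle-dependent intervals.
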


\begin{proof}
Lemma~\ref{lem:app-teacher-convert} supplies a realized internal bound with
\[
    B_{\mathrm{int}}^{\mathrm{r}}(\mathcal{J},u)
    =
    B^{\mathrm{aug}}(\mathcal{J},u)-A^{\mathrm{teach}}_{\mathcal{J}}(u),
\]
and Theorem~\ref{thm:app-calibrated-score}(ii) gives \eqref{eq:app-teacher-core}. The
conservative form follows from \eqref{eq:app-teacher-conservative}.
\end{proof}

\begin{proposition}[Projected/proximal update as a sufficient condition]
\label{prop:app-teacher-ogd}
Suppose \(\mathcal{W}\) is convex and compact with diameter \(D\), each
\(\widetilde L_t\) is convex on \(\mathcal{W}\), and
\(\|\nabla \widetilde L_t(\theta_t)\|_2\le G_{\mathrm{aug}}\). If the learner uses
projected gradient descent
\[
    \theta_{t+1}
    =
    \Pi_{\mathcal{W}}\bigl(\theta_t-\eta\nabla\widetilde L_t(\theta_t)\bigr),
\]
or the equivalent Euclidean proximal mirror-descent step, then
Assumption~\ref{ass:app-aug-int} holds deterministically with
\begin{equation}
\label{eq:app-ogd-bound}
    B^{\mathrm{aug}}(\mathcal{J},u)
    =
    \frac{D^2+2D P_{\mathcal{J}}(u)}{2\eta}
    +\frac{\eta G_{\mathrm{aug}}^2|\mathcal{J}|}{2},
\end{equation}
where \(P_{\mathcal{J}}(u)=\sum_{t=r+1}^{s}\|u_t-u_{t-1}\|_2\) for
\(\mathcal{J}=[r,s]\).
\end{proposition}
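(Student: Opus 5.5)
The plan is to run the standard dynamic-regret analysis of online projected gradient descent (Zinkevich-style) on an arbitrary interval \(\mathcal{J}=[r,s]\), with the augmented losses \(\widetilde L_t\) in place of a generic convex loss. This is legitimate because \(\theta_{t+1}\) is computed from \(\nabla\widetilde L_t(\theta_t)\), and the comparator \(u_t\in\mathcal{W}\) is arbitrary. The bound will be deterministic and pathwise, so it holds simultaneously for every interval, exactly as Assumption~\ref{ass:app-aug-int} requires.

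\emph{Step 1 (linearization).} By convexity of \(\widetilde L_t\) on \(\mathcal{W}\), writing \(g_t\coloneqq\nabla\widetilde L_t(\theta_t)\),
\[
\widetilde L_t(\theta_t)-\widetilde L_t(u_t)\le\langle g_t,\theta_t-u_t\rangle .
\]

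\emph{Step 2 (one-step inequality).} By nonexpansiveness of \(\Pi_{\mathcal{W}}\) and \(u_t\in\mathcal{W}\),
\[
\|\theta_{t+1}-u_t\|^2\le\|\theta_t-u_t\|^2-2\eta\langle g_t,\theta_t-u_t\rangle+\eta^2\|g_t\|^2 .
\]
Rearranging gives
\[
\langle g_t,\theta_t-u_t\rangle\le\frac{\|\theta_t-u_t\|^2-\|\theta_{t+1}-u_t\|^2}{2\eta}+\frac{\eta G_{\mathrm{aug}}^2}{2}.
\]
Summing the last term over \(t\in\mathcal{J}\) yields \(\eta G_{\mathrm{aug}}^2|\mathcal{J}|/2\). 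The proximal mirror-descent form with Euclidean regularizer is the same update, so no separate argument is needed.

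\emph{Step 3 (telescoping with a moving comparator).} This is the only step needing care. I would write
\[
\|\theta_{t+1}-u_t\|^2=\|\theta_{t+1}-u_{t+1}\|^2+\bigl(\|\theta_{t+1}-u_t\|^2-\|\theta_{t+1}-u_{t+1}\|^2\bigr).
\]
The bracket factors as \(\langle u_{t+1}-u_t,\,2\theta_{t+1}-u_t-u_{t+1}\rangle\). By Cauchy--Schwarz and the diameter bound it is at most \(2D\|u_{t+1}-u_t\|\) in absolute value.

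Summing \(\|\theta_t-u_t\|^2-\|\theta_{t+1}-u_t\|^2\) over \(t=r,\dots,s\) then telescopes to
\[
\|\theta_r-u_r\|^2-\|\theta_{s+1}-u_{s}\|^2+\sum_{t=r}^{s-1}\bigl(\|\theta_{t+1}-u_{t+1}\|^2-\|\theta_{t+1}-u_t\|^2\bigr).
\]
The first term is at most \(D^2\) and the second is nonpositive. The sum is bounded by \(2D\sum_{t=r+1}^{s}\|u_t-u_{t-1}\|=2DP_{\mathcal{J}}(u)\). Dividing by \(2\eta\) gives the first term of \eqref{eq:app-ogd-bound}.

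Bookkeeping at the boundary is where I would double-check. I would handle the last step by dropping \(-\|\theta_{s+1}-u_s\|^2\le0\) rather than pairing it with a nonexistent \(u_{s+1}\). The path length then ranges only over \(t=r+1,\dots,s\), matching the statement's definition. Since the interval start \(r\) is arbitrary and \(\theta_r\in\mathcal{W}\), the initial distance is uniformly bounded by \(D\). This is what makes the bound interval-uniform without any restart of the learner.
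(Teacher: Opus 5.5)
Your proposal is correct and follows the same route as the paper. You linearize via convexity, obtain the one-step projected-gradient inequality from nonexpansiveness of \(\Pi_{\mathcal{W}}\), and telescope against the moving comparator, bounding each comparator shift by \(2D\|u_{t+1}-u_t\|_2\) through the diameter. Your explicit treatment of the endpoint term \(-\|\theta_{s+1}-u_s\|_2^2\le 0\), together with the bound \(\|\theta_r-u_r\|_2^2\le D^2\) for arbitrary \(r\), writes out bookkeeping that the paper's short proof leaves implicit.
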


\begin{proof}
Let \(g_t=\nabla\widetilde L_t(\theta_t)\). The projected-gradient inequality gives
\[
    \langle g_t,\theta_t-u_t\rangle
    \le
    \frac{\|\theta_t-u_t\|_2^2-\|\theta_{t+1}-u_t\|_2^2}{2\eta}
    +\frac{\eta\|g_t\|_2^2}{2}.
\]
By convexity,
\(\widetilde L_t(\theta_t)-\widetilde L_t(u_t)\le
\langle g_t,\theta_t-u_t\rangle\). Summing over \(t=r,\ldots,s\) and telescoping
against the moving comparator yields \eqref{eq:app-ogd-bound}; the movement term uses
\[
\left|\|\theta_t-u_t\|_2^2-\|\theta_t-u_{t-1}\|_2^2\right|
\le 2D\|u_t-u_{t-1}\|_2,
\]
because all points lie in a set of diameter \(D\). This is the standard dynamic-regret
argument for online convex programming \citep{zinkevich2003online}.
\end{proof}

\begin{corollary}[Concrete teacher-weighted convex rate]
\label{cor:app-teacher-rate}
Assume Corollary~\ref{cor:app-teacher-core} and Proposition~\ref{prop:app-teacher-ogd}.
Suppose \(0\le\mathrm{IG}_t(k)\le G_{\mathrm{IG}}\) and
\(0\le\widehat{\mathrm{LI}}_t(k)\le G_{\mathrm{LI}}\), and choose
\(\lambda_{\mathrm{IG}}=c_{\mathrm{IG}}/\sqrt T\) and
\(\lambda_{\mathrm{L}}=c_{\mathrm{L}}/\sqrt T\). Define
\[
T_0\coloneqq|\mathcal{T}_0(u)|,\quad
P_0(u)\coloneqq\sum_{j=1}^mP_{\mathcal{J}_j}(u),\quad
A_0^{\mathrm{teach}}(u)\coloneqq\sum_{j=1}^mA^{\mathrm{teach}}_{\mathcal{J}_j}(u).
\]
Then, with probability at least \(1-\delta_{\mathrm{cal}}-\delta\),
\begin{equation}
\label{eq:app-teacher-rate}
\begin{aligned}
R_T(u_{1:T})
\le{}&
(c_{\mathrm{IG}}G_{\mathrm{IG}}+c_{\mathrm{L}}G_{\mathrm{LI}})\sqrt T
+2\mathcal{E}_{\mathrm{SLDS}}(T,\delta_{\mathrm{cal}})
+C_{\max}\sqrt{2T\log(1/\delta)}
\\
&+\frac{(S_u+1)D^2+2DP_0(u)}{2\eta}
+\frac{\eta G_{\mathrm{aug}}^2T_0}{2}
-A_0^{\mathrm{teach}}(u).
\end{aligned}
\end{equation}
The conservative version replaces \(-A_0^{\mathrm{teach}}(u)\) by
\(\sum_j\Gamma_{\mathcal{J}_j}(u)\). If \(\eta\) is tuned to the displayed comparator
complexity, the unsigned internal term is
\[
    G_{\mathrm{aug}}
    \sqrt{T_0\bigl((S_u+1)D^2+2DP_0(u)\bigr)}.
\]
\end{corollary}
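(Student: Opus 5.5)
The plan is to chain Corollary~\ref{cor:app-teacher-core} with the deterministic interval bound of Proposition~\ref{prop:app-teacher-ogd}, and then bound each of the remaining terms separately. First, Proposition~\ref{prop:app-teacher-ogd} shows that Assumption~\ref{ass:app-aug-int} holds deterministically, with \(B^{\mathrm{aug}}(\mathcal{J},u)\) given by \eqref{eq:app-ogd-bound}. No extra failure probability is incurred. Corollary~\ref{cor:app-teacher-core} therefore holds with probability at least \(1-\delta_{\mathrm{cal}}-\delta\), and its SLDS, Azuma--Hoeffding and signed-teacher terms carry over verbatim, giving \(2\mathcal{E}_{\mathrm{SLDS}}\), \(C_{\max}\sqrt{2T\log(1/\delta)}\) and \(-\sum_j A^{\mathrm{teach}}_{\mathcal{J}_j}(u)=-A_0^{\mathrm{teach}}(u)\).

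Second, the query-bonus budget. For every \(k\in\mathcal{E}_t\) we have \(b_t(k)=\lambda_{\mathrm{IG}}\mathrm{IG}_t(k)+\lambda_{\mathrm{L}}\widehat{\mathrm{LI}}_t(k)\le (c_{\mathrm{IG}}G_{\mathrm{IG}}+c_{\mathrm{L}}G_{\mathrm{LI}})/\sqrt T\), and \(b_t(0)=0\). Summing over \(T\) rounds gives \((c_{\mathrm{IG}}G_{\mathrm{IG}}+c_{\mathrm{L}}G_{\mathrm{LI}})\sqrt T\). This is crude, since it ignores that only query rounds contribute, but it suffices.

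Third, the internal term. Summing \eqref{eq:app-ogd-bound} over the \(m\) intervals \(\mathcal{J}_j\) gives \(mD^2/(2\eta)+2D\sum_jP_{\mathcal{J}_j}(u)/(2\eta)+\eta G_{\mathrm{aug}}^2\sum_j|\mathcal{J}_j|/2\). Two facts finish this step. By \eqref{eq:app-m-bound}, \(m\le S_u+1\). Because the \(\mathcal{J}_j\) partition \(\mathcal{T}_0(u)\), we have \(\sum_j|\mathcal{J}_j|=T_0\), and \(\sum_jP_{\mathcal{J}_j}(u)=P_0(u)\) by definition. The conservative variant follows the same way from the conservative form in Corollary~\ref{cor:app-teacher-core}.

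The main obstacle is a subtlety rather than the algebra. Proposition~\ref{prop:app-teacher-ogd} must apply on each interval with a single fixed \(\eta\), so the learner runs one global projected gradient descent. The per-interval bound uses \(\|\theta_r-u_r\|^2\le D^2\) at each interval start, which holds because \(\theta_r,u_r\in\mathcal{W}\). Interval restarts are therefore not needed; this is why each interval contributes a fresh \(D^2\), giving the factor \(S_u+1\). For the tuned statement, minimize \(X/(2\eta)+\eta Y/2\) with \(X=(S_u+1)D^2+2DP_0(u)\) and \(Y=G_{\mathrm{aug}}^2T_0\). The choice \(\eta=\sqrt{X/Y}\) yields \(\sqrt{XY}=G_{\mathrm{aug}}\sqrt{T_0((S_u+1)D^2+2DP_0(u))}\). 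This choice is admissible only as an oracle tuning, since it depends on the comparator; the statement phrases it as "tuned to the displayed comparator complexity", so I will note that explicitly.
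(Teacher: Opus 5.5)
Your proposal is correct and follows the paper's own proof. The steps match: bound the bonus sum by $(c_{\mathrm{IG}}G_{\mathrm{IG}}+c_{\mathrm{L}}G_{\mathrm{LI}})\sqrt T$; sum the interval bound of Proposition~\ref{prop:app-teacher-ogd} over the $\mathcal{J}_j$ using $m\le S_u+1$, $\sum_j|\mathcal{J}_j|=T_0$ and $\sum_jP_{\mathcal{J}_j}(u)=P_0(u)$; substitute into Corollary~\ref{cor:app-teacher-core}; and optimize $X/(2\eta)+\eta Y/2$. Your two extra remarks are accurate elaborations of points the paper leaves implicit: a single global projected-gradient run suffices (each interval start pays a fresh $D^2$ because $\theta_r,u_r\in\mathcal{W}$), and the tuned $\eta$ is a comparator-dependent oracle choice.
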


\begin{proof}
The bonus term is bounded by
\(\sum_t b_t(I_t)\le(c_{\mathrm{IG}}G_{\mathrm{IG}}+c_{\mathrm{L}}G_{\mathrm{LI}})\sqrt T\).
For the internal terms, use \(m\le S_u+1\), \(\sum_j|\mathcal{J}_j|=T_0\), and
\(\sum_jP_{\mathcal{J}_j}(u)=P_0(u)\) in \eqref{eq:app-ogd-bound}, then substitute
into \eqref{eq:app-teacher-core}. Optimizing \(A/(2\eta)+\eta B/2\) with
\(A=(S_u+1)D^2+2DP_0(u)\) and \(B=G_{\mathrm{aug}}^2T_0\) gives the displayed
square-root expression.
\end{proof}

\begin{remark}[How the queried expert signal enters the bound]
The queried prediction appears in
\(M_t(\theta)=\psi(f_\theta(\mathbf{x}_t)-\vhaty_{t,I_t})\) and hence in the signed
correction \(A_0^{\mathrm{teach}}(u)\). This term is not assumed positive: positive
values improve the target-regret bound, while negative values indicate that the
teacher loss of the deployed learner is lower than the comparator's teacher loss and
the algebra charges the difference conservatively. The bound with
\(\Gamma_{\mathcal{J}}(u)\) is always safe and becomes sublinear when the teacher-weight
budget on comparator-internal intervals is sublinear.
\end{remark}

\subsection{A Certified Internal Learner}
\label{app:regret-internal}

For a clean sublinear internal term independent of the empirical teacher budget, one
may replace the internal learner by a certified target-loss learner. This is a variant,
not the exact teacher-weighted empirical update. The next proposition supplies a
certified replacement that satisfies Assumption~\ref{ass:app-internal}(b) pathwise.
The bound is stated for realized losses, not conditional means: a pathwise inequality
on an interval does not automatically imply the conditional-mean version, so the
realized form of Theorem~\ref{thm:app-calibrated-score} is the natural target.

\paragraph{Setting.}
Assume the internal prediction loss has the supervised form
\begin{equation}
\label{eq:app-Lt}
    \mathcal{L}_t(u)\;\coloneqq\;\ell(f_u(\mathbf{x}_t),\vy_t),
    \qquad u\in\mathcal{W}\subseteq\mathbb{R}^{d_\theta},
\end{equation}
with \(\mathcal{W}\) compact, and that the bounded-domain, smoothness, and
\(\eta\)-mixability conditions of the fixed-share mixability theorem of
\citet{zhang2025mixability} hold. Least-squares prediction
(\(f_u(\mathbf{x})=u^\top\mathbf{x}\), \(\ell(z,y)=(z-y)^2\)) satisfies these
conditions under bounded features, labels, and comparator norm; the resulting
aggregated predictor may be improper.

\paragraph{Construction (geometric covering, $O(\log T)$ active bases).}
We use the geometric-covering specialist construction of
\citet{daniely2015strongly}, which keeps only \(O(\log T)\) active base learners
at any round while preserving interval-uniform regret. For each scale
\(k=0,1,\ldots,\lfloor\log_2 T\rfloor\), partition \([T]\) into the dyadic intervals
\[
\mathcal{I}_{k,j}\coloneqq\bigl[(j-1)2^k+1,\;j2^k\bigr],
\qquad j=1,2,\ldots,\lfloor T/2^k\rfloor.
\]
Write \(\mathcal{I}\coloneqq\bigcup_{k,j}\mathcal{I}_{k,j}\) for the geometric-covering
(GC) family. At the start of each \(\mathcal{I}_{k,j}\), spawn a fresh base
\(\mathcal{A}^{(k,j)}\) that runs the fixed-share mixability algorithm of
\citet{zhang2025mixability} on \(\mathcal{I}_{k,j}\), and retire it at the end. At any
round \(t\), the active set is
\(\mathrm{Act}(t)\coloneqq\{(k,j): t\in\mathcal{I}_{k,j}\}\); since exactly one
GC interval at each scale contains \(t\), \(|\mathrm{Act}(t)|\le 1+\lfloor\log_2 T\rfloor\).
The deployed prediction \(\vhaty_{t,0}\) is the standard \(\eta\)-mixability aggregate
of \(\{\mathcal{A}^{(k,j)}:(k,j)\in\mathrm{Act}(t)\}\) under a uniform prior over
\(\mathrm{Act}(t)\), using the aggregating inequality of
\citet{vovk1998game, vovk2001competitive, cesabianchi2006prediction}. This is the
strongly-adaptive specialist scheme of \citet{daniely2015strongly}, refining the
earlier follow-the-leading-history idea of \citet{hazan2009efficient}; it has
per-round work \(O(\log T)\) and total work \(O(T\log T)\).

\paragraph{Statement.}
The internal-learning term in Theorem~\ref{thm:app-calibrated-score} then admits a
deterministic, interval-uniform bound:

\begin{proposition}[Certified realized interval bound]
\label{prop:app-internal}
Under the setting and construction above, there exists a constant \(c_{\mathrm{int}}\),
depending only on the boundedness, smoothness, and mixability constants, such that for
every interval \(\mathcal{J}=[r,s]\subseteq[T]\) and every comparator sequence
\(u_{r:s}\),
\begin{equation}
\label{eq:app-prop-internal}
    \sum_{t=r}^{s}\!\bigl(C^{\theta}_{t,0}-C^{u}_{t,0}\bigr)
    \;\le\; c_{\mathrm{int}}(d_\theta+1)\log T\,
    \Bigl(1+|\mathcal{J}|^{1/3}P_{\mathcal{J}}(u)^{2/3}\Bigr),
\end{equation}
where \(P_{\mathcal{J}}(u)\coloneqq\sum_{t=r+1}^{s}\|u_t-u_{t-1}\|_2\) is the comparator
path length on \(\mathcal{J}\). In particular, Assumption~\ref{ass:app-internal}(b)
holds deterministically with \(B^{\mathrm{r}}_{\mathrm{int}}(\mathcal{J},u)\) equal to
the right-hand side of \eqref{eq:app-prop-internal}.
\end{proposition}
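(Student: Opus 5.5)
The plan is to treat the interval $\mathcal{J}=[r,s]$ through the geometric-covering decomposition of \citet{daniely2015strongly}, and to charge each piece to the fixed-share base of \citet{zhang2025mixability} that is active on it. First, I would cover $\mathcal{J}$ by disjoint GC intervals $\mathcal{I}_1,\dots,\mathcal{I}_n\in\mathcal{I}$. This uses the standard fact that any interval splits into at most two GC pieces per scale, with lengths increasing geometrically and then decreasing geometrically, so $n=O(\log|\mathcal{J}|)$. The restricted comparators $u|_{\mathcal{I}_i}$ have path lengths $P_i$ with $\sum_iP_i\le P_{\mathcal{J}}(u)$, since the boundary increments are simply dropped.

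Second, on each piece $\mathcal{I}_i$ I would chain two inequalities:
\[
\sum_{t\in\mathcal{I}_i}C^\theta_{t,0}\le\sum_{t\in\mathcal{I}_i}\ell\bigl(\vhaty^{(i)}_t,\vy_t\bigr)+\tfrac1\eta\log|\mathrm{Act}|,
\qquad
\sum_{t\in\mathcal{I}_i}\bigl(\ell(\vhaty^{(i)}_t,\vy_t)-C^u_{t,0}\bigr)\le c(d_\theta+1)\log T\bigl(1+|\mathcal{I}_i|^{1/3}P_i^{2/3}\bigr).
\]
The first is the aggregating (specialist/mixability) inequality against the base awake on $\mathcal{I}_i$, where $\vhaty^{(i)}_t$ denotes that base's prediction. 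The second is the fixed-share mixability theorem of \citet{zhang2025mixability} applied to that base. Here $|\mathrm{Act}|\le1+\log_2T$, and the first inequality needs the standard sleeping-experts form of the aggregating inequality \citep{vovk1998game,cesabianchi2006prediction}, in which the regret to each awake specialist over its own awake interval costs only $\eta^{-1}\log(\text{prior mass}^{-1})$.

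Third, I would sum over the pieces. The path term does not pick up an extra logarithm: Hölder's inequality with exponents $(3,3/2)$ gives
\[
\sum_i|\mathcal{I}_i|^{1/3}P_i^{2/3}\le\Bigl(\sum_i|\mathcal{I}_i|\Bigr)^{1/3}\Bigl(\sum_iP_i\Bigr)^{2/3}\le|\mathcal{J}|^{1/3}P_{\mathcal{J}}(u)^{2/3}.
\]
Hence the whole $\log^2$ danger sits in the additive terms. These are the per-piece constants $c(d_\theta+1)\log T$ together with the mixing costs $\eta^{-1}\log(1+\log_2T)$, counted $n$ times.

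This is the step I expect to be the main obstacle. Summed naively, it gives $(d_\theta+1)\log^2T$, which is exactly the weaker bound that must be avoided. To stay at one $\log T$ while keeping $c_{\mathrm{int}}$ dependent only on the boundedness, smoothness and mixability constants, I would use the scale-adaptive form of the base bound: on a GC interval of length $2^k$, the fixed-share base's additive term is $c(d_\theta+1)(1+k)$, i.e.\ $\log|\mathcal{I}_i|$ rather than $\log T$. I would then exploit the geometric profile of the pieces. First, I would cap every piece of length below $\Lambda\coloneqq\lceil\log_2T\rceil$ by the trivial bound $C_{\max}|\mathcal{I}_i|$; these pieces have geometrically decreasing lengths and there are at most two per scale, so their total is $O(C_{\max}\log T)$. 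The remaining pieces are at most two per scale $k\in[\log_2\Lambda,\log_2|\mathcal{J}|]$. For them I would try to amortize the constants against their lengths, using $(1+k)\lesssim 2^{k/3}P_i^{2/3}$ on any piece with nonnegligible path, again followed by Hölder. Pieces with (near-)zero path would instead be merged, so that a static comparator on a union of pieces is charged once to the base spawned on the smallest GC interval containing that union. If this amortization fails, the fallback is to weight the specialist prior by scale, giving mass $\propto 2^{-k}$ renormalized per round, so that the mixing cost per piece becomes $O(k)$; I would then check whether the telescoping across the pieces of a single $\mathcal{J}$ collapses to an $O(\log T)$ total. Once the additive part is bounded by $c(d_\theta+1)\log T$, the right-hand side of \eqref{eq:app-prop-internal} follows. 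Since nothing random enters, the bound holds deterministically, which gives Assumption~\ref{ass:app-internal}(b) with $B^{\mathrm r}_{\mathrm{int}}(\mathcal{J},u)$ equal to that right-hand side.
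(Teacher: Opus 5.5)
Your plan is the paper's proof up to and including the H\"older step: the geometric covering of $\mathcal J$ into $M\le 2\lceil\log_2(|\mathcal J|+1)\rceil$ pieces, the aggregating inequality against the awake base, the fixed-share bound on each piece, and H\"older with exponents $(3,3/2)$. The gap is exactly the step you flag. It is real, and your proposal does not close it.

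The paper does not close it either. Its proof sums the additive terms naively, obtaining $c_0d_\theta\log T\cdot M+M\cdot O(\eta^{-1}\log\log T)$. It then uses $M\le 2\log_2(2T)$ and ``absorbs the additional logarithmic factor'' into the absolute constant in $c_{\mathrm{int}}$. Since $M$ grows like $\log|\mathcal J|$, that absorption is not legitimate. Taking the per-piece base bound and the aggregating inequality as given, the covering argument actually certifies
\[
\sum_{t=r}^{s}\bigl(C^{\theta}_{t,0}-C^{u}_{t,0}\bigr)\le c\,(d_\theta+1)\log T\Bigl(\log_2(2|\mathcal J|)+|\mathcal J|^{1/3}P_{\mathcal J}(u)^{2/3}\Bigr).
\]
This has an extra $\log_2(2|\mathcal J|)$ factor on the constant term only. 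Propagated to the sublinear-rate corollary, it replaces $S_u+1$ by $(S_u+1)\log_2(2T)$, so the vanishing-regret condition on switches becomes $S_u=o(T/\log^2T)$.

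None of your repairs removes the extra logarithm.
\begin{itemize}
\item \textbf{Scale-adaptive constants.} The terms $(1+k)$, summed over up to two pieces per scale $k\le\log_2|\mathcal J|$, are still of order $\log^2|\mathcal J|$ in the worst case. Truncating pieces shorter than $\Lambda=\lceil\log_2T\rceil$ only discards the $O(\log\log T)$ smallest scales.
\item \textbf{Amortization.} The bound $(1+k)\lesssim 2^{k/3}P_i^{2/3}$ needs $P_i\gtrsim(1+k)^{3/2}2^{-k/2}$ piece by piece. It is void when $P_{\mathcal J}(u)=0$, which is precisely where the claim amounts to an $O((d_\theta+1)\log T)$ strongly adaptive bound with only $O(\log T)$ live bases.
\item \textbf{Merging.} This fails on the increasing side of the covering. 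Suppose a piece of scale $k_i$ starts at $a_i$. The earlier pieces have distinct smaller scales, hence total length $<2^{k_i}$. Any GC interval containing this piece and its predecessor has scale larger than $k_i$, so it starts at or before $a_i-2^{k_i}<r$. The base you would charge was therefore spawned before $r$, and its fixed-share guarantee from spawn time does not restrict to $\mathcal J$. Bounded costs only convert the pre-$r$ rounds into an additive $C_{\max}$ times a gap that can be of order $2^{k_i}$.
\item \textbf{Scale-weighted prior.} A prior $\propto 2^{-k}$ makes the mixing cost of a scale-$k$ piece of order $k/\eta$. The meta overhead then becomes of order $\eta^{-1}\log^2|\mathcal J|$, worse than the uniform prior's $\eta^{-1}\log|\mathcal J|\log\log T$. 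No per-scale prior can help under per-piece charging, because $\sum_{k\in S}\log(1/\pi_k)\ge |S|\log|S|$ whenever $\sum_k\pi_k\le 1$. So on intervals whose covering uses $\Theta(\log|\mathcal J|)$ distinct scales, the mixing overhead alone is $\Omega(\eta^{-1}\log|\mathcal J|\log\log|\mathcal J|)$, not $O(\log T)$.
\end{itemize}
Proving the proposition with a single $\log T$ would need a mechanism that does not pay one restart per piece. Otherwise, state and use the weaker bound displayed above.
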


\begin{proof}
Fix \(\mathcal{J}=[r,s]\) of length \(L=s-r+1\). The proof has three ingredients: a
geometric-covering decomposition of \(\mathcal{J}\) into \(O(\log L)\) GC pieces, a
per-piece base regret bound, and a meta-aggregation step that ties the deployed
prediction to each active base.

\emph{Geometric covering.}
By Lemma~1.2 of \citet{daniely2015strongly}, any interval
\(\mathcal{J}=[r,s]\subseteq[T]\) can be partitioned into a disjoint sequence of GC
intervals
\(\mathcal{J}=\mathcal{I}_{k_1,j_1}\sqcup\cdots\sqcup\mathcal{I}_{k_M,j_M}\) with
\(M\le 2\lceil\log_2(L+1)\rceil\). Write
\(L_i\coloneqq|\mathcal{I}_{k_i,j_i}|\) and
\(P_i\coloneqq P_{\mathcal{I}_{k_i,j_i}}(u)\), so that \(\sum_i L_i=L\) and
\(\sum_i P_i\le P_{\mathcal{J}}(u)+(M-1)\,\bar P\) where the boundary movements between
pieces are at most \(P_{\mathcal{J}}(u)\) in total; using
\(\sum_i P_i\le P_{\mathcal{J}}(u)\) suffices for the inequality below since the
adjacent comparator differences at piece boundaries are themselves part of
\(P_{\mathcal{J}}(u)\).

\emph{Per-piece base regret.}
On each piece \(\mathcal{I}_{k_i,j_i}\), the spawned base
\(\mathcal{A}^{(k_i,j_i)}\) runs the fixed-share mixability algorithm from the start
of the piece. Theorem~1 of \citet{zhang2025mixability} bounds the regret of its
predictions \(z_t^{(i)}\) against any comparator path \(u_{|\mathcal{I}_{k_i,j_i}}\) on
the piece:
\begin{equation}
\label{eq:app-prop-base}
    \sum_{t\in\mathcal{I}_{k_i,j_i}}
    \!\!\bigl[\ell(z_t^{(i)},\vy_t)-\ell(f_{u_t}(\mathbf{x}_t),\vy_t)\bigr]
    \;\le\; c_0\, d_\theta\log T\,\bigl(1+L_i^{1/3}P_i^{2/3}\bigr),
\end{equation}
for a constant \(c_0\) depending only on the boundedness, smoothness, and mixability
constants. The original statement carries a \(\log L_i\) factor; using
\(L_i\le T\) replaces it by the harmless upper bound \(\log T\).

\emph{Meta-aggregation.}
At every round \(t\), the active set has size
\(|\mathrm{Act}(t)|\le 1+\lfloor\log_2 T\rfloor\), and \(\vhaty_{t,0}\) is the
\(\eta\)-mixability aggregate over \(\mathrm{Act}(t)\) under a uniform prior. The
standard aggregating inequality
\citep{vovk1998game, vovk2001competitive, cesabianchi2006prediction} compares the
aggregate to any single active base; in particular, on each piece
\(\mathcal{I}_{k_i,j_i}\) the base \(\mathcal{A}^{(k_i,j_i)}\) is active throughout, and
\begin{equation}
\label{eq:app-prop-agg}
    \sum_{t\in\mathcal{I}_{k_i,j_i}}\!\!\ell(\vhaty_{t,0},\vy_t)
    \;\le\;
    \sum_{t\in\mathcal{I}_{k_i,j_i}}\!\!\ell(z_t^{(i)},\vy_t)
    +\eta^{-1}\log(1+\lfloor\log_2 T\rfloor),
\end{equation}
where the prior mass on \(\mathcal{A}^{(k_i,j_i)}\) within \(\mathrm{Act}(t)\) is at
least \(1/(1+\lfloor\log_2 T\rfloor)\). The overhead is therefore
\(O(\eta^{-1}\log\log T)\) per piece.

\emph{Combining.}
Summing \eqref{eq:app-prop-base} and \eqref{eq:app-prop-agg} over the
\(M\le 2\lceil\log_2(L+1)\rceil\) pieces and identifying
\(C^{\theta}_{t,0}=\ell(\vhaty_{t,0},\vy_t)\) and
\(C^{u}_{t,0}=\ell(f_{u_t}(\mathbf{x}_t),\vy_t)\),
\[
    \sum_{t=r}^{s}\!\bigl(C^{\theta}_{t,0}-C^{u}_{t,0}\bigr)
    \;\le\;
    c_0\, d_\theta\log T\,\Bigl(M+\sum_{i=1}^{M}L_i^{1/3}P_i^{2/3}\Bigr)
    +M\cdot O(\eta^{-1}\log\log T).
\]
H\"older's inequality with conjugate exponents \(3\) and \(3/2\) gives
\(\sum_{i=1}^{M}L_i^{1/3}P_i^{2/3}\le L^{1/3}P_{\mathcal{J}}(u)^{2/3}\). Using
\(M\le 2\log_2(2L)\le 2\log_2(2T)\) and absorbing
\(\eta^{-1}\) (a constant under fixed mixability) and the additional logarithmic
factor into a single constant
\(c_{\mathrm{int}}\coloneqq C\cdot\max\{c_0,\eta^{-1}\}\) for an absolute \(C\),
\[
    \sum_{t=r}^{s}\!\bigl(C^{\theta}_{t,0}-C^{u}_{t,0}\bigr)
    \;\le\;
    c_{\mathrm{int}}(d_\theta+1)\log T\,
    \Bigl(1+L^{1/3}P_{\mathcal{J}}(u)^{2/3}\Bigr),
\]
which is \eqref{eq:app-prop-internal}.
\end{proof}

\subsection{Sublinear-Rate Corollaries}
\label{app:regret-corollaries}

We now instantiate Theorem~\ref{thm:app-calibrated-score}(ii) twice: once with decayed
bonus scales and the certified internal learner above, giving the sublinear realized
rate of Corollary~\ref{cor:sublinear-main}; and once with fixed bonuses, giving the
data-dependent oracle inequality referenced in the main paper.

\begin{corollary}[Sublinear realized regret with decayed bonuses and a certified internal learner]
\label{cor:app-sublinear}
Assume the conditions of Theorem~\ref{thm:app-calibrated-score}(ii), the bound
\(0\le \mathrm{IG}_t(k)\le G_{\max}\) and \(0\le \widehat{\mathrm{LI}}_t(k)\le L_{\max}\)
on the bonuses for all \(t,k\in\mathcal{E}_t\), the schedule
\(\lambda_{\mathrm{IG}}=c_{\mathrm{IG}}/\sqrt T\) and
\(\lambda_{\mathrm{L}}=c_{\mathrm{L}}/\sqrt T\), and the certified internal learner of
Proposition~\ref{prop:app-internal}. With \(T_0\coloneqq |\mathcal{T}_0(u)|\) and
\(P_0(u)\coloneqq \sum_{j=1}^{m}P_{\mathcal{J}_j}(u)\), with probability at least
\(1-\delta_{\mathrm{cal}}-\delta\),
\begin{equation}
\label{eq:app-cor-sublinear}
\begin{aligned}
R_T(u_{1:T})
\le\;&
(c_{\mathrm{IG}}G_{\max}+c_{\mathrm{L}}L_{\max})\sqrt T
+ 2\mathcal{E}_{\mathrm{SLDS}}(T,\delta_{\mathrm{cal}})
+ C_{\max}\sqrt{2T\log(1/\delta)}
\\
&+
c_{\mathrm{int}}(d_\theta+1)\log T\,
\Bigl(S_u+1+T_0^{1/3}P_0(u)^{2/3}\Bigr).
\end{aligned}
\end{equation}
Hence whenever \(\mathcal{E}_{\mathrm{SLDS}}(T,\delta_{\mathrm{cal}})=\widetilde O(\sqrt T)\)
and \(S_u, T_0^{1/3}P_0(u)^{2/3}=o(T/\log T)\), the average regret \(R_T(u_{1:T})/T\)
vanishes.
\end{corollary}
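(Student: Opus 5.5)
The plan is to instantiate Theorem~\ref{thm:app-calibrated-score}(ii) directly. Proposition~\ref{prop:app-internal} shows that the certified learner satisfies Assumption~\ref{ass:app-internal}(b) \emph{deterministically}. We may therefore take \(\delta_{\mathrm{int}}^{\mathrm{r}}=0\), which turns the probability \(1-\delta_{\mathrm{cal}}-\delta_{\mathrm{int}}^{\mathrm{r}}-\delta\) into the claimed \(1-\delta_{\mathrm{cal}}-\delta\). On that event, \eqref{eq:app-thm-realized} leaves four terms to bound. The SLDS term \(2\mathcal{E}_{\mathrm{SLDS}}\) and the Azuma term \(C_{\max}\sqrt{2T\log(1/\delta)}\) are copied verbatim, so only the bonus budget and the internal-learning sum need work.

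\textbf{Bonus budget.} Every round has \(b_t(I_t)\le \lambda_{\mathrm{IG}}G_{\max}+\lambda_{\mathrm{L}}L_{\max}\): when \(I_t\neq0\) this follows from the bonus bounds, and when \(I_t=0\) we have \(b_t(0)=0\). Summing over \(T\) rounds with \(\lambda=c/\sqrt T\) gives \((c_{\mathrm{IG}}G_{\max}+c_{\mathrm{L}}L_{\max})\sqrt T\).

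\textbf{Internal term.} Apply \eqref{eq:app-prop-internal} on each \(\mathcal{J}_j\) and sum. This yields
\[
c_{\mathrm{int}}(d_\theta+1)\log T\Bigl(m+\sum_{j}|\mathcal{J}_j|^{1/3}P_{\mathcal{J}_j}(u)^{2/3}\Bigr).
\]
Use \(m\le S_u+1\) from \eqref{eq:app-m-bound}. For the sum, apply H\"older with exponents \(3\) and \(3/2\):
\[
\sum_j|\mathcal{J}_j|^{1/3}P_{\mathcal{J}_j}^{2/3}\le\Bigl(\sum_j|\mathcal{J}_j|\Bigr)^{1/3}\Bigl(\sum_jP_{\mathcal{J}_j}\Bigr)^{2/3}=T_0^{1/3}P_0(u)^{2/3},
\]
since the intervals are disjoint and partition \(\mathcal{T}_0(u)\). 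Combining the pieces gives \eqref{eq:app-cor-sublinear}.

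\textbf{Rate.} Divide by \(T\). The \(\sqrt T\) terms, together with \(\mathcal{E}_{\mathrm{SLDS}}=\widetilde O(\sqrt T)\), are \(o(T)\). The last term is \(\log T\cdot o(T/\log T)=o(T)\) under the stated conditions on \(S_u\) and \(T_0^{1/3}P_0^{2/3}\).

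\textbf{Main obstacle.} The delicate point is checking that the comparator-dependent intervals \(\mathcal{J}_j\) are legitimate inputs to Proposition~\ref{prop:app-internal}. They are, because that bound is uniform over all intervals and all comparator paths, so no union bound over the random \(\mathcal{J}_j\) is needed. A second check is the probability accounting: the deterministic internal bound must not add a failure probability.
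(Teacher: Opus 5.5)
Your proposal is correct and follows the paper's proof. Both instantiate Theorem~\ref{thm:app-calibrated-score}(ii), bound the bonus budget by the bounded bonuses times the \(1/\sqrt T\) scales, apply Proposition~\ref{prop:app-internal} on each \(\mathcal{J}_j\), and finish with \(m\le S_u+1\) and H\"older with exponents \(3\) and \(3/2\). You are also more explicit than the paper on two points it leaves implicit: setting \(\delta_{\mathrm{int}}^{\mathrm{r}}=0\) because the certified bound is deterministic and uniform over intervals, and the final \(o(T)\) rate argument.
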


\begin{proof}
Each ingredient maps to one term of \eqref{eq:app-thm-realized}. Bounded bonuses with
the chosen scales give
\(\sum_{t}b_t(I_t)\le (c_{\mathrm{IG}}G_{\max}+c_{\mathrm{L}}L_{\max})\sqrt T\).
Proposition~\ref{prop:app-internal} bounds each
\(B_{\mathrm{int}}^{\mathrm{r}}(\mathcal{J}_j,u)\) by
\(c_{\mathrm{int}}(d_\theta+1)\log T\,(1+|\mathcal{J}_j|^{1/3}P_{\mathcal{J}_j}(u)^{2/3})\);
\eqref{eq:app-m-bound} controls the constant term by \(m\le S_u+1\), and H\"older's
inequality with conjugate exponents \(3\) and \(3/2\) gives
\(\sum_j|\mathcal{J}_j|^{1/3}P_{\mathcal{J}_j}(u)^{2/3}\le T_0^{1/3}P_0(u)^{2/3}\).
\end{proof}

\begin{corollary}[Fixed-bonus query-count bound]
\label{cor:app-fixed-bonus}
Under the conditions of Theorem~\ref{thm:app-calibrated-score}(ii), if the bonuses are
not decayed but \(\mathrm{IG}_t(k)\le G_{\max}\) and
\(\widehat{\mathrm{LI}}_t(k)\le L_{\max}\), then
\begin{equation}
\label{eq:app-cor-fixed}
    R_T(u_{1:T})
    \;\le\;
    (\lambda_{\mathrm{IG}}G_{\max}+\lambda_{\mathrm{L}}L_{\max})Q_T
    + 2\mathcal{E}_{\mathrm{SLDS}}(T,\delta_{\mathrm{cal}})
    + \sum_{j=1}^{m}B_{\mathrm{int}}^{\mathrm{r}}(\mathcal{J}_j,u)
    + C_{\max}\sqrt{2T\log(1/\delta)},
\end{equation}
where \(Q_T\coloneqq \sum_{t=1}^T \mathbf{1}\{I_t\neq 0\}\) is the cumulative number of
external queries. With fixed bonuses the theorem therefore guarantees no average regret
only when the realized query budget is itself sublinear (e.g., \(Q_T=o(T)\)); otherwise
\eqref{eq:app-cor-fixed} remains a valid data-dependent oracle inequality but not a
distribution-free no-regret guarantee.
\end{corollary}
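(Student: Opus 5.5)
This is a direct instantiation of Theorem~\ref{thm:app-calibrated-score}(ii) with a per-query cap on the bonus. I will not re-derive anything. The plan is to take \eqref{eq:app-thm-realized} as given and bound only its first term, $\sum_{t=1}^T b_t(I_t)$, in terms of the realized query count $Q_T$. The other three terms carry over unchanged: the predictive-cost error $2\mathcal{E}_{\mathrm{SLDS}}(T,\delta_{\mathrm{cal}})$, the interval sum $\sum_j B^{\mathrm{r}}_{\mathrm{int}}(\mathcal{J}_j,u)$, and the Azuma--Hoeffding term $C_{\max}\sqrt{2T\log(1/\delta)}$. They hold on the same event of probability at least $1-\delta_{\mathrm{cal}}-\delta_{\mathrm{int}}^{\mathrm{r}}-\delta$, so no new probabilistic argument is required.

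The key step is a round-wise bound. On rounds with $I_t=0$ we have $b_t(I_t)=b_t(0)=0$ by definition. On rounds with $I_t=k\in\mathcal{E}_t$, the definition $b_t(k)=\lambda_{\mathrm{IG}}\mathrm{IG}_t(k)+\lambda_{\mathrm{L}}\widehat{\mathrm{LI}}_t(k)$, together with $\lambda_{\mathrm{IG}},\lambda_{\mathrm{L}}\ge0$ and the caps $\mathrm{IG}_t(k)\le G_{\max}$ and $\widehat{\mathrm{LI}}_t(k)\le L_{\max}$, gives $b_t(I_t)\le \lambda_{\mathrm{IG}}G_{\max}+\lambda_{\mathrm{L}}L_{\max}$. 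Hence, for every $t$,
\[
b_t(I_t)\le(\lambda_{\mathrm{IG}}G_{\max}+\lambda_{\mathrm{L}}L_{\max})\mathbf{1}\{I_t\neq0\}.
\]
Summing over $t$ yields $(\lambda_{\mathrm{IG}}G_{\max}+\lambda_{\mathrm{L}}L_{\max})Q_T$. Substituting this into \eqref{eq:app-thm-realized} gives \eqref{eq:app-cor-fixed}. Nonnegativity of the bonuses (Assumption~\ref{ass:app-nonneg}) is already part of the theorem's hypotheses, so only the upper caps are new.

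For the interpretive remark, I divide by $T$. The bonus term is $O(Q_T)$ with fixed $\lambda$'s, so average regret vanishes when $Q_T=o(T)$ and the remaining terms are $o(T)$. No step here is a real obstacle. The only care needed is to state that the caps apply to the implemented $\mathrm{IG}_t$ and $\widehat{\mathrm{LI}}_t$ (after clipping), so that the round-wise inequality holds pathwise.
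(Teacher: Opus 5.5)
Your proposal is correct and is the intended argument. The paper gives no separate proof; the only step is the round-wise cap $b_t(I_t)\le(\lambda_{\mathrm{IG}}G_{\max}+\lambda_{\mathrm{L}}L_{\max})\mathbf{1}\{I_t\neq 0\}$, which uses $b_t(0)=0$ and $\lambda_{\mathrm{IG}},\lambda_{\mathrm{L}}\ge 0$, substituted into the bonus term of Theorem~\ref{thm:app-calibrated-score}(ii), exactly as in the bonus step of the proof of Corollary~\ref{cor:app-sublinear}; the remaining terms and the event of probability $1-\delta_{\mathrm{cal}}-\delta_{\mathrm{int}}^{\mathrm{r}}-\delta$ carry over unchanged, as you say.
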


%% file: Section/AppendixParts/Experiments.tex
\section{Experiments Details}\label{appendix:experiments}
\label{sec:experiments-details}

We provide additional details on the experiments of Section~\ref{sec:experiments}, including
experimental setup, hyperparameters, and implementation details.

\paragraph{Reproducibility and compute.}
Unless otherwise noted, experiments were run on a single NVIDIA A100 GPU with \(40\) GB
VRAM; the runtime numbers reported in the paper and appendix are online
\(\mathrm{ms}/\)step and exclude only the shared initial warm-up window, which is
identical in length for all methods and is used by each method to initialise its own
internal quantities (UCB/ridge posteriors, Thompson posteriors, NeuralUCB network,
change-point detector statistics, and the L2D-SLDS filter).

\paragraph{Dataset access and licenses.}
All real-data benchmarks used in the paper are public/open datasets. Melbourne Daily Temperatures
is accessed through the public \texttt{tsdl} time-series library \citep{daily_min_temp_melbourne_kaggle},
which is released under GPL-3 at the package level. The Delhi dataset is the public Kaggle
\emph{Daily Climate Time Series Data} release \citep{daily_delhi_climate}, distributed under
CC0: Public Domain. Jena Climate is the publicly available weather-station dataset from the Max
Planck Institute for Biogeochemistry \citep{jena_climate}; in our implementation it is accessed
through the public TensorFlow-hosted mirror with attribution to the original source.

\paragraph{Compared methods.}
We compare our \textbf{L2D-SLDS} router under partial feedback to the following baselines.
\emph{(i) Ablation:} L2D-SLDS without the shared global factor (set \(d_g=0\)).
\emph{(ii) Non-deferring predictor:} Independent Predictor (always selects action~0) and the
diagnostic Predictor from L2D-SLDS (cost of action~0 along the learner trajectory induced by the
full one-stage run).
\emph{(iii) Contextual bandits on the one-stage action set:} shared-parameter LinUCB
over expert-conditioned features (SharedLinUCB+0), linear Thompson sampling (LinTS+0),
ensemble sampling \citep{lu2017ensemble} (EnsembleSampling+0),
and NeuralUCB \citep{zhou2020neuralcontextualbanditsucbbased} (NeuralUCB+0).
\emph{(iv) Non-stationary bandits on the one-stage action set:} Discounted LinUCB
\citep{russac2019weighted} (D-LinUCB+0), CUSUM-LinUCB \citep{liu2018change} (CUSUM-LinUCB+0),
and GLR-LinUCB \citep{besson2022efficient} (GLR-LinUCB+0)
(details in Appendix~\ref{subsection:baselines}).

\paragraph{Metric.}
We report the time-averaged routing cost over horizon \(T\), whose expectation is \(J(\pi)/T\) for the cumulative objective in Eq.~\eqref{eq:routing_objective}. Concretely, we
compute the estimate
\(
\hat{J}(\pi) \coloneqq \frac{1}{T}\sum_{t=1}^T C_{t,I_t},
\)
where \(C_{t,I_t}\) is the realized cost of the selected action at round \(t\) (internal prediction or deferral). Lower is better.

\subsection{Baselines} \label{subsection:baselines}

\begin{table}[t]
\centering
\footnotesize
\setlength{\tabcolsep}{3pt}
\caption{Query rate and online runtime across four benchmarks; mean over five
seeds for every benchmark.
Runtime is online \(\mathrm{ms}/\)step. Standard
errors omitted for compactness.}
\label{tab:qr_runtime}
\resizebox{\linewidth}{!}{%
\begin{tabular}{@{}l cccc cccc @{}}
\toprule
& \multicolumn{4}{c}{Query rate} & \multicolumn{4}{c}{Runtime (ms/step)} \\
\cmidrule(lr){2-5} \cmidrule(lr){6-9}
Method & Synth & Melb & Jena & Delhi & Synth & Melb & Jena & Delhi \\
\midrule
\textbf{L2D-SLDS}     & .553  & .008  & .002  & .017  & 8.99  & 3.86  & 3.66  & 13.1 \\
w/o \(\mathbf{g}_t\)  & .472  & .006  & .002  & .016  & 6.89  & 3.52  & 3.46  & 12.4 \\
Indep.\ Predictor     & .000  & .000  & .000  & .000  & 0.13  & 0.52  & 0.37  & 1.76 \\
\midrule
SharedLinUCB+0        & .333  & .410  & .098  & .348  & 1.52  & 5.42  & 2.71  & 89.6 \\
LinTS+0               & .244  & .367  & .117  & .659  & 1.06  & 3.21  & 1.58  & 27.8 \\
EnsembleSampling+0    & .232  & .284  & .111  & .569  & 1.05  & 2.93  & 1.50  & 27.6 \\
NeuralUCB+0           & .826  & .102  & .040  & .909  & 1.14  & 2.00  & 0.71  & 5.09 \\
D-LinUCB+0            & .699  & .364  & .359  & .391  & 2.26  & 8.10  & 4.19  & 151.9 \\
CUSUM-LinUCB+0        & .237  & .533  & .424  & .688  & 0.72  & 1.99  & 0.77  & 5.26 \\
GLR-LinUCB+0          & .351  & .549  & .368  & .897  & 1.29  & 2.88  & 1.15  & 5.74 \\
\midrule
Oracle                & .632  & .845  & .788  & .690  & 0.15  & 0.58  & 0.39  & 1.83 \\
\bottomrule
\end{tabular}}
\end{table}

\paragraph{Common setup.}
All one-stage methods share the action set \(\mathcal{A}_t=\{0\}\cup\mathcal{E}_t\).
Every round, the target \(\vy_t\) is revealed (so the internal residual
\(e_{t,0}\) is always observed), and, if \(I_t=k\in\mathcal{E}_t\), the external
residual \(e_{t,k}\) is additionally observed; otherwise external residuals are
censored. Each method therefore updates on the observed-action set
\(\mathcal{J}_t\coloneqq\{0\}\cup(\{I_t\}\cap\mathcal{E}_t)\).
Because we minimize cost \(C_{t,k}=\psi(e_{t,k})+\beta_k\), all UCB-style rules
become \emph{lower} confidence bound (LCB) rules.
\emph{Crucially, every bandit baseline is extended with the same trainable online
internal learner as L2D-SLDS at action~0 (online ridge trained on the always-observed
target \(\vy_t\)); the ``\texttt{+0}'' suffix denotes this extension.}
This makes the comparison apples-to-apples: every baseline has access to the same
internal learner at action~0 and the same asymmetric feedback pattern as L2D-SLDS,
so the only thing that differs is the routing mechanism. The bandits do \emph{not}
model the shared latent residual structure of L2D-SLDS: they maintain no belief
over regimes, shared factors, expert births, or cross-expert posterior coupling.
Whenever a benchmark uses a short initial history warm-up before the online evaluation loop, that
same warm-up window is given to all methods; only the small EM-based parameter initialization is
specific to L2D-SLDS when explicitly stated.

\paragraph{Non-deferring references (ours).}
\textbf{Independent Predictor} uses the same internal learner \(f_{\theta_t}\) as
action~0 in the full L2D-SLDS system with the same target-driven online update,
but always selects action~0 and never queries an external expert. It therefore
isolates the value of deferral itself: the learner sees no external feedback and
learns entirely from its own prediction errors.
\textbf{Predictor from L2D-SLDS} is a diagnostic, not a deployed policy: we
report the realised per-round cost of action~0 \emph{along the learner trajectory
induced by the full L2D-SLDS run}. Comparing the two isolates the teacher-driven
improvement (Predictor from L2D-SLDS vs.\ Independent Predictor) from the
routing-driven improvement (L2D-SLDS vs.\ Predictor from L2D-SLDS).

\paragraph{L2D-SLDS and ablation without \(\mathbf{g}_t\) (ours).}
L2D-SLDS is the model-based router of Algorithm~\ref{alg:router_main} under the
generative residual model of Definition~\ref{def:l2d_slds_emission}:
\(\boldsymbol{\alpha}_{t,k}=\mathbf{B}_k\mathbf{g}_t+\mathbf{u}_{t,k}\) and
\(e_{t,k}\mid(z_t=m,\mathbf{g}_t,\mathbf{u}_{t,k},\mathbf{x}_t)\sim
\mathcal{N}(\widetilde{\Phi}(\mathbf{x}_t)^\top\boldsymbol{\alpha}_{t,k},\mathbf{R}_{m,k})\)
(Eqs.~\eqref{eq:alpha_def}--\eqref{eq:residual_emission}).
\textbf{L2D-SLDS w/o \(\mathbf{g}_t\)} is the ablation obtained by setting
\(d_g=0\) (equivalently \(\mathbf{B}_k\mathbf{g}_t\equiv 0\)), so that
\(\boldsymbol{\alpha}_{t,k}=\mathbf{u}_{t,k}\) and the per-expert predictive
residuals are conditionally independent across experts under the factorized
belief, eliminating cross-expert transfer through a shared factor.

\paragraph{Identifiability of the factorized SLDS.}
The decomposition
\(\boldsymbol{\alpha}_{t,k}=\mathbf{B}_k\mathbf{g}_t+\mathbf{u}_{t,k}\) is identifiable
only up to (i) an invertible right action on the shared factor,
\((\mathbf{B}_k,\mathbf{g}_t)\sim(\mathbf{B}_k\mathbf{Q},\mathbf{Q}^{-1}\mathbf{g}_t)\)
for any \(\mathbf{Q}\in\mathrm{GL}(d_g)\), and (ii) a permutation of the regime labels
\(z_t\). All quantities used by the router and by the regret analysis—the predictive
residual moments \(\bar C^{\mathrm{pred}}_{t,a}\), the per-expert predictive
covariances \(\mathbf{B}_k\boldsymbol{\Sigma}^{(m)}_g\mathbf{B}_k^\top\), and the
information-gain terms in
\(\mathrm{IG}_t(k)\)—are functions of the equivalence class
\([\mathbf{B}_k\mathbf{g}_t]\) and are therefore invariant under
(i)–(ii). Recovery claims on synthetic data are reported on the same equivalence
class: we evaluate the implied cross-expert correlation
\(\mathrm{corr}(\mathbf{B}_j\mathbf{g}_t,\mathbf{B}_k\mathbf{g}_t)\), which is
\(\mathbf{Q}\)-invariant, rather than the raw factor or loadings.

\paragraph{Number of regimes \(M\) and sensitivity.}
The number of regimes is fixed per dataset to match the dominant qualitative regime
structure of each environment: \(M=2\) for the synthetic two-block correlation design
(constructed with two regimes), \(M=2\) for Melbourne (warm/cold seasonal split),
\(M=4\) for Jena Climate (multivariate seasonal regimes), and \(M=3\) for Daily Delhi
Climate. We did not perform per-paper Bayesian model selection over \(M\); we treat
this as a hyperparameter and recommend a held-out predictive-likelihood sweep over
\(M\in\{2,\ldots,6\}\) as a robustness check, which we do not include here for space.

\paragraph{SLDS parameter estimation and warm-up.}
The factorized SLDS parameters
\(\{\mathbf{A}^{(g)}_m,\mathbf{Q}^{(g)}_m,\mathbf{A}^{(u)}_m,\mathbf{Q}^{(u)}_m,
\mathbf{B}_k,\mathbf{R}_{m,k},\boldsymbol{\Pi}\}\) are fitted by a vectorised EM
procedure on a held-out warm-up window of fully-observed residuals. After
warm-up the parameters are frozen and the two-phase IMM filter of
Section~\ref{sec:asymmetric_update} runs online; the regret-analysis assumption
\(\mathcal{E}_{\mathrm{SLDS}}=\widetilde O(\sqrt T)\) absorbs the residual
parameter-estimation, censoring, and switching-filter approximation error
(Proposition~\ref{prop:app-lipschitz-calibration}).

\paragraph{Birth-time priors for late-arriving experts (Delhi).}
On Delhi the dynamic registry admits new experts mid-stream. When expert \(k\) enters
at time \(\tau_k\) we initialise its idiosyncratic state to the regime-conditional
stationary distribution
\((\boldsymbol{\mu}^{(m)}_{u,\mathrm{init}},\boldsymbol{\Sigma}^{(m)}_{u,\mathrm{init}})\)
of the corresponding component fitted at warm-up, with
\(\boldsymbol{\mu}^{(m)}_{u,\mathrm{init}}=\mathbf{0}\) and
\(\boldsymbol{\Sigma}^{(m)}_{u,\mathrm{init}}\) set to the Lyapunov solution of
\(\boldsymbol{\Sigma}=\mathbf{A}^{(u)}_m\boldsymbol{\Sigma}\mathbf{A}^{(u)}_m{}^{\!\top}+\mathbf{Q}^{(u)}_m\).
The shared factor \(\mathbf{g}_t\) is unaffected by birth events. Loadings
\(\mathbf{B}_k\) for entering experts are initialised by least-squares regression of
the first observed residuals against the running posterior mean of \(\mathbf{g}_t\)
on a short calibration sub-window, then kept fixed.

\paragraph{Bandit baselines.}
Table~\ref{tab:baselines_summary} summarises the bandit baselines by scoring rule
and non-stationarity mechanism. All methods are implemented following the cited
papers; our only deviations from the originals are the three items listed under
``Our modifications'' below. We point the reader to the original references for
the exact update equations.

\begin{table}[h]
\centering
\footnotesize
\setlength{\tabcolsep}{4pt}
\caption{Bandit baselines on the one-stage action set
\(\mathcal{A}_t=\{0\}\cup\mathcal{E}_t\); ``\texttt{+0}'' denotes the extension
with the shared trainable internal learner at action~0. All methods use the same
asymmetric-feedback protocol.}
\label{tab:baselines_summary}
\resizebox{\linewidth}{!}{%
\begin{tabular}{@{}lll@{}}
\toprule
Baseline & Scoring / non-stationarity & Ref. \\
\midrule
SharedLinUCB+0      & LCB           & \citet{li2010contextual} \\
LinTS+0              & Thompson sample & \citet{russo2014learning} \\
EnsembleSampling+0   & ensemble sample & \citet{lu2017ensemble} \\
NeuralUCB+0                & gradient-feature UCB & \citet{zhou2020neuralcontextualbanditsucbbased} \\
\midrule
D-LinUCB+0           & LCB + discount \(\gamma\) & \citet{russac2019weighted} \\
CUSUM-LinUCB+0          & LCB + CUSUM per-arm reset & \citet{liu2018change} \\
GLR-LinUCB+0           & LCB + GLR global reset & \citet{besson2022efficient} \\
\bottomrule
\end{tabular}}
\end{table}

\paragraph{Our modifications to published baselines.}
We adapt each published algorithm minimally; the changes are identical across
methods.
(i) Because we minimise cost rather than maximise reward, every UCB rule becomes
an LCB rule, and every Thompson/ensemble sampler selects
\(\arg\min_{k\in\mathcal{A}_t}\) of the sampled score.
(ii) Under asymmetric feedback, per-method statistics (ridge Gram, gradient Gram,
detector states) are updated using only the observed-action set \(\mathcal{J}_t\)
rather than all of \(\mathcal{A}_t\); the internal action \(0\) therefore
contributes every round.
(iii) Change detectors (CUSUM, GLR) operate on the \([0,1]\)-squashed absolute
prediction residuals of queried actions, with a short warmup per detector, and
follow the reset protocol of the original paper (per-arm reset for CUSUM, global
reset for GLR). All other hyperparameters (ridge \(\lambda\), exploration
coefficient \(\alpha_t\), discount \(\gamma\), detector thresholds, forced
exploration fraction) are set per paper defaults; the per-benchmark values are
summarised in
Appendices~\ref{sec:exp_melb_appendix}--\ref{sec:exp_delhi_appendix}.

\paragraph{Equal warm-up budget across methods.}
To keep the comparison parity-fair, every method --- L2D-SLDS, every bandit
baseline, and the Independent Predictor --- is given the \emph{same} short
warm-up window at the start of each trajectory before evaluation begins, and
the \emph{same} five seeds, with no separate offline pre-fitting on the
evaluation horizon for any method. The warm-up is used identically by each
method to initialise its own internal quantities from the paper defaults:
change-point detectors initialise their statistics, ridge/UCB and Thompson
methods initialise their posterior, NeuralUCB initialises its network, and
L2D-SLDS initialises its filter. No per-benchmark hyperparameter search is
performed --- not for L2D-SLDS and not for the baselines: L2D-SLDS uses the
same \((\lambda_{\mathrm{IG}},\lambda_{\mathrm{L}},\Delta_{\max},M,d_g,d_\alpha)\)
across all benchmarks, and the bandit values reported in
Table~\ref{tab:baseline_hparams_selected} are the published-default settings of
each method. D-LinUCB's discount is the only value that varies across
benchmarks, following the standard recommendation
\(\gamma=1-\sqrt{\log T/T}\) of the original paper.

\begin{table}[h]
\centering
\small
\setlength{\tabcolsep}{4pt}
\caption{Baseline hyperparameter values used in the experiments. All methods
share the same warm-up window and seeds; values follow the published guidance
for each method. Entries that match across the three real-data benchmarks are
written once; D-LinUCB's discount is the only one that differs across
benchmarks.}
\label{tab:baseline_hparams_selected}
\begin{tabular}{@{}lll@{}}
\toprule
Method & Hyperparameter & Selected value \\
\midrule
LinUCB & \(\alpha_{\mathrm{UCB}}\), \(\lambda\) & \(5\), \(1.0\) \\
SharedLinUCB & \(\alpha_{\mathrm{UCB}}\), \(\lambda\) & \(3.0\), \(1.0\) \\
LinTS & \(\lambda\), posterior scale & \(1.0\), \(0.75\) \\
EnsembleSampling & ensemble size, \(\lambda\), obs.\ noise std & \(16\), \(1.0\), \(1.0\) \\
NeuralUCB & \(\alpha_{\mathrm{UCB}}\), \(\lambda\), hidden, lr & \(5\), \(1.0\), \(16\), \(10^{-3}\) \\
D-LinUCB (Melbourne) & \(\gamma\), \(\lambda\), \(\delta\) & \(0.98\), \(1.0\), \(0.05\) \\
D-LinUCB (Jena, Delhi) & \(\gamma\), \(\lambda\), \(\delta\) & \(0.95\), \(1.0\), \(0.05\) \\
CUSUM-LinUCB & \(\alpha_{\mathrm{UCB}}\), threshold, warm-up, \(\varepsilon\), explore & \(5.0\), \(0.25\), \(25\), \(0.02\), \(0.05\) \\
GLR-LinUCB & \(\alpha_{\mathrm{UCB}}\), \(\delta\), min window, forced explore & \(5.0\), \(0.05\), \(20\), \(0.10\) \\
\bottomrule
\end{tabular}
\end{table}

\paragraph{Oracle baseline.}
The per-round oracle chooses the best available expert in hindsight,
\(I_t^{\mathrm{oracle}}\in\arg\min_{k\in\mathcal{A}_t}C_{t,k}\).
This is infeasible under partial feedback because \(C_{t,k}\) is not observed for
all \(k\); we report it only as a lower bound on achievable cumulative cost.

\subsection{Synthetic: Regime-Dependent Correlation and Information Transfer}
\label{sec:exp_synthetic_transfer_appendix}

\paragraph{Design goal.}
We construct a controlled routing instance in which (i) experts are \emph{correlated} in a
regime-dependent way, so that observing one expert should update beliefs about others (information
transfer; Proposition~\ref{prop:cross_update}); and (ii) one expert temporarily disappears and
re-enters, so that the maintained registry \(\mathcal{K}_t\) matters.

\paragraph{Environment (regimes, target, context).}
We use \(M=2\) regimes with deterministic switching in blocks of length \(L=150\) over horizon
\(T=3000\), \(z_t \coloneqq 1 + \lfloor (t-1)/L\rfloor \bmod 2 \in\{1,2\}\). The target follows a
regime-dependent AR(1) and the context is the one-step lag:
\begin{equation}
\label{eq:exp_tri_cycle_ts_appendix}
y_t = 0.8\,y_{t-1} + d_{z_t} + \eta_t,\qquad \eta_t\sim\mathcal{N}(0,\sigma_y^2),
\end{equation}
with router context \(x_t\coloneqq y_{t-1}\). The regime \(z_t\) is latent: the router observes
\(x_t\) before acting and \(y_t\) after acting, hence always the internal residual \(e_{t,0}\), plus
the queried external prediction \(\hat y_{t,I_t}\) when \(I_t\neq 0\).

\paragraph{Experts and availability.}
The router has \(K=4\) external experts (\(k\in\{1,2,3,4\}\)) plus the internal learner (\(k=0\)),
for \(5\) total actions. Expert \(k=1\) is removed from the available set \(\mathcal{E}_t\) on
\(t\in[2000,2500]\) and then re-enters. Each external expert is a one-step forecaster
\(\hat y_{t,k}=f_k(x_t)\) with a shared slope and an expert-specific intercept plus noise,
\begin{equation}
\label{eq:exp_synth_expert_rule_appendix}
\hat y_{t,k} \coloneqq 0.8\,y_{t-1} + b_k + \varepsilon_{t,k},
\end{equation}
with intercepts \((b_1,b_2,b_3,b_4)=(d_1,d_1,d_2,d_2)\) so that experts \(\{1,2\}\) are
well-calibrated in regime~\(1\) and experts \(\{3,4\}\) in regime~\(2\).

To induce \emph{regime-dependent correlation} under partial feedback, we split the expert noise
\(\varepsilon_{t,k}\) into a group-shared shock and an idiosyncratic term,
\[
\varepsilon_{t,k} \coloneqq s_{t,g(k)} + \tilde\varepsilon_{t,k},
\qquad
g(k)\coloneqq \mathbf{1}\{k\in\{3,4\}\} \in\{0,1\},
\]
with independent components \(s_{t,0}\sim\mathcal{N}(0,\sigma_{z_t,0}^2)\),
\(s_{t,1}\sim\mathcal{N}(0,\sigma_{z_t,1}^2)\) and
\(\tilde\varepsilon_{t,k}\sim\mathcal{N}(0,\sigma_{\mathrm{id}}^2)\). The group-shared variances
swap across regimes, \((\sigma_{1,0}^2,\sigma_{1,1}^2)=(\sigma_{\mathrm{hi}}^2,\sigma_{\mathrm{lo}}^2)\)
and \((\sigma_{2,0}^2,\sigma_{2,1}^2)=(\sigma_{\mathrm{lo}}^2,\sigma_{\mathrm{hi}}^2)\) with
\(\sigma_{\mathrm{hi}}^2\gg\sigma_{\mathrm{lo}}^2\). Experts \(\{1,2\}\) are therefore strongly
correlated in regime~\(1\) and experts \(\{3,4\}\) in regime~\(2\).

\paragraph{Model configuration.}
We use \(M=2\) regimes with shared factor dimension \(d_g=2\) and idiosyncratic dimension
\(d_\alpha=1\). The staleness horizon for pruning is \(\Delta_{\max}=250\).
A short warmup of \(100\) steps precedes the online loop for all methods.
For NeuralUCB+0 we use a feed-forward network with hidden dimension \(16\), learning rate
\(10^{-3}\), and UCB constant \(\alpha_t=5\); ridge regularization is \(\lambda=1\) for the linear
bandit baselines. D-LinUCB+0 uses discount \(\gamma=0.95\).

\paragraph{Headline result.}
Table~\ref{tab:synth_one_stage_full} reports cost, cumulative regret, query rate, and online
runtime over five seeds. Under partial feedback, \textbf{L2D-SLDS} attains the lowest routing cost
(\(0.336\pm 0.013\)) and the lowest cumulative regret (\(592.3\pm 35.0\)), improving over every
one-stage bandit baseline, over the Independent Predictor, and over the no-\(\mathbf{g}_t\)
ablation (\(0.343\pm 0.018\)). The shared factor provides a mechanism for updating beliefs about
unqueried experts by combining the always-observed internal residual with the queried external
residual (Proposition~\ref{prop:cross_update}). The \emph{L2D-SLDS no pruning} row matches the
pruned run on cost and regret within \(0.5\%\), giving a first empirical confirmation of
marginalization invariance (Proposition~\ref{prop:invariance}, with relative cost and regret
gaps of $0.30\%$ and $0.57\%$ respectively); Table~\ref{tab:synth_D4} re-examines this at the
posterior level.

\begin{table}[H]
\centering
\small
\setlength{\tabcolsep}{4pt}
\caption{Synthetic one-stage results (Section~\ref{sec:exp_synthetic_transfer}). Mean \(\pm\)
std.\ deviation over five seeds (main Table~\ref{tab:main_results} reports std.\ error). Regret is \(\sum_t (\mathrm{cost}_t^{\mathrm{method}} -
\mathrm{cost}_t^{\mathrm{oracle}})\) over \(T{=}3000\). Runtime is online \(\mathrm{ms}/\)step. Lower cost and regret are better.}
\label{tab:synth_one_stage_full}
\begin{tabular}{@{}lcccc@{}}
\toprule
Method & Cost & Regret & Query Rate & Runtime (ms/step) \\
\midrule
\textbf{L2D-SLDS} & \(\mathbf{0.336 \pm 0.013}\) & \(\mathbf{592.3 \pm 35.0}\) & \(0.553 \pm 0.033\) & \(9.02 \pm 0.37\) \\
L2D-SLDS w/o \(\mathbf{g}_t\) & \(0.343 \pm 0.018\) & \(613.5 \pm 45.7\) & \(0.472 \pm 0.032\) & \(6.96 \pm 0.30\) \\
L2D-SLDS no pruning & \(0.337 \pm 0.014\) & \(595.7 \pm 36.6\) & \(0.552 \pm 0.033\) & \(9.04 \pm 0.40\) \\
Independent Predictor & \(0.425 \pm 0.014\) & \(859.2 \pm 36.5\) & \(0.000\) & \(0.14 \pm 0.01\) \\
Predictor from L2D-SLDS & \(0.422 \pm 0.014\) & \(852.5 \pm 36.1\) & \(0.000\) & -- \\
\midrule
SharedLinUCB+0 & \(0.427 \pm 0.013\) & \(866.9 \pm 35.0\) & \(0.333 \pm 0.168\) & \(4.00 \pm 3.95\) \\
LinTS+0 & \(0.476 \pm 0.019\) & \(1013.0 \pm 51.0\) & \(0.244 \pm 0.057\) & \(1.17 \pm 0.03\) \\
EnsembleSampling+0 & \(0.464 \pm 0.018\) & \(978.0 \pm 52.7\) & \(0.232 \pm 0.069\) & \(1.14 \pm 0.03\) \\
NeuralUCB+0 & \(0.468 \pm 0.020\) & \(990.7 \pm 51.5\) & \(0.826 \pm 0.061\) & \(1.21 \pm 0.03\) \\
D-LinUCB+0 & \(0.662 \pm 0.034\) & \(1571.9 \pm 97.1\) & \(0.699 \pm 0.049\) & \(4.32 \pm 3.43\) \\
CUSUM-LinUCB+0 & \(0.433 \pm 0.015\) & \(884.9 \pm 34.3\) & \(0.237 \pm 0.023\) & \(0.60 \pm 0.14\) \\
GLR-LinUCB+0 & \(0.460 \pm 0.020\) & \(964.2 \pm 45.9\) & \(0.351 \pm 0.138\) & \(1.03 \pm 0.26\) \\
\midrule
Oracle & \(0.138 \pm 0.005\) & \(0.0\) & \(0.632 \pm 0.025\) & \(0.16 \pm 0.01\) \\
\bottomrule
\end{tabular}
\end{table}

\paragraph{Diagnostics roadmap.}
The remaining material is organized along two comparison planes.
\emph{(i) Outcome plane} (Figures~\ref{fig:selection_synth}--\ref{fig:synth_regime_switch}) compares
all baselines on quantities every method produces: selection frequency, rolling cost, cumulative
regret, and behavior around regime switches. \emph{(ii) Mechanistic plane}
(Figures~\ref{fig:synth_info_gap}--\ref{fig:synth_D1b} and Tables~\ref{tab:synth_D2}--\ref{tab:synth_D4})
compares L2D-SLDS only against its no-\(\mathbf{g}_t\) ablation, because these diagnostics read out
quantities from the shared posterior over expert reliabilities---predictive MSE on \emph{unqueried}
experts, model-implied correlations, marginalization invariance---that score-based bandits do not
compute. LinUCB/LinTS/NeuralUCB expose no apples-to-apples quantity for such claims, so they appear
in the outcome plane only.

\subsubsection*{Outcome plane}

\begin{figure}[H]
\centering
\includegraphics[width=\linewidth]{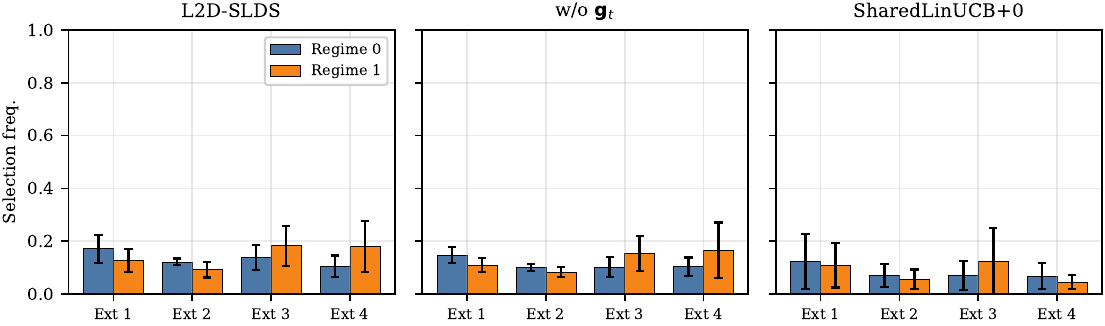}
\caption{\textbf{Per-regime expert selection frequency} (mean \(\pm\) std, 5 seeds). By
construction, experts \(\{1,2\}\) are better in regime~0 and experts \(\{3,4\}\) are better in
regime~1. L2D-SLDS and its no-\(\mathbf{g}_t\) ablation track the regime-conditional grouping;
SharedLinUCB+0 (shown as representative bandit) is nearly regime-invariant. Aggregate bandit
query rates are given in Table~\ref{tab:synth_one_stage_full}.}
\label{fig:selection_synth}
\end{figure}

\begin{figure}[H]
\centering
\begin{subfigure}{0.49\linewidth}
  \centering
  \includegraphics[width=\linewidth]{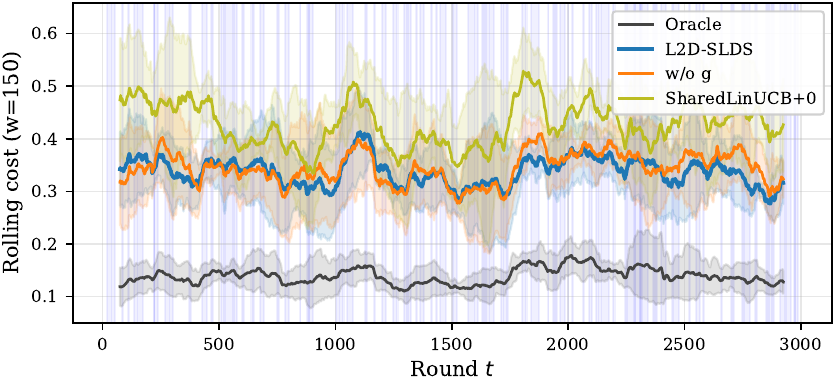}
  \caption{Rolling cost (window 150).}
  \label{fig:synth_rolling_cost}
\end{subfigure}\hfill
\begin{subfigure}{0.49\linewidth}
  \centering
  \includegraphics[width=\linewidth]{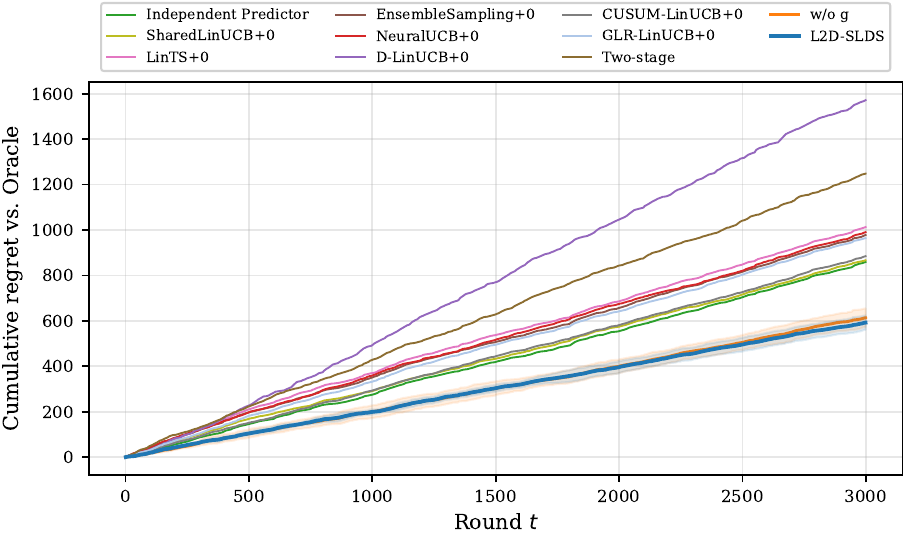}
  \caption{Cumulative regret vs.\ Oracle.}
  \label{fig:synth_cumulative_regret}
\end{subfigure}
\caption{\textbf{Cost and regret trajectories} (5 seeds). Panel~(a) shows only Oracle, L2D-SLDS,
the no-\(\mathbf{g}_t\) ablation and the best bandit for readability; panel~(b) shows all
baselines. L2D-SLDS achieves the lowest rolling cost and the flattest regret slope; the
no-\(\mathbf{g}_t\) ablation is second; every bandit accumulates regret monotonically. Blue
shading in~(a) marks regime-1 blocks.}
\label{fig:synth_cost_regret}
\end{figure}

\begin{figure}[H]
\centering
\includegraphics[width=\linewidth]{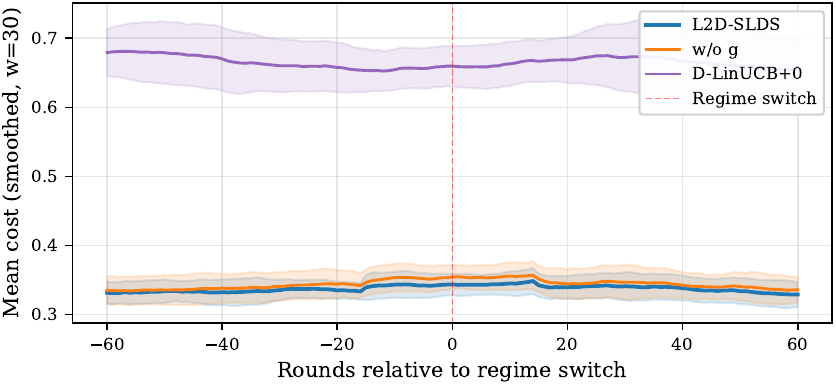}
\caption{\textbf{Cost profile around regime switches} (rounds \(\pm 75\), smoothed with window
\(30\), 5 seeds). We show L2D-SLDS, its no-\(\mathbf{g}_t\) ablation, and D-LinUCB+0 as a
representative bandit. L2D-SLDS recovers fastest after the switch (dashed red line); D-LinUCB+0
exhibits a larger and longer-lasting spike. The other bandits behave similarly to D-LinUCB+0 and
are omitted for clarity.}
\label{fig:synth_regime_switch}
\end{figure}

\subsubsection*{Mechanistic plane}

\begin{figure}[H]
    \centering
    \includegraphics[width=\columnwidth]{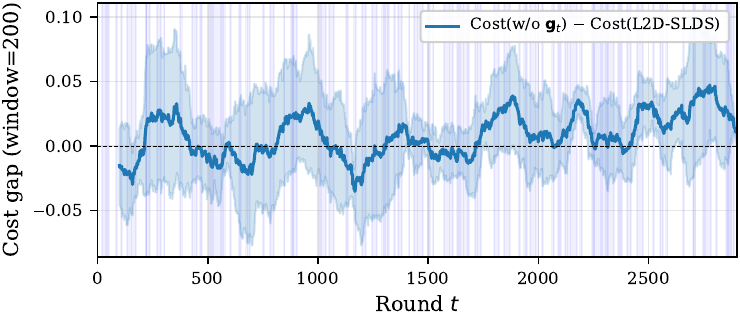}
    \caption{\textbf{Information-transfer gap.} Rolling cost difference
    \(\mathrm{Cost}(\text{w/o } \mathbf{g}_t) - \mathrm{Cost}(\text{L2D-SLDS})\) (window \(200\),
    5 seeds). The gap stays consistently positive, showing that the shared factor \(\mathbf{g}_t\)
    provides a persistent advantage via cross-expert information transfer. Blue shading marks
    regime-1 blocks.}
    \label{fig:synth_info_gap}
\end{figure}

\begin{figure}[H]
    \centering
    \includegraphics[width=\linewidth]{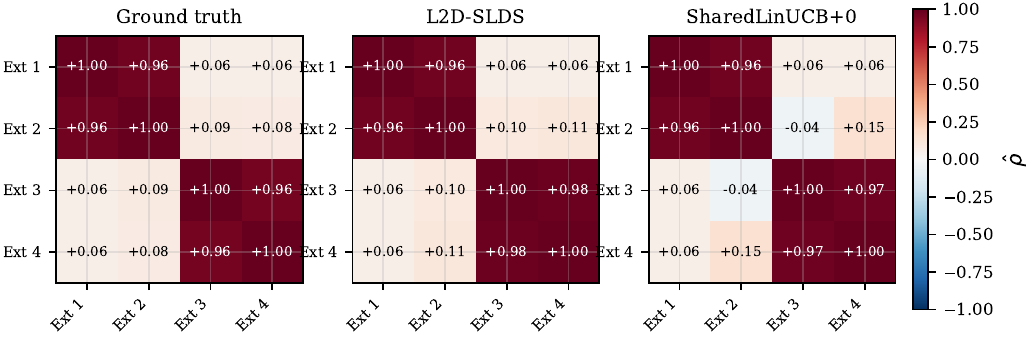}
    \caption{\textbf{Empirical regime-0 loss correlations} (seed~11): ground truth vs.\ empirical
    correlations obtained under each method's query pattern. Empirical estimates are sensitive to
    which arms get pulled, so this panel serves as a sanity check; the mechanism-level recovery
    claim is made by Figure~\ref{fig:synth_D1} using the posterior-implied
    \(B\,\Sigma_{g,t}\,B^{\!\top}\).}
    \label{fig:synth_corr_heatmap}
\end{figure}

\begin{figure}[H]
\centering
\includegraphics[width=\linewidth]{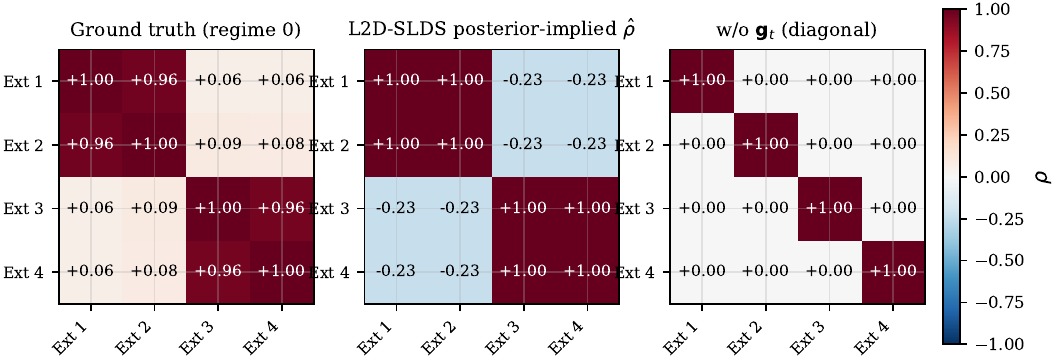}
\caption{\textbf{Shared-factor reliability correlation matrix} (regime~0, seed~11). Left: ground-truth
inter-expert loss correlation. Middle: L2D-SLDS's \emph{shared-factor reliability cross-covariance}
\(\mathbf{B}\,\Sigma_{g,t}\,\mathbf{B}^{\!\top}\) (the \(\mathbf{g}_t\)-mediated contribution to
\(\mathrm{Cov}(\boldsymbol\alpha_{t,i},\boldsymbol\alpha_{t,j})\); Proposition~\ref{prop:transfer}),
normalised to a correlation matrix and averaged over regime-0 steps -- this ignores idiosyncratic
and emission-noise contributions and is therefore \emph{not} the full residual/loss correlation.
Right: the no-\(\mathbf{g}_t\) counterpart, structurally diagonal. L2D-SLDS recovers both the
\(\{1,2\}\) block structure and the anti-correlation with experts~\(\{3,4\}\);
removing \(\mathbf{g}_t\) collapses the off-diagonals to zero by construction.}
\label{fig:synth_D1}
\end{figure}

\begin{figure}[H]
\centering
\includegraphics[width=\linewidth]{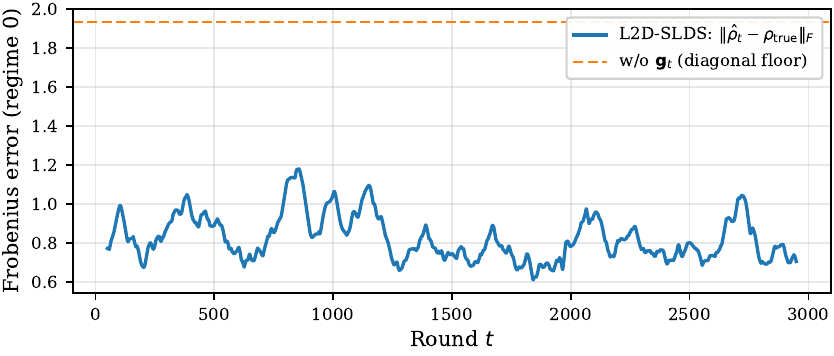}
\caption{\textbf{Shared-factor correlation recovery over time.} Frobenius error
\(\|\hat\rho_t - \rho_{\mathrm{true}}\|_F\) between L2D-SLDS's shared-factor reliability
correlation (\(\mathbf{B}\,\Sigma_{g,t}\,\mathbf{B}^{\!\top}\) normalised; regime-0 average,
smoothed with window \(100\)) and the ground-truth loss correlation. The dashed line is the
no-\(\mathbf{g}_t\) floor obtained by evaluating the identity correlation against the same
ground truth.}
\label{fig:synth_D1b}
\end{figure}

\begin{table}[H]
\centering
\small
\setlength{\tabcolsep}{6pt}
\caption{\textbf{Predictive MSE on \emph{unqueried} experts}
(Proposition~\ref{prop:cross_update}). At each round we average the squared error of the
posterior predictive mean over external experts that are available but \emph{not} chosen by the
router. L2D-SLDS is below the no-\(\mathbf{g}_t\) ablation on every seed (ratio $<$ 1), isolating
the contribution of the shared factor.}
\label{tab:synth_D2}
\begin{tabular}{@{}cccc@{}}
\toprule
Seed & L2D-SLDS & w/o \(\mathbf{g}_t\) & Ratio \\
\midrule
11 & 0.4181 & 0.4698 & 0.890 \\
13 & 0.4308 & 0.4694 & 0.918 \\
17 & 0.3940 & 0.4416 & 0.892 \\
19 & 0.4077 & 0.4669 & 0.873 \\
23 & 0.3965 & 0.4339 & 0.914 \\
\midrule
Mean $\pm$ std & \(0.409 \pm 0.015\) & \(0.456 \pm 0.017\) & \(0.897 \pm 0.019\) \\
\bottomrule
\end{tabular}
\end{table}

\begin{table}[H]
\centering
\small
\setlength{\tabcolsep}{5pt}
\caption{\textbf{Re-entry predictive MSE for expert~1.} Expert~1 is unavailable on
\(t\in[2000,2500]\). We report the average squared error of the predictive mean for expert~1 in
three windows: \emph{pre-gap} \([1500,2000)\), \emph{gap} \([2000,2500)\), and \emph{post-gap}
\([2500,2999]\). L2D-SLDS maintains a lower error throughout because the belief about expert~1
stays coupled to the always-updated \(\mathbf{g}_t\); mean~\(\pm\)~std over 5 seeds.}
\label{tab:synth_D3}
\begin{tabular}{@{}lccc@{}}
\toprule
Method & Pre-gap & Gap & Post-gap \\
\midrule
L2D-SLDS & \(\mathbf{0.307 \pm 0.114}\) & \(\mathbf{0.260 \pm 0.025}\) & \(\mathbf{0.286 \pm 0.060}\) \\
w/o \(\mathbf{g}_t\) & \(0.395 \pm 0.047\) & \(0.332 \pm 0.096\) & \(0.402 \pm 0.092\) \\
\bottomrule
\end{tabular}
\end{table}

\begin{table}[H]
\centering
\small
\setlength{\tabcolsep}{4pt}
\caption{\textbf{Marginalization invariance} (Proposition~\ref{prop:invariance}).
For each seed we compare the L2D-SLDS posterior predictive mean on retained experts between the
standard pruning run (\(\Delta_{\max}=250\)) and a no-prune control. We report the median and
maximum of \(\bigl\|\mu^{\mathrm{prune}}_t(k) - \mu^{\mathrm{full}}_t(k)\bigr\|\) over rounds and
retained experts, along with mean registry sizes. The median is \(0\) to numerical precision on
every seed; occasional large maxima reflect divergent stochastic tiebreaks downstream of pruning,
not a violation of the proposition. The \emph{L2D-SLDS no pruning} row of
Table~\ref{tab:synth_one_stage_full} matches the pruned run on cost and regret within \(0.6\%\).}
\label{tab:synth_D4}
\begin{tabular}{@{}ccccc@{}}
\toprule
Seed & median \(|\Delta|\) & max \(|\Delta|\) & \(|\mathcal{K}|_{\mathrm{prune}}\) & \(|\mathcal{K}|_{\mathrm{full}}\) \\
\midrule
11 & \(0.00\) & \(1.57\)   & 4.92 & 5.00 \\
13 & \(0.00\) & \(4.81\!\times\!10^{-2}\) & 4.90 & 5.00 \\
17 & \(0.00\) & \(4.23\!\times\!10^{-3}\) & 4.91 & 5.00 \\
19 & \(0.00\) & \(1.29\)   & 4.88 & 5.00 \\
23 & \(0.00\) & \(2.04\!\times\!10^{-3}\) & 4.91 & 5.00 \\
\bottomrule
\end{tabular}
\end{table}

\newpage
\subsection{Melbourne Daily Temperatures}
\label{sec:exp_melbourne_appendix}
\label{sec:exp_melb_appendix}

\paragraph{Environment.}
We evaluate L2D-SLDS on the Melbourne daily minimum temperature series \citep{daily_min_temp_melbourne_kaggle} with the column \emph{minimum temperatures} as the target \(y_t\), running over \(T{=}3{,}650\) rounds. The context \(x_t\in\mathbb{R}^8\) stacks four temperature lags \(\{1,7,30,365\}\) with four day-of-week / month time features, z-score normalized on a rolling window of 730 observations (\(\varepsilon=10^{-6}\)). Regime labels are latent: the router observes \(x_t\) before acting and \(y_t\) after acting, giving the internal residual \(e_{t,0}\) always and the queried external residual \(e_{t,I_t}\) only when \(I_t\ne 0\).

\paragraph{Experts and availability.}
We use \(K=4\) external experts (Table~\ref{tab:experts_melbourne_appendix}), giving
\(|\mathcal{A}_t|=5\) actions: the internal online-ridge learner (index~\(0\)), one additional AR
baseline (index~\(1\)), and three ARIMA-style predictors (indices \(2,3,4\)) on the lag set
\(\{1,7,30,365\}\) with differencing order \(d=0\). To stress the router under availability shocks, expert~2 is removed on \(t\in[800,1200]\) and expert~3 on \(t\in[500,1500]\); the remaining experts are always available.

\begin{table}[H]
\centering
\small
\setlength{\tabcolsep}{4pt}
\caption{Configuration of experts for Melbourne.}
\label{tab:experts_melbourne_appendix}
\begin{tabular}{@{}lllll@{}}
\toprule
Index & Base & Fit & Training data & Notes \\
\midrule
\textbf{0} & AR    & Ridge fit & Full history & Linear AR(1)-style baseline \\
\textbf{1} & AR    & Ridge fit & Full history & Linear AR(1)-style baseline \\
\textbf{2} & ARIMA & Ridge fit on \(\{1,7,30,365\}\) & Full history & \(d=0\), noise std \(0.06\) \\
\textbf{3} & ARIMA & Ridge fit on \(\{1,7,30,365\}\) & Full history & \(d=0\), noise std \(0.10\) \\
\textbf{4} & ARIMA & Ridge fit on \(\{1,7,30,365\}\) & Full history & \(d=0\), noise std \(0.06\) \\
\bottomrule
\end{tabular}
\end{table}

\paragraph{Model configuration.}
We use \(M{=}2\) latent regimes, shared factor dimension \(d_g{=}2\), and idiosyncratic dimension \(d_\alpha{=}14\) (the augmented learner-aware feature dimension). The query score \eqref{eq:query_score} uses \(\lambda_L{=}1\) and \(\mathrm{IG}\) scale \(=2\), with combined exploration \(\mathrm{IG}_g+\mathrm{IG}_z\) (\texttt{g\_z}). The shared loadings \(\mathbf{B}_k\) are initialized from a warm-up EM estimate on the observed residual panel; offline and online EM are disabled during routing. NeuralUCB+0 uses hidden dimension~\(16\) and learning rate \(10^{-3}\); D-LinUCB+0 uses discount \(\gamma=0.98\); SharedLinUCB+0, LinTS+0, and EnsembleSampling+0 share the one-stage action set, with EnsembleSampling+0 using a \(16\)-member linear ensemble.
The same initial lag/history warm-up is given to all baselines before online evaluation; only the
EM-based parameter initialization is specific to L2D-SLDS.

\begin{table}[H]
\centering
\small
\setlength{\tabcolsep}{4pt}
\caption{Melbourne results (Section~\ref{sec:exp_melbourne_appendix}). Mean \(\pm\) std.\ error over five seeds; deterministic methods carry no error bar. Runtime is online \(\mathrm{ms}/\)step. Lower is better.}
\label{tab:exp_avg_costs_melbourne_appendix}
\begin{tabular}{@{}lccc@{}}
\toprule
Method & Cost & Query rate & Runtime (ms/step) \\
\midrule
\textbf{L2D-SLDS}                 & \(\mathbf{5.914 \pm 0.002}\) & \(0.0082 \pm 0.0005\) & \(3.86 \pm 0.12\) \\
L2D-SLDS w/o \(\mathbf{g}_t\)     & \(5.928 \pm 0.005\)          & \(0.0063 \pm 0.0001\) & \(3.52 \pm 0.09\) \\
Independent Predictor             & \(6.107\)                    & \(0.000\)             & \(0.52 \pm 0.04\) \\
Predictor from L2D-SLDS           & \(6.103 \pm 0.002\)          & \(0.000\)             & -- \\
\midrule
SharedLinUCB+0                    & \(6.048 \pm 0.021\)          & \(0.410 \pm 0.034\)   & \(5.42 \pm 1.14\) \\
LinTS+0                           & \(6.094 \pm 0.012\)          & \(0.367 \pm 0.016\)   & \(3.21 \pm 0.47\) \\
EnsembleSampling+0                & \(6.047 \pm 0.011\)          & \(0.284 \pm 0.030\)   & \(2.93 \pm 0.36\) \\
NeuralUCB+0                       & \(6.265 \pm 0.058\)          & \(0.102 \pm 0.023\)   & \(2.00 \pm 0.08\) \\
D-LinUCB+0                        & \(6.133 \pm 0.010\)          & \(0.364 \pm 0.009\)   & \(8.10 \pm 1.95\) \\
CUSUM-LinUCB+0                    & \(6.212 \pm 0.021\)          & \(0.533 \pm 0.010\)   & \(1.99 \pm 0.07\) \\
GLR-LinUCB+0                      & \(6.302 \pm 0.027\)          & \(0.549 \pm 0.015\)   & \(2.88 \pm 0.10\) \\
\midrule
Oracle-over-\(\mathcal{A}\)       & \(4.123\)                    & \(0.845\)             & \(0.58 \pm 0.04\) \\
\bottomrule
\end{tabular}
\end{table}

\paragraph{Headline result.}
L2D-SLDS attains the lowest cost (\(5.914\pm0.002\)), improving over every one-stage bandit baseline---EnsembleSampling+0 (\(6.047\)), SharedLinUCB+0 (\(6.048\)), LinTS+0 (\(6.094\)), D-LinUCB+0 (\(6.133\)), CUSUM-LinUCB+0 (\(6.212\)), NeuralUCB+0 (\(6.265\)), GLR-LinUCB+0 (\(6.302\))---and over the Independent Predictor (\(6.107\)). Removing \(\mathbf{g}_t\) degrades the router to \(5.928\pm0.005\); the ordering is stable across all five seeds, so cross-expert transfer through the shared factor contributes on real data. The Predictor from L2D-SLDS (\(6.103\pm0.002\)) also improves over the Independent Predictor, showing that the L2D-SLDS trajectory strengthens the internal learner before the deployed router adds the deferral gain.

\paragraph{Deferral pattern.}
L2D-SLDS defers on \({<}1\%\) of rounds, whereas bandits defer on \(10\text{--}55\%\); CUSUM-LinUCB+0 and GLR-LinUCB+0 defer on \({>}50\%\) yet underperform, indicating that aggressive exploration is harmful on Melbourne where the underlying structure is relatively stable.

Figures~\ref{fig:melbourne_selection}--\ref{fig:melbourne_trajectories} visualise the dynamics.
Figure~\ref{fig:melbourne_selection} shows per-method rolling selection frequency: L2D-SLDS concentrates mass on the internal learner and smoothly reallocates when expert~2 or expert~3 becomes unavailable, while bandits keep placing exploratory mass on the removed experts (queries are then rejected at act time).
Figure~\ref{fig:melbourne_trajectories} combines rolling cost and cumulative regret: L2D-SLDS tracks the Oracle most closely throughout the horizon and accumulates regret at the shallowest slope among all deployable methods, with a small but persistent gap to the w/o-\(\mathbf{g}_t\) ablation.

\begin{figure}[H]
\centering
\includegraphics[width=\linewidth]{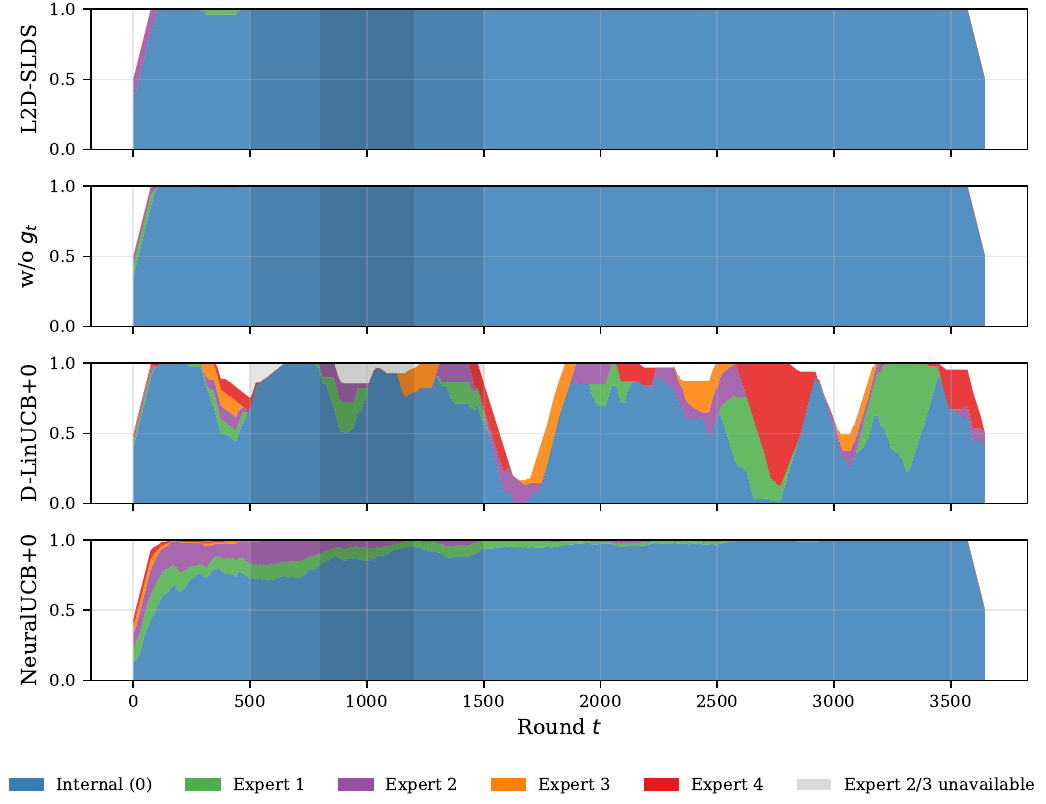}
\caption{\textbf{Melbourne --- rolling selection frequency} (window~\(150\), seed-averaged) for L2D-SLDS, the w/o-\(\mathbf{g}_t\) ablation, D-LinUCB+0, and NeuralUCB+0. Grey bands mark the unavailability windows of expert~2 (\(t\in[800,1200]\)) and expert~3 (\(t\in[500,1500]\)).}
\label{fig:melbourne_selection}
\end{figure}

\begin{figure}[H]
\centering
\begin{subfigure}[t]{0.48\linewidth}
  \centering
  \includegraphics[width=\linewidth]{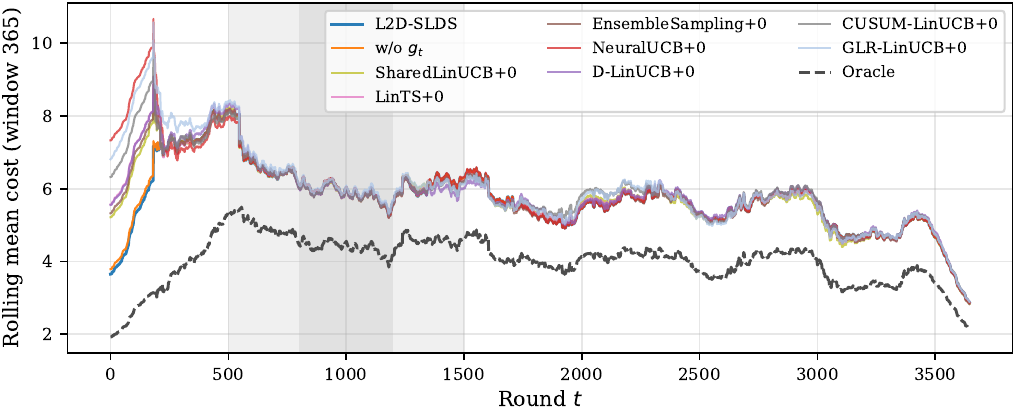}
  \caption{Rolling mean cost (window \(365\)).}
  \label{fig:melbourne_rolling_cost}
\end{subfigure}\hfill
\begin{subfigure}[t]{0.48\linewidth}
  \centering
  \includegraphics[width=\linewidth]{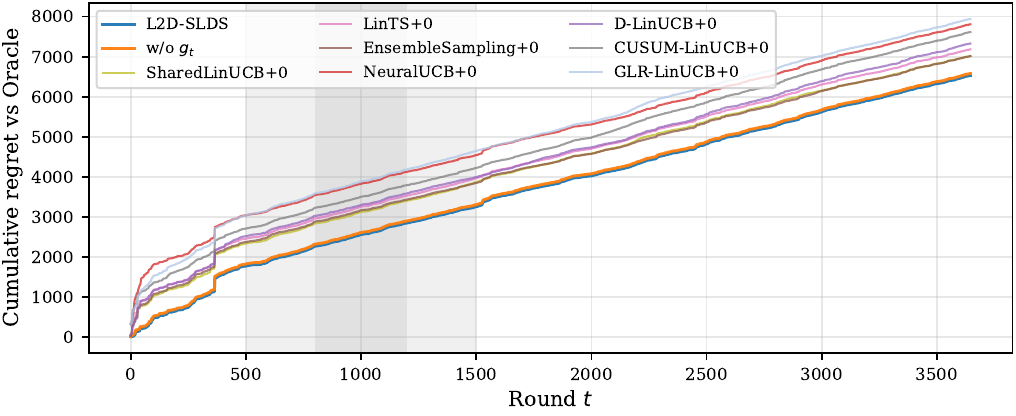}
  \caption{Cumulative regret vs.\ Oracle-over-\(\mathcal{A}\); bands are \(\pm 1\)~s.e.\ over \(5\) seeds for L2D-SLDS and its ablation.}
  \label{fig:melbourne_cumregret}
\end{subfigure}
\caption{\textbf{Melbourne --- cost and regret trajectories.} L2D-SLDS tracks the Oracle most closely across the horizon and accumulates the least cumulative regret among deployable methods.}
\label{fig:melbourne_trajectories}
\end{figure}

\subsection{Jena Climate 2009--2016}
\label{sec:exp_jena_appendix}

\paragraph{Environment.}
We evaluate on the Jena Climate dataset distributed through \texttt{tf.keras.datasets}, using temperature
\(\texttt{T (degC)}\) as the scalar target \(y_t\). The original 10-minute series is subsampled by a
factor of 24, yielding a 4-hourly univariate routing problem over \(T=6000\) rounds. The context
concatenates three meteorological covariates (\(\texttt{p (mbar)}\), \(\texttt{rh (\%)}\),
\(\texttt{wv (m/s)}\)), target lags \(\{1,6,42,180\}\), and cyclical time features for hour,
day-of-week, and month, resulting in a \(13\)-dimensional context vector \(\mathbf{x}_t\). Each
dimension is z-scored using the first \(2190\) observations.

\paragraph{Experts and availability.}
We use \(K=4\) external experts (\(|\mathcal{A}_t|=5\) actions), with the internal online ridge learner indexed by \(0\) and four external
predictors: two autoregressive (AR) experts and two ARIMA-style lagged linear experts on the same
lag set. External experts~2 and~3 are unavailable on the disjoint intervals
\(t\in[1200,2200]\) and \(t\in[2800,4200]\), respectively. All methods observe the same
availability process \(\mathcal{E}_t\) and receive the same partial feedback: the internal residual
\(e_{t,0}\) is always observed, and the queried external residual \(e_{t,I_t}\) is observed only
when \(I_t\neq 0\).

\paragraph{Model configuration.}
For L2D-SLDS we use \(M=4\) regimes, shared dimension \(d_g=2\), and idiosyncratic dimension
\(d_\alpha=13\), matching the context dimension. We use
\(\lambda_L=1.0\), \(\mathrm{IG}\) scale \(=40\), combined exploration
\(\mathrm{IG}_g+\mathrm{IG}_z\) (\texttt{g\_z}), teacher weight \(1.0\), and
\(\mathbf{b}\)-initialisation scale \(0.35\). The per-regime shared dynamics use
\(A_g\in\{0.985,0.975,0.965,0.955\}\) with process-noise scales
\(\{0.010,0.015,0.020,0.025\}\); idiosyncratic dynamics use
\(A_u\in\{0.993,0.987,0.981,0.975\}\) with process-noise scales
\(\{0.0175,0.02625,0.035,0.04375\}\); and \(R=0.1\). The router uses
50 Monte-Carlo samples in the \(\mathrm{IG}_g+\mathrm{IG}_z\) score. NeuralUCB+0 uses hidden dimension~\(16\) and learning rate
\(10^{-3}\); D-LinUCB+0 uses discount \(\gamma=0.95\). All Jena results are
reported over five seeds.

\begin{table}[ht]
\centering
\small
\setlength{\tabcolsep}{4pt}
\caption{Jena Climate results (Section~\ref{sec:exp_jena_appendix}), reported
over five seeds. Runtime is online \(\mathrm{ms}/\)step, excluding offline
initialisation. Lower cost is better.}
\label{tab:exp_avg_costs_jena_appendix}
\begin{tabular}{@{}lccc@{}}
\toprule
Method & Cost & Query Rate & Runtime (ms/step) \\
\midrule
\textbf{L2D-SLDS} & \(\mathbf{3.293 \pm 0.018}\) & \(0.0022 \pm 0.0009\) & \(3.66 \pm 0.71\) \\
L2D-SLDS w/o \(\mathbf{g}_t\) & \(3.297 \pm 0.020\) & \(0.0021 \pm 0.0008\) & \(3.46 \pm 0.70\) \\
Independent Predictor & \(3.297\) & \(0.000\) & \(0.37 \pm 0.09\) \\
Predictor from L2D-SLDS & \(3.296 \pm 0.000\) & \(0.000\) & -- \\
\midrule
SharedLinUCB+0 & \(3.404 \pm 0.040\) & \(0.098 \pm 0.058\) & \(2.71 \pm 0.97\) \\
LinTS+0 & \(3.378 \pm 0.001\) & \(0.117 \pm 0.030\) & \(1.58 \pm 0.53\) \\
EnsembleSampling+0 & \(3.407 \pm 0.014\) & \(0.111 \pm 0.036\) & \(1.50 \pm 0.50\) \\
NeuralUCB+0 & \(3.398 \pm 0.018\) & \(0.040 \pm 0.012\) & \(0.71 \pm 0.20\) \\
D-LinUCB+0 & \(4.031 \pm 0.056\) & \(0.359 \pm 0.002\) & \(4.19 \pm 1.52\) \\
CUSUM-LinUCB+0 & \(3.963 \pm 0.044\) & \(0.424 \pm 0.010\) & \(0.77 \pm 0.22\) \\
GLR-LinUCB+0 & \(3.927 \pm 0.052\) & \(0.368 \pm 0.040\) & \(1.15 \pm 0.31\) \\
\midrule
Oracle-over-\(\mathcal{A}\) & \(1.404\) & \(0.788\) & \(0.39 \pm 0.10\) \\
\bottomrule
\end{tabular}
\end{table}

\paragraph{Headline result.}
L2D-SLDS achieves cost \(3.293\pm0.018\), improving over the Independent
Predictor baseline (\(3.297\)) by \(0.0040\) on average while deferring on only
\(0.22\%\) of rounds. It is the only adaptive method that improves on
non-deferring: every bandit baseline is strictly worse than always predicting
internally (LinTS+0 \(3.378\), NeuralUCB+0 \(3.398\), SharedLinUCB+0 \(3.404\),
EnsembleSampling+0 \(3.407\)), and the non-stationary variants are dramatically
worse (D-LinUCB+0 \(4.031\), CUSUM-LinUCB+0 \(3.963\), GLR-LinUCB+0 \(3.927\)).
When the learner is already strong, unguided exploration often routes away from
it at high cost; the L2D-SLDS belief state identifies high-value deferrals.

\paragraph{Role of the shared factor.}
Removing \(\mathbf{g}_t\) gives \(3.297\pm0.020\), whereas the full model reaches
\(3.293\pm0.018\) at essentially the same \(0.22\%\) query rate. Thus the
Jena results demonstrate that the learner-aware router extracts
high-value deferrals from an already strong internal predictor; the synthetic
benchmark and Delhi additionally show larger shared-factor transfer effects,
where the no-\(\mathbf{g}_t\) ablation is clearly worse.

\paragraph{Predictor diagnostics.}
The diagnostic Predictor from L2D-SLDS is \(3.2965\pm0.0003\), improving over the
Independent Predictor (\(3.2970\)); the teacher-weighted update therefore improves
the internal learner even when evaluated without deferral. The larger Jena effect
is routing: full L2D-SLDS beats its own logged predictor by
\(3.2965-3.2930=0.0035\) on average over five seeds. This separates the
two effects: the L2D-SLDS trajectory improves the predictor relative to the
Independent Predictor, and the deployed router then improves further by choosing
high-value deferrals.

\paragraph{Comparison figures.}
Figure~\ref{fig:jena_cost_bars} shows the per-method cost across the baseline set, with
the Independent Predictor drawn as a dashed reference. Figure~\ref{fig:jena_query_rate}
contrasts the query rates: L2D-SLDS and its ablation sit near the \(0\%\) floor,
	non-stationary bandits query on \(36{-}42\%\) of rounds, and the Oracle-over-\(\mathcal{A}\)
defers on \(78.8\%\) of rounds -- so profitable deferrals exist, but distinguishing them
from routine rounds requires the latent-state machinery.

\begin{figure}[ht]
\centering
\includegraphics[width=0.85\linewidth]{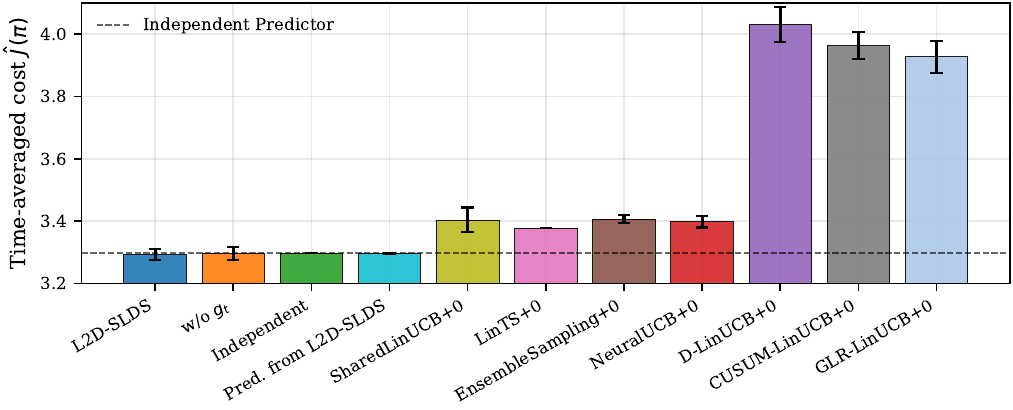}
\caption{\textbf{Jena Climate --- time-averaged cost per method.} Results over
five seeds. The dashed line marks the Independent
Predictor. L2D-SLDS is the only adaptive method below non-deferring; all bandit
baselines exceed it.}
\label{fig:jena_cost_bars}
\end{figure}

\begin{figure}[ht]
\centering
\includegraphics[width=0.85\linewidth]{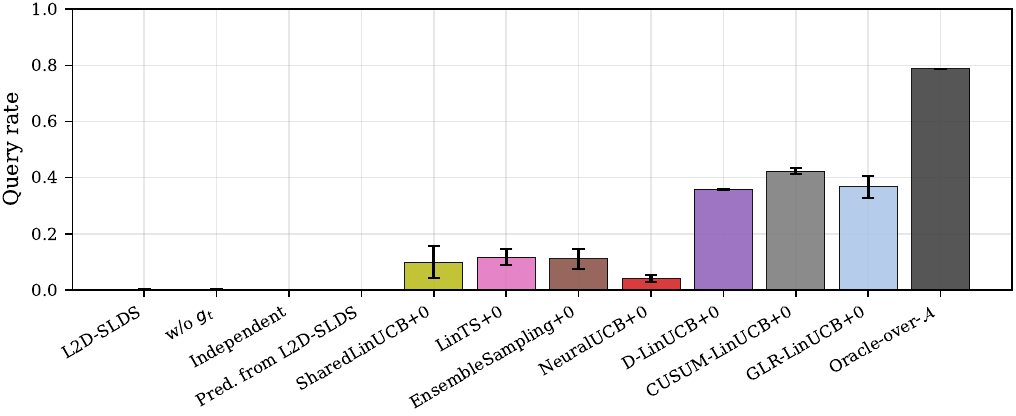}
\caption{\textbf{Jena Climate --- query rates.} Results over five seeds.
L2D-SLDS defers on \(\sim 0.2\%\) of rounds; non-stationary bandits on \(36{-}42\%\);
the Oracle-over-\(\mathcal{A}\) on \(78.8\%\).}
\label{fig:jena_query_rate}
\end{figure}

\subsection{Daily Delhi Climate 2013--2017}
\label{sec:exp_delhi_appendix}

\paragraph{Environment.}
We evaluate on the Daily Delhi Climate dataset \citep{daily_delhi_climate},
using mean daily temperature \(\texttt{meantemp}\) as the scalar target
\(y_t\) over \(T{=}1567\) rounds (2013-01-01 through 2017-04-24). The context
concatenates three meteorological covariates (\(\texttt{humidity}\),
\(\texttt{wind\_speed}\), \(\texttt{meanpressure}\)), target lags
\(\{1,7,14,30,90\}\), and cyclical time features for day-of-week and month,
yielding a \(10\)-dimensional context \(\mathbf{x}_t\). Each dimension is
z-scored using the first \(365\) observations. All methods observe the same
\((\mathbf{x}_t,\mathcal{E}_t)\), choose \(I_t\), and then observe \(y_t\)
(hence \(e_{t,0}\)) and \(e_{t,I_t}\) when \(I_t\neq 0\).

\paragraph{Experts and action-set dynamics.}
Delhi is our largest and most churning action set: \(K=24\) external
predictors (twelve AR and twelve ARIMA variants on lag set
\(\{1,7,14,30,90\}\)) trained on overlapping but distinct historical windows,
plus the internal online-ridge learner indexed by~\(0\). Eight of the
\(24\) externals are \emph{unavailable} on disjoint intervals, and eight
\emph{further} externals arrive mid-trajectory as births, so the action set
\(\mathcal{A}_t = \{0\} \cup \mathcal{E}_t\) changes repeatedly across the
horizon. This exercises both registry pruning and the birth/re-entry prior
(Section~\ref{sec:registry}). Table~\ref{tab:experts_delhi_appendix} summarises
the availability windows.

\begin{table}[ht]
\centering
\small
\setlength{\tabcolsep}{4pt}
\caption{Delhi action-set dynamics. Eight externals become unavailable on the
listed intervals; eight further externals arrive on the listed intervals
(births). All other externals are available for the full horizon.}
\label{tab:experts_delhi_appendix}
\begin{tabular}{@{}lll@{}}
\toprule
Type & Expert index & Active / unavailable intervals \\
\midrule
Unavailable & 2  & \([250,500]\) \\
Unavailable & 5  & \([600,900]\) \\
Unavailable & 8  & \([1000,1300]\) \\
Unavailable & 11 & \([300,700]\) \\
Unavailable & 14 & \([800,1100]\) \\
Unavailable & 17 & \([1200,1500]\) \\
Unavailable & 20 & \([450,750]\) \\
Unavailable & 23 & \([1050,1400]\) \\
\midrule
Birth & 12 & \([250,1567]\) \\
Birth & 13 & \([400,1567]\) \\
Birth & 15 & \([550,950]\cup[1100,1567]\) \\
Birth & 16 & \([700,1250]\) \\
Birth & 18 & \([850,1567]\) \\
Birth & 19 & \([1000,1567]\) \\
Birth & 21 & \([1150,1567]\) \\
Birth & 22 & \([1300,1567]\) \\
\bottomrule
\end{tabular}
\end{table}

\paragraph{Model configuration.}
L2D-SLDS uses \(M=3\) regimes, shared dimension \(d_g=3\), and idiosyncratic
dimension \(d_\alpha=10\). The query score (\ref{eq:query_score}) uses combined
\(\mathrm{IG}_g+\mathrm{IG}_z\) exploration, \(\mathrm{IG}\)-scale \(=5.0\),
\(\lambda_L=1.0\), external fee \(\beta_k=0.22\), and the full teacher-aware
variant with \(20\) Monte-Carlo mode samples and teacher weight \(0.75\).
Process noise is scaled by \(\mathrm{Q}_g{\times}0.15\) and
\(\mathrm{Q}_u{\times}2.5\); idiosyncratic dynamics use
\(A_u{\times}0.90\). The internal learner uses ridge regularisation
\(\lambda=1.0\) and a forgetting factor \(\gamma=0.995\). Bandit baselines are evaluated on the same
\(\{0\}\cup\mathcal{E}_t\) action set with identical asymmetric feedback.

\begin{table}[ht]
\centering
\small
\setlength{\tabcolsep}{4pt}
\caption{Daily Delhi results (Appendix~\ref{sec:exp_delhi_appendix}).
Mean \(\pm\) std.\ error over five seeds. Methods without error bars are
deterministic on the fixed Delhi split. Runtime is online
\(\mathrm{ms}/\)step, excluding offline initialisation.}
\label{tab:exp_avg_costs_delhi_appendix}
\begin{tabular}{@{}lccc@{}}
\toprule
Method & Cost & Query Rate & Runtime (ms/step) \\
\midrule
\textbf{L2D-SLDS} & \(\mathbf{2.528 \pm 0.012}\) & \(0.0170 \pm 0.0002\) & \(13.13 \pm 0.26\) \\
L2D-SLDS w/o \(\mathbf{g}_t\) & \(2.648 \pm 0.058\) & \(0.0156 \pm 0.0003\) & \(12.36 \pm 0.23\) \\
Independent Predictor & \(2.726\) & \(0.000\) & \(1.76 \pm 0.05\) \\
Predictor from L2D-SLDS & \(2.683 \pm 0.002\) & \(0.000\) & -- \\
\midrule
SharedLinUCB+0 & \(2.542 \pm 0.008\) & \(0.348\pm 0.012\) & \(89.58 \pm 0.21\) \\
LinTS+0 & \(2.714 \pm 0.029\) & \(0.659 \pm 0.029\) & \(27.81 \pm 0.09\) \\
EnsembleSampling+0 & \(2.759 \pm 0.062\) & \(0.569 \pm 0.045\) & \(27.58 \pm 0.15\) \\
NeuralUCB+0 & \(2.914 \pm 0.042\) & \(0.909 \pm 0.027\) & \(5.09 \pm 0.09\) \\
D-LinUCB+0 & \(3.020 \pm 0.035\) & \(0.391\pm 0.011\) & \(151.91 \pm 0.25\) \\
CUSUM-LinUCB+0 & \(2.936 \pm 0.026\) & \(0.688 \pm 0.008\) & \(5.26 \pm 0.09\) \\
GLR-LinUCB+0 & \(3.089 \pm 0.053\) & \(0.897 \pm 0.007\) & \(5.74 \pm 0.09\) \\
\midrule
Oracle-over-\(\mathcal{A}\) & \(0.702\) & \(0.690\) & \(1.76 \pm 0.04\) \\
\bottomrule
\end{tabular}
\end{table}

\paragraph{Headline result.}
The Delhi internal learner is genuinely weak (Independent Predictor \(2.726\))
because a single ridge model cannot simultaneously fit Delhi's monsoon,
post-monsoon, and winter regimes, and the oracle routing to the best available
external reaches \(0.702\). There is therefore substantial room for selective
deferral to help. L2D-SLDS reaches \(2.528 \pm 0.012\), improving over
non-deferring by \(\Delta=0.198\) while querying only \(1.70\%\) of rounds.
It also improves on the strongest bandit baseline, SharedLinUCB+0 (\(2.542\)),
which achieves its result with a \(34.8\%\) query rate -- a
\(\sim 20{\times}\) larger deferral budget. All other bandits perform at or
below the Independent Predictor despite querying \(35{-}91\%\) of rounds
(Table~\ref{tab:exp_avg_costs_delhi_appendix}): when the action set is large
and time-varying, unguided exploration pays a heavy per-round cost that
overwhelms any routing benefit.

\subsection{Delhi Ablations}
\label{sec:targeted_component_ablations}

All ablations in this subsection are on Delhi only, at the same operating point
as the main Delhi result. Table~\ref{tab:targeted_component_ablations} isolates
the main ingredients of the routing rule and of the teacher-aware learner
update. We report mean \(\pm\) standard deviation over five seeds.

\begin{table}[ht]
\centering
\small
\setlength{\tabcolsep}{3.5pt}
\caption{\textbf{Delhi-only component ablations.} Mean \(\pm\) standard
deviation over five seeds at the selected Delhi operating point. Each row
removes one mechanism while keeping the rest of the configuration fixed; larger
ablated values are worse.}
\label{tab:targeted_component_ablations}
\resizebox{\linewidth}{!}{%
\begin{tabular}{@{}llccc@{}}
\toprule
Mechanism tested & Delhi metric & Full & Ablated & Degradation \\
\midrule
Shared factor \(\mathbf{g}_t\) & Routing cost & \(2.528 \pm 0.026\) & \(2.681 \pm 0.154\) & \(+0.153\) \\
IDS-style query score vs.\ Thompson sampling & Routing cost & \(2.528 \pm 0.026\) & \(3.487 \pm 0.304\) & \(+0.958\) \\
Information gain \(\lambda_{\mathrm{IG}}\mathrm{IG}_t(k)\) & Routing cost & \(2.528 \pm 0.026\) & \(2.595 \pm 0.081\) & \(+0.067\) \\
Learner-aware score \(\lambda_{\mathrm{L}}\widehat{\mathrm{LI}}_t(k)\) & Routing cost & \(2.528 \pm 0.026\) & \(2.595 \pm 0.074\) & \(+0.066\) \\
Teacher-aware learner update & Action-0 cost & \(2.683 \pm 0.003\) & \(2.726 \pm 0.000\) & \(+0.043\) \\
\bottomrule
\end{tabular}}
\end{table}

These Delhi-only ablations support the intended semantics of the formulation.
Removing \(\mathbf{g}_t\) damages cross-expert transfer even when the
no-\(\mathbf{g}_t\) variant is given the same latent dimensionality budget
(Table~\ref{tab:delhi_matched_g_ablation}). Replacing the IDS-style query score
by Thompson sampling from the same SLDS predictive posterior causes broad
over-querying on Delhi (\(85.2\%\) of rounds) and substantially higher routing
cost, showing that the gain is not just from having a Bayesian residual model
but from how L2D-SLDS converts it into a selective query rule. Setting
\(\lambda_{\mathrm{IG}}=0\) or \(\lambda_{\mathrm{L}}=0\) worsens Delhi routing
cost, so both the uncertainty-reduction and learner-improvement terms matter at
this operating point. Finally, setting the teacher update weight to zero
returns the internal learner to the Independent Predictor level on Delhi,
whereas the full L2D-SLDS trajectory yields a better action-0 predictor.

\paragraph{IDS score versus posterior Thompson sampling.}
To isolate the score from the SLDS posterior itself, we run \emph{L2D-SLDS-TS}:
the same factorized SLDS, shared-factor state, and teacher-aware learner update,
but the action is chosen by a Thompson draw from the predictive cost posterior
instead of the IDS-style learner-aware query score. L2D-SLDS-TS reaches
\(3.487\pm0.136\) and queries \(85.2\%\) of rounds, far above the full
L2D-SLDS cost \(2.528\pm0.012\) and \(1.70\%\) query rate. Thus the Delhi gain
requires selective scoring of when a paid external prediction is worth using;
posterior sampling alone over-explores the large, time-varying expert set.

\paragraph{Role of the shared factor.}
Removing \(\mathbf{g}_t\) degrades cost to \(2.648 \pm 0.058\) while the query
rate remains comparable (\(1.56\%\) vs.\ \(1.70\%\)), so the \(\Delta=0.120\)
cost gap reflects \emph{which} rounds are deferred and the cross-expert
transfer of Proposition~\ref{prop:cross_update}: on Delhi the loadings
\(\mathbf{B}_k\) couple the \(24\) experts through the \(3\)-dimensional
shared factor, and observing \(e_{t,0}\) updates the posterior for every
unqueried expert. Removing this channel eliminates that transfer and the
router loses the ability to anticipate which of the \(\sim 15\)-active
experts is reliable in the current regime.

\begin{table}[ht]
\centering
\small
\setlength{\tabcolsep}{5pt}
\caption{\textbf{Delhi shared-factor ablation.}
Mean \(\pm\) std.\ error over five seeds. The matched-budget variant removes
\(\mathbf{g}_t\) but increases \(d_\alpha\) from \(10\) to \(13\), matching the
full model's latent dimensionality \(d_g+d_\alpha=13\).}
\label{tab:delhi_matched_g_ablation}
\begin{tabular}{@{}lccc@{}}
\toprule
Variant & \(d_g\) & \(d_\alpha\) & Cost / Query rate \\
\midrule
\textbf{L2D-SLDS} & \(3\) & \(10\) & \(\mathbf{2.528 \pm 0.012}\) / \(0.0170 \pm 0.0002\) \\
w/o \(\mathbf{g}_t\) & \(0\) & \(10\) & \(2.648 \pm 0.058\) / \(0.0156 \pm 0.0003\) \\
w/o \(\mathbf{g}_t\), matched latent budget & \(0\) & \(13\) & \(2.681 \pm 0.069\) / \(0.0152 \pm 0.0005\) \\
\bottomrule
\end{tabular}
\end{table}

The matched-budget ablation keeps the no-\(\mathbf{g}_t\) query rate near the
full model but remains \(0.153\) cost units worse. Thus the Delhi advantage is
not explained by giving the full model more latent coordinates; the useful
capacity is the shared factor itself, which lets one observed residual update
beliefs about many unqueried experts through the common state.

\paragraph{Teacher-aware learner update.}
Delhi is the first benchmark on which the Predictor from L2D-SLDS (\(2.683\))
\emph{improves} over the Independent Predictor (\(2.726\)) by a clear margin
(\(\Delta = 0.043\)). Together with Melbourne and Jena, this shows that the
teacher-aware ridge loss consistently improves the internal predictor along the
L2D-SLDS trajectory; on Delhi the effect is especially visible, so on top of the
routing gain the learner itself becomes substantially better than the
non-deferring baseline.

\paragraph{Runtime.}
L2D-SLDS runs at \(13.1\) ms/step on Delhi because the registry holds up to
\(24\) active mode-conditioned filters. This is still an order of magnitude
below SharedLinUCB+0 (\(89.6\) ms/step) and D-LinUCB+0 (\(151.9\) ms/step),
which rebuild full per-arm design matrices at every step, and comparable to
NeuralUCB+0 and the change-detection variants on the same hardware.

\paragraph{Comparison figures.}
Figure~\ref{fig:delhi_trajectories} shows rolling cost and cumulative regret
over the horizon. L2D-SLDS tracks the Oracle most tightly and attains the
smallest cumulative regret of any method; SharedLinUCB+0 is the closest
competitor but requires a \(\sim 20{\times}\) larger query budget; the
non-stationary baselines drift up as births and prunings disturb their
per-arm statistics.
Figure~\ref{fig:delhi_cost_bars} summarises the per-method cost with the
Independent Predictor (dashed) and Oracle-over-\(\mathcal{A}\) (dotted) as
horizontal references. L2D-SLDS is now the lowest-cost method on the plot.

\begin{figure}[ht]
\centering
\begin{subfigure}{0.48\linewidth}
  \centering
  \includegraphics[width=\linewidth]{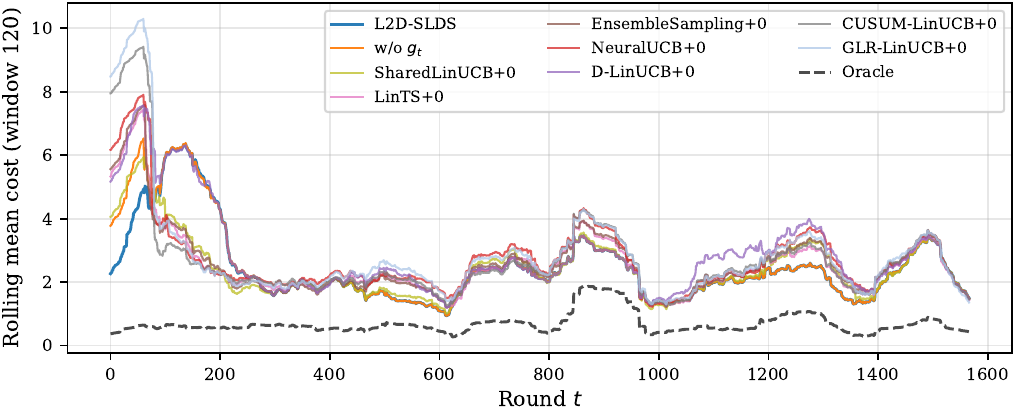}
  \caption{Rolling mean cost (window 120).}
  \label{fig:delhi_rolling}
\end{subfigure}\hfill
\begin{subfigure}{0.48\linewidth}
  \centering
  \includegraphics[width=\linewidth]{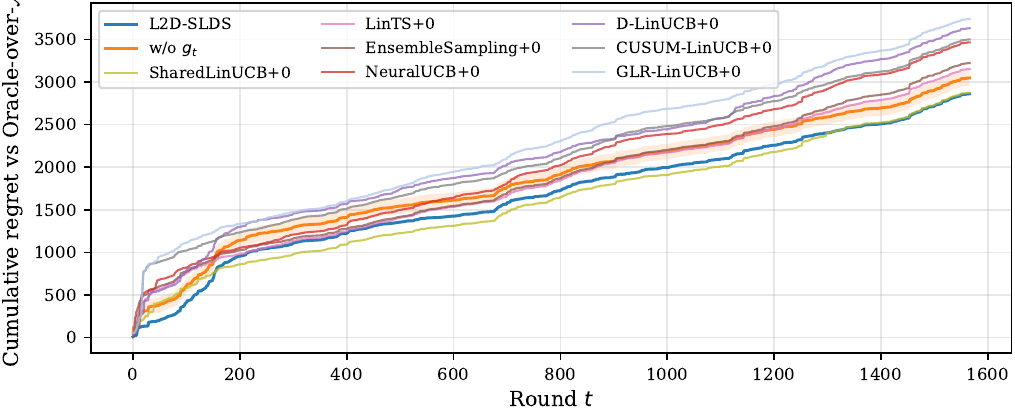}
  \caption{Cumulative regret vs Oracle-over-\(\mathcal{A}\).}
  \label{fig:delhi_cumregret}
\end{subfigure}
\caption{\textbf{Delhi --- cost and regret trajectories.} L2D-SLDS tracks the
oracle most closely across the horizon and accumulates the smallest cumulative
regret of any method, including SharedLinUCB+0, while deferring on only
\(1.70\%\) of rounds.}
\label{fig:delhi_trajectories}
\end{figure}

\begin{figure}[ht]
\centering
\includegraphics[width=0.85\linewidth]{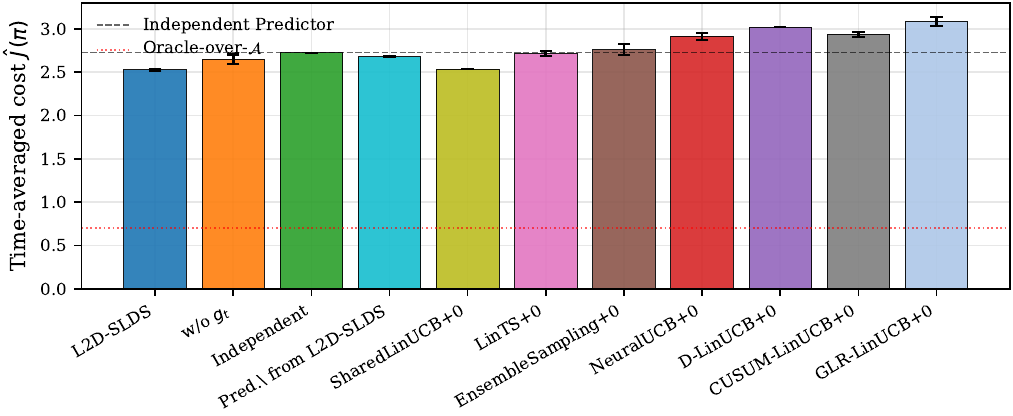}
\caption{\textbf{Delhi --- time-averaged cost per method.} Mean \(\pm\) std.\ error
over five seeds. Dashed line: Independent Predictor; dotted line:
Oracle-over-\(\mathcal{A}\). L2D-SLDS attains the lowest cost overall while
querying on \(1.70\%\) of rounds -- roughly a \(20{\times}\) smaller deferral
budget than SharedLinUCB+0, the strongest bandit competitor.}
\label{fig:delhi_cost_bars}
\end{figure}

%% file: checklist.tex
\section*{NeurIPS Paper Checklist}

\begin{enumerate}
\item {\bf Claims}
    \item[] Question: Do the main claims made in the abstract and introduction accurately reflect the paper's contributions and scope?
    \item[] Answer: \answerYes{}
    \item[] Justification: The abstract and Introduction state claims that match the paper: the one-stage online L2D formulation, the L2D-SLDS model, the filtering/routing machinery, the calibrated regret decomposition with its assumptions, and the experimental comparisons.
    \item[] Guidelines:
    \begin{itemize}
        \item The answer \answerNA{} means that the abstract and introduction do not include the claims made in the paper.
        \item The abstract and/or introduction should clearly state the claims made, including the contributions made in the paper and important assumptions and limitations. A \answerNo{} or \answerNA{} answer to this question will not be perceived well by the reviewers. 
        \item The claims made should match theoretical and experimental results, and reflect how much the results can be expected to generalize to other settings. 
        \item It is fine to include aspirational goals as motivation as long as it is clear that these goals are not attained by the paper. 
    \end{itemize}

\item {\bf Limitations}
    \item[] Question: Does the paper discuss the limitations of the work performed by the authors?
    \item[] Answer: \answerYes{}
    \item[] Justification: The Conclusion contains a dedicated ``Scope and Limitations'' section describing the setting covered by the theory and experiments, including the calibrated oracle decomposition and the benchmark regime targeted by L2D-SLDS.
    \item[] Guidelines:
    \begin{itemize}
        \item The answer \answerNA{} means that the paper has no limitation while the answer \answerNo{} means that the paper has limitations, but those are not discussed in the paper. 
        \item The authors are encouraged to create a separate ``Limitations'' section in their paper.
        \item The paper should point out any strong assumptions and how robust the results are to violations of these assumptions (e.g., independence assumptions, noiseless settings, model well-specification, asymptotic approximations only holding locally). The authors should reflect on how these assumptions might be violated in practice and what the implications would be.
        \item The authors should reflect on the scope of the claims made, e.g., if the approach was only tested on a few datasets or with a few runs. In general, empirical results often depend on implicit assumptions, which should be articulated.
        \item The authors should reflect on the factors that influence the performance of the approach. For example, a facial recognition algorithm may perform poorly when image resolution is low or images are taken in low lighting. Or a speech-to-text system might not be used reliably to provide closed captions for online lectures because it fails to handle technical jargon.
        \item The authors should discuss the computational efficiency of the proposed algorithms and how they scale with dataset size.
        \item If applicable, the authors should discuss possible limitations of their approach to address problems of privacy and fairness.
        \item While the authors might fear that complete honesty about limitations might be used by reviewers as grounds for rejection, a worse outcome might be that reviewers discover limitations that aren't acknowledged in the paper. The authors should use their best judgment and recognize that individual actions in favor of transparency play an important role in developing norms that preserve the integrity of the community. Reviewers will be specifically instructed to not penalize honesty concerning limitations.
    \end{itemize}

\item {\bf Theory assumptions and proofs}
    \item[] Question: For each theoretical result, does the paper provide the full set of assumptions and a complete (and correct) proof?
    \item[] Answer: \answerYes{}
    \item[] Justification: For the theoretical results stated in the paper, the assumptions are given in the main text and formal arguments are provided in the appendix. This includes the filtering model, factorization assumptions, information-gain derivations, structural propositions, and the calibrated regret decomposition with its certified internal-learner corollary.
    \item[] Guidelines:
    \begin{itemize}
        \item The answer \answerNA{} means that the paper does not include theoretical results. 
        \item All the theorems, formulas, and proofs in the paper should be numbered and cross-referenced.
        \item All assumptions should be clearly stated or referenced in the statement of any theorems.
        \item The proofs can either appear in the main paper or the supplemental material, but if they appear in the supplemental material, the authors are encouraged to provide a short proof sketch to provide intuition. 
        \item Inversely, any informal proof provided in the core of the paper should be complemented by formal proofs provided in appendix or supplemental material.
        \item Theorems and Lemmas that the proof relies upon should be properly referenced. 
    \end{itemize}

    \item {\bf Experimental result reproducibility}
    \item[] Question: Does the paper fully disclose all the information needed to reproduce the main experimental results of the paper to the extent that it affects the main claims and/or conclusions of the paper (regardless of whether the code and data are provided or not)?
    \item[] Answer: \answerYes{}
    \item[] Justification: The paper and appendix specify the datasets, metrics, action sets, availability schedules, model dimensions, baseline adaptations, main hyperparameters, seeds, and runtime reporting conventions needed to reproduce the reported comparisons. The end-to-end router/filtering pseudocode is also included in the appendix.
    \item[] Guidelines:
    \begin{itemize}
        \item The answer \answerNA{} means that the paper does not include experiments.
        \item If the paper includes experiments, a \answerNo{} answer to this question will not be perceived well by the reviewers: Making the paper reproducible is important, regardless of whether the code and data are provided or not.
        \item If the contribution is a dataset and\slash or model, the authors should describe the steps taken to make their results reproducible or verifiable. 
        \item Depending on the contribution, reproducibility can be accomplished in various ways. For example, if the contribution is a novel architecture, describing the architecture fully might suffice, or if the contribution is a specific model and empirical evaluation, it may be necessary to either make it possible for others to replicate the model with the same dataset, or provide access to the model. In general. releasing code and data is often one good way to accomplish this, but reproducibility can also be provided via detailed instructions for how to replicate the results, access to a hosted model (e.g., in the case of a large language model), releasing of a model checkpoint, or other means that are appropriate to the research performed.
        \item While NeurIPS does not require releasing code, the conference does require all submissions to provide some reasonable avenue for reproducibility, which may depend on the nature of the contribution. For example
        \begin{enumerate}
            \item If the contribution is primarily a new algorithm, the paper should make it clear how to reproduce that algorithm.
            \item If the contribution is primarily a new model architecture, the paper should describe the architecture clearly and fully.
            \item If the contribution is a new model (e.g., a large language model), then there should either be a way to access this model for reproducing the results or a way to reproduce the model (e.g., with an open-source dataset or instructions for how to construct the dataset).
            \item We recognize that reproducibility may be tricky in some cases, in which case authors are welcome to describe the particular way they provide for reproducibility. In the case of closed-source models, it may be that access to the model is limited in some way (e.g., to registered users), but it should be possible for other researchers to have some path to reproducing or verifying the results.
        \end{enumerate}
    \end{itemize}

\item {\bf Open access to data and code}
    \item[] Question: Does the paper provide open access to the data and code, with sufficient instructions to faithfully reproduce the main experimental results, as described in supplemental material?
    \item[] Answer: \answerYes{}
    \item[] Justification: The real-data benchmarks are public, and the appendix specifies the datasets, action sets, availability schedules, hyperparameters, seeds, and runtime conventions needed to reproduce the main experiments.
    \item[] Guidelines:
    \begin{itemize}
        \item The answer \answerNA{} means that paper does not include experiments requiring code.
        \item Please see the NeurIPS code and data submission guidelines (\url{https://neurips.cc/public/guides/CodeSubmissionPolicy}) for more details.
        \item While we encourage the release of code and data, we understand that this might not be possible, so \answerNo{} is an acceptable answer. Papers cannot be rejected simply for not including code, unless this is central to the contribution (e.g., for a new open-source benchmark).
        \item The instructions should contain the exact command and environment needed to run to reproduce the results. See the NeurIPS code and data submission guidelines (\url{https://neurips.cc/public/guides/CodeSubmissionPolicy}) for more details.
        \item The authors should provide instructions on data access and preparation, including how to access the raw data, preprocessed data, intermediate data, and generated data, etc.
        \item The authors should provide scripts to reproduce all experimental results for the new proposed method and baselines. If only a subset of experiments are reproducible, they should state which ones are omitted from the script and why.
        \item At submission time, to preserve anonymity, the authors should release anonymized versions (if applicable).
        \item Providing as much information as possible in supplemental material (appended to the paper) is recommended, but including URLs to data and code is permitted.
    \end{itemize}

\item {\bf Experimental setting/details}
    \item[] Question: Does the paper specify all the training and test details (e.g., data splits, hyperparameters, how they were chosen, type of optimizer) necessary to understand the results?
    \item[] Answer: \answerYes{}
    \item[] Justification: The Experiments section and appendix describe the benchmark construction, horizon lengths, contexts, expert sets, availability windows, baseline modifications, selected hyperparameters, seed counts, and method-specific settings such as NeuralUCB hidden size/learning rate and D-LinUCB discount factors.
    \item[] Guidelines:
    \begin{itemize}
        \item The answer \answerNA{} means that the paper does not include experiments.
        \item The experimental setting should be presented in the core of the paper to a level of detail that is necessary to appreciate the results and make sense of them.
        \item The full details can be provided either with the code, in appendix, or as supplemental material.
    \end{itemize}

\item {\bf Experiment statistical significance}
    \item[] Question: Does the paper report error bars suitably and correctly defined or other appropriate information about the statistical significance of the experiments?
    \item[] Answer: \answerYes{}
    \item[] Justification: The main tables and appendix report mean \(\pm\) standard error (or standard deviation where stated) over repeated seeds, and deterministic methods are explicitly marked as carrying no error bars. The source of variation is the full rerun over random seeds, and one comparison in the main text additionally reports a \(p\)-value.
    \item[] Guidelines:
    \begin{itemize}
        \item The answer \answerNA{} means that the paper does not include experiments.
        \item The authors should answer \answerYes{} if the results are accompanied by error bars, confidence intervals, or statistical significance tests, at least for the experiments that support the main claims of the paper.
        \item The factors of variability that the error bars are capturing should be clearly stated (for example, train/test split, initialization, random drawing of some parameter, or overall run with given experimental conditions).
        \item The method for calculating the error bars should be explained (closed form formula, call to a library function, bootstrap, etc.)
        \item The assumptions made should be given (e.g., Normally distributed errors).
        \item It should be clear whether the error bar is the standard deviation or the standard error of the mean.
        \item It is OK to report 1-sigma error bars, but one should state it. The authors should preferably report a 2-sigma error bar than state that they have a 96\% CI, if the hypothesis of Normality of errors is not verified.
        \item For asymmetric distributions, the authors should be careful not to show in tables or figures symmetric error bars that would yield results that are out of range (e.g., negative error rates).
        \item If error bars are reported in tables or plots, the authors should explain in the text how they were calculated and reference the corresponding figures or tables in the text.
    \end{itemize}

\item {\bf Experiments compute resources}
    \item[] Question: For each experiment, does the paper provide sufficient information on the computer resources (type of compute workers, memory, time of execution) needed to reproduce the experiments?
    \item[] Answer: \answerYes{}
    \item[] Justification: The experimental appendix now includes a dedicated reproducibility/compute paragraph stating that experiments were run on a single NVIDIA A100 GPU with \(40\) GB VRAM, and the paper reports online runtime in \(\mathrm{ms}/\)step for all benchmarks.
    \item[] Guidelines:
    \begin{itemize}
        \item The answer \answerNA{} means that the paper does not include experiments.
        \item The paper should indicate the type of compute workers CPU or GPU, internal cluster, or cloud provider, including relevant memory and storage.
        \item The paper should provide the amount of compute required for each of the individual experimental runs as well as estimate the total compute. 
        \item The paper should disclose whether the full research project required more compute than the experiments reported in the paper (e.g., preliminary or failed experiments that didn't make it into the paper). 
    \end{itemize}
    
\item {\bf Code of ethics}
    \item[] Question: Does the research conducted in the paper conform, in every respect, with the NeurIPS Code of Ethics \url{https://neurips.cc/public/EthicsGuidelines}?
    \item[] Answer: \answerYes{}
    \item[] Justification: This is a methodological study on public benchmark data and simulated data; it does not involve human subjects, crowdsourcing, or sensitive personal data collection. We are not aware of any aspect of the work that requires deviation from the NeurIPS Code of Ethics.
    \item[] Guidelines:
    \begin{itemize}
        \item The answer \answerNA{} means that the authors have not reviewed the NeurIPS Code of Ethics.
        \item If the authors answer \answerNo, they should explain the special circumstances that require a deviation from the Code of Ethics.
        \item The authors should make sure to preserve anonymity (e.g., if there is a special consideration due to laws or regulations in their jurisdiction).
    \end{itemize}

\item {\bf Broader impacts}
    \item[] Question: Does the paper discuss both potential positive societal impacts and negative societal impacts of the work performed?
    \item[] Answer: \answerYes{}
    \item[] Justification: The Impact Statement discusses both positive impacts, such as more adaptive and cost-aware use of external expertise, and negative risks, such as over-automation, model misspecification, and subgroup-dependent expert quality in high-stakes deployments.
    \item[] Guidelines:
    \begin{itemize}
        \item The answer \answerNA{} means that there is no societal impact of the work performed.
        \item If the authors answer \answerNA{} or \answerNo, they should explain why their work has no societal impact or why the paper does not address societal impact.
        \item Examples of negative societal impacts include potential malicious or unintended uses (e.g., disinformation, generating fake profiles, surveillance), fairness considerations (e.g., deployment of technologies that could make decisions that unfairly impact specific groups), privacy considerations, and security considerations.
        \item The conference expects that many papers will be foundational research and not tied to particular applications, let alone deployments. However, if there is a direct path to any negative applications, the authors should point it out. For example, it is legitimate to point out that an improvement in the quality of generative models could be used to generate Deepfakes for disinformation. On the other hand, it is not needed to point out that a generic algorithm for optimizing neural networks could enable people to train models that generate Deepfakes faster.
        \item The authors should consider possible harms that could arise when the technology is being used as intended and functioning correctly, harms that could arise when the technology is being used as intended but gives incorrect results, and harms following from (intentional or unintentional) misuse of the technology.
        \item If there are negative societal impacts, the authors could also discuss possible mitigation strategies (e.g., gated release of models, providing defenses in addition to attacks, mechanisms for monitoring misuse, mechanisms to monitor how a system learns from feedback over time, improving the efficiency and accessibility of ML).
    \end{itemize}
    
\item {\bf Safeguards}
    \item[] Question: Does the paper describe safeguards that have been put in place for responsible release of data or models that have a high risk for misuse (e.g., pre-trained language models, image generators, or scraped datasets)?
    \item[] Answer: \answerNA{}
    \item[] Justification: The paper does not release high-risk generative models, scraped datasets, or other assets of the type contemplated by this question.
    \item[] Guidelines:
    \begin{itemize}
        \item The answer \answerNA{} means that the paper poses no such risks.
        \item Released models that have a high risk for misuse or dual-use should be released with necessary safeguards to allow for controlled use of the model, for example by requiring that users adhere to usage guidelines or restrictions to access the model or implementing safety filters. 
        \item Datasets that have been scraped from the Internet could pose safety risks. The authors should describe how they avoided releasing unsafe images.
        \item We recognize that providing effective safeguards is challenging, and many papers do not require this, but we encourage authors to take this into account and make a best faith effort.
    \end{itemize}

\item {\bf Licenses for existing assets}
    \item[] Question: Are the creators or original owners of assets (e.g., code, data, models), used in the paper, properly credited and are the license and terms of use explicitly mentioned and properly respected?
    \item[] Answer: \answerYes{}
    \item[] Justification: The paper credits the original data sources and baseline methods, and the experimental appendix now explicitly states the public/open-source access path and license information used for the real-data benchmarks, including GPL-3 for the accessed \texttt{tsdl} package and CC0 for the Delhi Kaggle release.
    \item[] Guidelines:
    \begin{itemize}
        \item The answer \answerNA{} means that the paper does not use existing assets.
        \item The authors should cite the original paper that produced the code package or dataset.
        \item The authors should state which version of the asset is used and, if possible, include a URL.
        \item The name of the license (e.g., CC-BY 4.0) should be included for each asset.
        \item For scraped data from a particular source (e.g., website), the copyright and terms of service of that source should be provided.
        \item If assets are released, the license, copyright information, and terms of use in the package should be provided. For popular datasets, \url{paperswithcode.com/datasets} has curated licenses for some datasets. Their licensing guide can help determine the license of a dataset.
        \item For existing datasets that are re-packaged, both the original license and the license of the derived asset (if it has changed) should be provided.
        \item If this information is not available online, the authors are encouraged to reach out to the asset's creators.
    \end{itemize}

\item {\bf New assets}
    \item[] Question: Are new assets introduced in the paper well documented and is the documentation provided alongside the assets?
    \item[] Answer: \answerNA{}
    \item[] Justification: The current submission does not release a new dataset, model checkpoint, or documented software package as a paper asset.
    \item[] Guidelines:
    \begin{itemize}
        \item The answer \answerNA{} means that the paper does not release new assets.
        \item Researchers should communicate the details of the dataset\slash code\slash model as part of their submissions via structured templates. This includes details about training, license, limitations, etc. 
        \item The paper should discuss whether and how consent was obtained from people whose asset is used.
        \item At submission time, remember to anonymize your assets (if applicable). You can either create an anonymized URL or include an anonymized zip file.
    \end{itemize}

\item {\bf Crowdsourcing and research with human subjects}
    \item[] Question: For crowdsourcing experiments and research with human subjects, does the paper include the full text of instructions given to participants and screenshots, if applicable, as well as details about compensation (if any)? 
    \item[] Answer: \answerNA{}
    \item[] Justification: The work does not involve crowdsourcing or research with human subjects.
    \item[] Guidelines:
    \begin{itemize}
        \item The answer \answerNA{} means that the paper does not involve crowdsourcing nor research with human subjects.
        \item Including this information in the supplemental material is fine, but if the main contribution of the paper involves human subjects, then as much detail as possible should be included in the main paper. 
        \item According to the NeurIPS Code of Ethics, workers involved in data collection, curation, or other labor should be paid at least the minimum wage in the country of the data collector. 
    \end{itemize}

\item {\bf Institutional review board (IRB) approvals or equivalent for research with human subjects}
    \item[] Question: Does the paper describe potential risks incurred by study participants, whether such risks were disclosed to the subjects, and whether Institutional Review Board (IRB) approvals (or an equivalent approval/review based on the requirements of your country or institution) were obtained?
    \item[] Answer: \answerNA{}
    \item[] Justification: The paper does not involve crowdsourcing or human-subject research.
    \item[] Guidelines:
    \begin{itemize}
        \item The answer \answerNA{} means that the paper does not involve crowdsourcing nor research with human subjects.
        \item Depending on the country in which research is conducted, IRB approval (or equivalent) may be required for any human subjects research. If you obtained IRB approval, you should clearly state this in the paper. 
        \item We recognize that the procedures for this may vary significantly between institutions and locations, and we expect authors to adhere to the NeurIPS Code of Ethics and the guidelines for their institution. 
        \item For initial submissions, do not include any information that would break anonymity (if applicable), such as the institution conducting the review.
    \end{itemize}

\item {\bf Declaration of LLM usage}
    \item[] Question: Does the paper describe the usage of LLMs if it is an important, original, or non-standard component of the core methods in this research? Note that if the LLM is used only for writing, editing, or formatting purposes and does \emph{not} impact the core methodology, scientific rigor, or originality of the research, declaration is not required.
    %this research? 
    \item[] Answer: \answerNA{}
    \item[] Justification: LLMs are not part of the proposed method, theoretical development, or experiments.
    \item[] Guidelines:
    \begin{itemize}
        \item The answer \answerNA{} means that the core method development in this research does not involve LLMs as any important, original, or non-standard components.
        \item Please refer to our LLM policy in the NeurIPS handbook for what should or should not be described.
    \end{itemize}

\end{enumerate}